\crefname{equation}{}{}
\newcommand{\Zc}{\mathcal{Z}}
\newcommand{\onem}{\bm{E}}
\newcommand{\onev}{\bm{1}}
\newcommand{\pp}{p}
\newcommand{\qqq}{q}
\newcommand{\ipbig}[1]{\big\langle #1 \big\rangle}
\DeclareMathOperator*{\tr}{tr}
\newcommand{\symBM}{\text{PP}}
\newcommand{\bsymBM}{\text{PP}^{\text{bal}} }
\newcommand{\Zcz}[1]{\Zc_{\text{orbit}}{#1}}
\newcommand{\Xc}{\mathcal{X}}
\newcommand{\Xcf}{\Xc_{\text{free}}}
\newcommand{\Xcfg}[1]{\Xc_{\text{free}}^{#1}}
\newcommand{\Xcz}[1]{\Xc_{\text{orbit}}{#1}}
\newcommand{\Nc}{\mathcal{N}}
\newcommand{\nb}{{\underline{\bm{n}}}}
\newcommand{\Ac}{\mathcal{A}}
\newcommand{\proj}{\Pi}
\newcommand{\Sc}{\mathcal{S}}
\newcommand{\mnorm}[1]{|\!|\!|#1|\!|\!|}
\newcommand{\Xh}{\widehat{X}}
\newcommand{\Zh}{\widehat{Z}}
\newcommand{\Mh}{\widehat{M}}
\newcommand{\Uh}{\widehat{U}}
\newcommand{\Lc}{\mathcal{L}}
\newcommand{\Symm}[1]{\mathbb{S}^{#1}}
\newcommand{\subb}[1]{S_{#1}}
\newcommand{\Xtru}{X_0}
\newcommand{\Xt}{\widetilde{X}}
\DeclareMathOperator{\rangeS}{range}
\DeclareMathOperator{\Span}{span}
\DeclareMathOperator{\var}{Var}
\newcommand{\pb}{\bar{p}}
\newcommand{\qb}{\bar{q}}
\newcommand{\dg}[2]{d_{#1}(\subb{#2})}
\newcommand{\mub}{\bar{\mu}}
\DeclareMathOperator{\bern}{Bern}
\newcommand{\projonep}{P_{\onev_m^\perp}}
\newcommand{\dav}[2]{d_{\text{av}}(\subb{#1},\subb{#2})}
\newcommand{\basisv}[2]{e_{#1}^{#2}}
\newcommand{\bv}[1]{e_{#1}}
\newcommand{\Star}{\bm{\Phi}}
\newcommand{\bBM}{\textrm{BM}^{\textrm{bal}}}
\newcommand{\At}{\widetilde{A}}
\DeclareMathOperator{\SDP}{SDP_{sol}}
\newcommand{\Qm}{\Psi}
\newcommand{\Qmt}{\widetilde{\Psi}}
\newcommand{\pt}{\widetilde{p}}
\newcommand{\qt}{\widetilde{q}}
\newcommand{\qs}{q^*}
\newcommand{\qsmax}{q^*_{\max}}
\newcommand{\qsb}{\bar{q}^*}
\newcommand{\qsub}{\widetilde{q}^{\,*}}
\newcommand{\qsbmax}{\bar{q}^{*}_{\max}}
\newcommand{\qsubmax}{\widetilde{q}^{\,*}_{\max}}
\newcommand{\qmax}{q_{\max}}
\newcommand{\mubar}{\phi}
\newcommand{\phib}{\bar{\phi}}
\newcommand{\dualtri}{(\mu,\nu,\Gamma)}
\newcommand{\Stru}{\bm{S}_0}
\newcommand{\pub}{\widetilde{p}}
\newcommand{\qub}{\widetilde{q}}
\newcommand{\Bt}{\widetilde{B}}
\newcommand{\psd}[1]{\Symm{#1}_+}
\newcommand{\SBM}{SBM}
\newcommand{\ChenXu}{Chen \& Xu}
\newcommand{\CaiLi}{Cai \& Li}
\newcommand{\pminus}{p^-}
\newcommand{\qplus}{q^+}
\newcommand{\Qmh}{\widehat{\Qm}}
\newcommand{\bs}{n}
\providecommand{\Mt}{\widetilde{M}}
\DeclareRobustCommand{\textprime}{$'$}
\newcommand{\SDPp}[1]{SDP-#1\textprime}
\newcommand{\avgd}{d}
\numberwithin{equation}{section}
\theoremstyle{plain}
\newtheorem{thm}{Theorem}[section]
\newtheorem{prop}{Proposition}[section]
\newtheorem{cor}{Corollary}[section]
\newtheorem{lem}{Lemma}[section]
\theoremstyle{definition}
\newtheorem{defn}{Definition}[section]
\newtheorem{rem}{Remark}[section]
\title{On semidefinite relaxations for the block model}
\author{Arash A. Amini}
\author{Elizaveta Levina}
\affil{Department of Statistics, UCLA and \\
        Department of Statistics, University of Michigan}
\begin{document}

\maketitle

\begin{abstract}
	The stochastic block model (SBM) is a popular tool for community
	detection in networks, but fitting it by maximum likelihood (MLE)
	involves a computationally infeasible optimization problem.  We propose a new
	semidefinite programming (SDP) solution to the problem of fitting the
	SBM, derived as a relaxation of the MLE.   We put ours and previously proposed SDPs in a unified framework, as relaxations of the MLE over various sub-classes of the SBM, which also reveals a connection to the well-known problem of sparse PCA.      Our main relaxation, which we call SDP-1, is tighter
	than other recently proposed SDP relaxations, and thus previously
	established theoretical guarantees carry over. However, we show that SDP-1 exactly recovers true communities over a wider class of SBMs than those covered by current results. In particular, the assumption of strong assortativity of the SBM, implicit in consistency conditions for previously proposed SDPs, can be relaxed to weak assortativity for our approach, thus significantly broadening the class of SBMs covered by the  consistency results.  
	We also show that strong assortativity is indeed a necessary condition for exact recovery for previously proposed SDP approaches and not an artifact of the proofs.  Our analysis of SDPs is based on primal-dual witness constructions, which provides some insight into the nature of the solutions of various SDPs. In particular, we show how to combine features from SDP-1 and already available SDPs to achieve the most flexibility in terms of both assortativity and block-size constraints, as our relaxation has the tendency to produce communities of similar sizes.    This tendency makes it the ideal
	tool for fitting network histograms,  a method gaining popularity in the graphon estimation literature, as we illustrate on an example of a social networks of dolphins. We also provide empirical evidence that SDPs outperform spectral methods for fitting SBMs with a large number of blocks.

\end{abstract}

\section{Introduction}
Community detection, one of the fundamental problems in network analysis, has attracted a lot of attention in a number of fields, including computer science, statistics, physics, and sociology.   The stochastic block model (\SBM) \cite{Holland83} is a well-established and widely used model for community detection, attractive for its analytical tractability and connections to fundamental properties of random graphs \cite{Aldous81, Bickel&Chen2009, Mossel.et.al.2012}, but fitting it to data is a challenge due to the need to optimize over $K^n$ assignments of $n$ nodes to $K$ communities.  Many fitting methods have been proposed, including profile likelihood~\cite{Bickel&Chen2009},  MCMC~\cite{Snijders&Nowicki1997,Nowicki2001},  variational approaches~\cite{Airoldi2008,Celisseetal2011,Bickel&Choi&etal2012},   belief propagation~\cite{Decelle.et.al.2011}, and pseudo-likelihood~\cite{Amini.et.al.2013}, the latter two being more or less the current state of the art in speed and accuracy.  However, all these methods rely on a good initial value and can be sensitive to starting points.   In contrast, spectral clustering methods do not require an initial value, are fast and have also been popular in community detection \cite{Rohe2011, Chaudhuri&Chung&Tsiatas2012, Lei&Rinaldo2013, Qin&Rohe2013}.  Spectral clustering works reasonably well in dense networks with balanced communities but fails on sparse networks~\cite{Le2015}.  Regularization can help \cite{Chaudhuri&Chung&Tsiatas2012, Amini.et.al.2013,Joseph2013}, but even regularized spectral clustering does not achieve the accuracy of likelihood-based methods when they are given a good initial value \cite{Amini.et.al.2013}. 

Recently, semidefinite programming (SDP) approaches to fitting the \SBM\ have appeared in the literature~\cite{Chen2012,Chen2014,Cai2014}, which rely on a SDP relaxation of the computationally infeasible likelihood optimization problem.   They are attractive because, on one hand, they solve a global optimization problem and require no initial value, and on the other hand, they are still maximizing the likelihood,  and one can therefore hope for better performance than from generic methods like spectral clustering, which do not use the likelihood in any way.  As global optimization methods, they are easier to analyze than iterative methods depending on a starting value.  It also appears that SDP relaxations in themselves have a regularization effect, which makes their solutions more robust to noise and outliers (see  Remark~\ref{rem:outliers}).    One drawback of SDP methods is the higher computational cost of SDP solvers.  However, by formulating the problem as a SDP, we can benefit from continuous advances in solving large scale SDPs, an active area of research in optimization.

In this paper, we propose a new SDP relaxation of the likelihood optimization problem, which is tighter than any of the previously proposed SDP relaxations \cite{Chen2012,Chen2014,Cai2014}.   We also put all these relaxations into a unified framework, by viewing them as versions of the MLE restricted to different parameter spaces, and show their connection to the well-studied problem of sparse PCA.   Empirically, the tighter relaxation gives better results, and we derive a first-order SDP implementation via ADMM which keeps computing costs reasonable.

On the theoretical side, our focus for the most part will be on balanced models, i.e., those with equal community sizes. We obtain sufficient conditions on the parameters of the block model for strong consistency (i.e., exact recovery of communities) of our relaxation, SDP-1. These conditions guarantee success over a wider class of \SBM s than in previous literature. Current conditions for the success of SDP relaxations implicitly impose what we will call strong assortativity, whereas our SDP succeeds for any 
weakly assortative \SBM~(cf.\ Definition~\ref{def:strong:weak:assort}), when the expected degree grows as $\Omega(\log n)$. We also show that the requirement of strong assortativity is necessary for the success of previous SDP relaxations (SDP-2 and SDP-3 in Table~\ref{SDP-table}), and it is not an artifact of proof techniques (Section~\ref{sec:failure:SDP-2}). Our proof of the success of SDP-1 is based on a primal-dual witness construction
which has already been used successfully in the context of sparse recovery problems; see for example~\cite{Wainwright2009,Amini2009}. In the context of SDP relaxations for the \SBM, however, the only instance of this approach that we know of is the recent work of~\cite{Abbe2014}, for the case of the $K=2$ \SBM. Our approach can be viewed as a non-trivial extension of~\cite{Abbe2014}  to the case of general $K$, and a more complex SDP with the doubly nonnegative cone constraint and more equality constraints. As a by-product, we also recover the current results for SDP-2 for the class of strongly assortative \SBM s. Our results suggest that the greater divide for SDP relaxations is not between strongly and weakly assortative \SBM s, but between purely assortative (or dissortative) and mixed models, those with both assortative and dissortative communities. 

SDP-1, in its basic form, tends to partition the network into blocks of similar sizes.  This is sometimes an unwelcome feature in practice, and sometimes a desirable one, since very large and very small communities are generally difficult to interpret.    If this feature is not desirable, SDP-1 can be modified to allow for different block sizes, as discussed in Section~\ref{sec:unbalanced:extension}.    The equal sized blocks are especially suitable for consructing network histograms, a method for graphon estimation proposed by \cite{Olhede2013}.  Viewing the \SBM\ as a nonparametric approximation to a general reasonably smooth mean function of the adjacency matrix (the graphon) is analogous to constructing a histogram to approximate a general smooth density function.  A number of methods for graphon estimation have been proposed recently \cite{Airoldi.et.al.2013,Wolfe2013,Zhang2015}, and the network histogram as a graphon estimator has been proposed in~\cite{Olhede2013}.   
A histogram is appealing because it is controlled by the number of bins (blocks) $K$, which is a single parameter that can be chosen to balance fitting the data with robustness to noise. In this case, it is particularly appropriate to fit blocks of equal or similar sizes, just like in the usual histogram.   
%
We show empirically in Section \ref{sec:numeric} that our SDP relaxation provides the best tool for histogram estimation, as well as  generally cleaner solutions, compared to other less tight SDP relaxations and generic methods like spectral clustering.

The rest of the paper is organized as follows.  In Section~\ref{sec:SBM:def}, we introduce the \SBM\ and its submodels. We derive a general blueprint for MLE relaxations in Section~\ref{sec:MLE:relax}, introduce our proposed SDP and compare with the ones existing in the literature, including a brief discussion of the connection with sparse PCA. Section~\ref{sec:consist:results} presents our consistency results for balanced block models, along with an overview of the proofs. A result showing the failure of SDP-2 in the absence of strong assortativity (which is not needed for our relaxation)  appears in Section~\ref{sec:failure:SDP-2}.  Extension to the case of unbalanced communities is dicussed in Section~\ref{sec:unbalanced:extension}.  Section~\ref{sec:graphon} presents application of SDP-1 to graphon estimation via fitting network histograms. Section~\ref{sec:numeric} compares several SDPs numerically, and we conclude with a discussion in Section~\ref{sec:discuss}. Technical details of the proofs and a brief discussion of a first-order method for implementing SDP-1 can be found in Appendices.


\medskip
\textbf{Notation.} We use $\otimes$ to denote Kronecker product of matrices, and $\circ$ to denote Schur (element-wise) product of matrices.  $\Symm{n}$ denotes the set of symmetric $n\times n$ matrices, and $\ip{A,X} := \tr(AX)$ the corresponding inner product.  $(\psd{n},\succeq)$  is the cone of positive semidefinite (PSD) $n\times n$ matrices, and its natural partial order, namely, $A \succeq B$ iff $A-B \in \psd{n}$.  $\onem_{n,m}$ is the $n \times m$ matrix of all ones and $\onem_n$ is the $n \times n$ matrix of all ones.   A $n \times 1$ vector of all ones is denoted $\onev_n$.    $\|u\| = \|u\|_2$ is the $\ell_2$ norm of vector $u$, and $\mnorm{A} = \mnorm{A}_2$ is the $\ell_2  \rightarrow \ell_2$ operator norm of matrix $A$.
 $\ker(A)$ and $\rangeS(A)$ denote the kernel (null space) and the range (column space) of matrix $A$. $\diag: \reals^{n \times n} \to \reals^n $ acts on square matrices and extracts the diagonal.  $\diag^*: \reals^n \to \reals^{n \times n}$ is the adjoint of $\diag$, acting on vectors, producing the natural diagonal matrix. For a matrix $X$, let $\supp(X) := \{(i,j):\; X_{ij} \neq 0\}$ be its support. More specialized notation is introduced in Section~\ref{sec:aux:results}.

\section{The stochastic block model}\label{sec:SBM:def}
We now formally introduce the \SBM.   The network data (nodes and edges connecting them) are represented by a
simple undirected graph on $n$ nodes via its $n \times n$ adjacency
matrix $A$, a binary symmetric matrix with 
$A_{ij} = 1$ if there is an edge between nodes $i$ and $j$, and 0
otherwise.   Each node belongs to exactly one community, specified by
its membership vector $z_i \in \{0,1\}^K$, with exactly one
nonzero entry, $z_{ik} = 1$, indicating that node $i$ belongs to community $k$.  The vectors $z_i$ are not observed.   
The \SBM\ is parametrized through the symmetric probability matrix $\Qm \in
[0,1]^{K \times K}$, where $\Qm_{kr}$ is the probability of an edge
forming between a pair of nodes from communities $k$ and $r$. For
simplicity, we assume $n$ is a multiple of $K$. 

Given $z_i$ and $\Qm$, $\{A_{ij}, i < j\}$ are drawn independently as
Bernoulli random variables with $\ex[A_{ij} | z_i,z_j] = z_i^T \Qm
z_j$. Let $Z$ be the $n \times K$ matrix with rows $z_1^T,\dots,z_n^T$. Then, we can write the model as
\begin{align}\label{eq:bm:mean:def}
 	M_Z:= \ex[A | Z] = Z \Qm Z^T. 
\end{align}
Note that $A_{ii}$'s are so far undefined. They can be defined based on convenience, but we will always assume that they are defined so that~\eqref{eq:bm:mean:def} holds over all elements. (For example, one possibility is to set $A_{ii} := [M_Z]_{ii}$.) We do not treat $\{A_{ii}\}$ as part of the observed data.

$M_Z$ is a block constant, rank $K$ matrix, and we can think of the
operation $\Qm \mapsto Z \Qm Z^T$ as a block constant embedding of a $K
\times K$ matrix into the space of $n \times n$ matrices.   This
provides us with a simple but useful property:  for any matrix $M$ and function $f$ on $\reals$, let $f \circ M$ be the pointwise application of $f$ to the entires of $M$, $[f \circ M]_{ij} = f(M_{ij})$. Then, we have 
\begin{align}\label{eq:blk:const:ident}
  f \circ (Z \Qm Z^T) = Z (f \circ \Qm) Z^T . 
\end{align}
Using~\eqref{eq:blk:const:ident}, we can write the log-likelihood of the
\SBM\ in a compact form. First note that
\begin{align*}
  \ell(Z,\Qm) &= \sum_{i <j} A_{ij} \log [M_Z]_{ij} +
  (1-A_{ij}) \log\big(1-[M_Z]_{ij}\big)  \\
  &= \sum_{i <j} A_{ij} [f\circ M_Z]_{ij} + [g\circ M_Z]_{ij} 
\end{align*}
where $f(x) := \log \frac{x}{1-x}$ and $g(x) := \log(1-x)$ are
functions on $[0,1]$. Recall that for symmetric matrices $A$ and $B$, we defined $\ip{A,B} := \tr(AB)$, and $\onem_n$ is the $n
\times n$ matrix of ones. Let $\ip{A,B}_0 := \ip{A,B} - \sum_i A_{ii} B_{ii}$, that is, the inner product defined through off-diagonal elements of $A$ and $B$. 
Using~\eqref{eq:blk:const:ident}, 
\begin{align}\label{eq:log:like:1}
  2\ell(Z,\Qm) &= \ip{A, f\circ M_Z}_0 + \ip{\onem_n, g\circ M_Z}_0 \notag 
  \\&= \ip{A, Z (f\circ \Qm) Z^T}_0 + \ip{\onem_n, Z (g \circ \Qm) Z^T}_0.  
\end{align}

In deriving the SDPs, our focus will be on the following two special cases of the \SBM.  In Section~\ref{sec:consist:results}, we will show the SDPs work for larger classes of \SBM s than those they are derived for.
\begin{itemize}
	\item[($\symBM$)] The planted partition (PP) model,
          $\symBM(p,q)$,  defined by just two parameters $\pp$ and
          $\qqq$ via  
\begin{equation}\label{eq:PP:Psi}
  \Qm = \qqq \onem_K + (\pp-\qqq) I_K, 
\end{equation}
 where $I_K$ is the $K \times K$ identity matrix, and following the PP
 literature we assume $\pp > \qqq$. Note that~\eqref{eq:PP:Psi} simply means that the diagonal elements are $p$ and the off-diagonal elements are $q$.
	\item[($\bsymBM$)] The balanced planted partition model,
          $\bsymBM(p,q)$, which is  $\symBM(p,q)$ with the additional assumption that the blocks have equal sizes.
\end{itemize}

For $\symBM(p,q)$, the likelihood greatly simplifies, since $f \circ \Qm$
and $g \circ \Qm$ take only two values, 
\begin{align*}
  f\circ \Qm = f(q) \onem_K + [f(p) - f(q)] I_K 
\end{align*}
and similarly for $g \circ \Qm$.  Since $Z \onem_K Z^T = \onem_n$, 
\eqref{eq:log:like:1} becomes 
\begin{align*}
  2\ell(Z,\Qm)  &= [f(p) - f(q)] \ipbig{A, ZZ^T}_0 + [g(p) - g(q)] \ipbig{\onem_n, ZZ^T}_0 + \text{const} ,
\end{align*}
where the constant term does not depend on $Z$. With the condition $p > q$, we have $f(p) > f(q)$ and $g(p) < g(q)$. Then, we obtain,
\begin{align}\label{eq:log:like:Z-form:1}
  \frac{2\ell(Z,\Qm)}{f(p) - f(q)} =
   \ipbig{A, ZZ^T} - \lambda  \ipbig{\onem_n, ZZ^T} + \text{const}, \quad 
    \lambda := \frac{g(q) - g(p)}{f(p) - f(q)} > 0.
\end{align}
Note that we have safely replaced $\ip{\cdot\,,\cdot}_0$ with $\ip{\cdot\,,\cdot}$, possibly changing the constant, since $[ZZ^T]_{ii} = 1, \, \forall i$ regardless of $Z$. A similar calculation appears in~\cite{Cai2014}, albeit in a slightly different form.

\section{Relaxing the maximum likelihood estimator (MLE)}\label{sec:MLE:relax}
Given the adjacency matrix $A$,  the MLE for $(Z,\Qm)$ is obtained by
maximizing the likelihood of the \SBM.   It is known to have desirable
consistency and in some sense optimality properties
\cite{Bickel&Chen2009}, but the exact computation of the MLE is in
general NP-hard, due to the optimization over $Z$. However, it can
be relaxed to computationally feasible convex problems.

We can obtain a class of MLEs by varying the domain over which the
likelihood~\eqref{eq:log:like:Z-form:1} is maximized. 
That is, we have the general estimator 
\begin{align}\label{eq:gen:optim:Zspace}
	\widehat{Z} := \argmax_{Z\, \in \Zc} \; \ipbig{A, ZZ^T} -
        \lambda  \ipbig{\onem_n, ZZ^T} \ . 
\end{align}
Each $Z$ corresponds to a \emph{clustering} matrix $X = ZZ^T \in
\{0,1\}^{n \times n}$, where $X_{ij} = 1$ if $i$ and $j$ belong to the
same community, and $X_{ij}=0$ otherwise. Any subset $\Zc$ in the
$Z$-space induces a corresponding subset $\Xc$ in the $X$-space. We
can consider estimators of $X$, and our blueprint for
deriving different relaxations will be varying the space $\Xc$ in the
optimization problem  
\begin{align}\label{eq:gen:optim:Xspace}
	\widehat{X} := \argmax_{X\, \in \Xc} \; \ipbig{A, X} 
	- \lambda  \ipbig{\onem_n, X} \ .
\end{align}

\subsection{Our relaxation: SDP-1}
Our relaxation corresponds to the balanced model $\bsymBM(p,q)$, in
which each community is of size $n/K$. In this case, all admissible
$Z$'s can be obtained by permutation of any fixed admissible $Z_0 = I_K  \otimes \onev_{n/K}$, and we can take the feasible set $\Zc$ in~\eqref{eq:gen:optim:Zspace} to be
\begin{align*}
  \Zcz(Z_0) := \big\{P Z_0 Q:\;  \;
    P, \,Q \, \text{ are permutation matrices}  \big\},
\end{align*}
 where $\otimes$ is the Kronecker product and $\onev_{n/K}$ is the
 vector of all ones of length $n/K$. This choice of $Z_0$ is for
 convenience and corresponds to assigning nodes consecutively to
 communities 1 through $K$.    Recalling $X = ZZ^T$, the corresponding feasible set in the $X$-space is 
\begin{align}\label{eq:Xorbit}
  \Xcz(\Xtru) := \big\{ P \Xtru P^T: \; \; 
    P\,  \text{ is a permutation matrix} \big\}, \; \Xtru = I_K \otimes \onem_{n/K}.
\end{align}
Note that $\Xtru$ is block-diagonal with all the diagonal 
blocks equal to $\onem_{n/K}$. 

In order to relax $\Xcz(\Xtru)$, we first note that any $X$ in this set
is clearly positive semidefinite (PSD), denoted by $X \succeq 0$, since
$X = (PZ_0)(PZ_0)^T$. In addition, $0 \le X_{ij} \le 1$ for all $i$, $j$, which we
write as  $0 \le X \le 1$,  and $\diag(X) = \onev_n$.  Note that the
latter condition, $X \succeq 0$ and $X \ge 0$ imply $X \le 1$, since $1-X_{ij}^2 = X_{ii} X_{jj} - X_{ij}^2 \ge 0$ implying $X_{ij} = |X_{ij}| \le 1$. Finally, it is easy to see
that each row of $X$ should sum to $n/K$, i.e.,  $X \onev_n = (n/K) \onev_n$.  This implies we can remove the term $\lambda \ip{\onem_n, X}$ from the objective function in~\eqref{eq:gen:optim:Xspace}, since
\begin{align}\label{eq:X:Em}
	\ip{X,\onem_n} = \tr(X \onev_n \onev_n^T) = \onev_n^T X \onev_n = \onev_n^T (n/K) \onev_n = n^2/K,
\end{align}
which is a constant. Thus, we arrive at our proposed relaxation, {\bf SDP-1}:
\begin{align}\label{eq:our:sdp}
\begin{split}
  \def\arraystretch{1.3}
  \begin{array}{ll}
    \argmax_X & \ip{A,X}  \\
    \text{subject to} 
    & X \onev_n = (n/K) \onev_n, \; \; \diag(X) = \onev_n, \;\;
      X \succeq 0, \;X \ge 0  .  
  \end{array}
\end{split}
\end{align}

\subsection{Other relaxations:  SDP-2 and SDP-3}
Two other interesting SDP relaxations have recently appeared in the
literature. First, we will consider the relaxation of
\ChenXu~\cite{Chen2014}; see also~\cite{Chen2012}. They essentially work with the same $\bsymBM(p,q)$,
although their model is slightly more general (see
Remark~\ref{rem:outliers}). The main relaxation proposed
in~\cite{Chen2014} is via constraining the nuclear norm of $X$, a common heuristic for constraining the rank. Since $X$ is
PSD, we obtain $\mnorm{X}_* = \tr(X) = n$. In addition, they impose a
single affine constraint, namely ~\eqref{eq:X:Em}. Thus, their main
focus is on the relaxation which replaces $\Xcz(\Xtru)$ with $\{X: \,
\mnorm{X}_* \le n,\; \ip{X,\onem_n} = n^2/K, \; 0 \le X \le 1\}$. However,
they briefly mention a much tighter SDP relaxation which imposes
positive semi-definiteness directly. This is what we have called {\bf SDP-2}, shown in Table~\ref{SDP-table}.

Note that $X \succeq 0$ and $\tr(X) = n$ imply $\mnorm{X}_* = n$, which is much tighter than $\mnorm{X}_* \le n$.
The main difference between SDP-2 and our relaxation is that we impose the constraint $\ip{\onem_n,X} = n^2/K$ more restrictively, by breaking it into $n$ separate affine constraints. We also break the $\tr(X) = n$ into $n$ pieces, but that does not seem to make much of a difference.

\begin{table}[t]
\caption{SDP relaxations}
\label{SDP-table}
\renewcommand{\arraystretch}{1.3}
\small
\begin{center}
\begin{tabular}{c|c|c|c|c}
& SDP-1 & SDP-2 & SDP-3 & EVT \\ \hline
maximize & $\ip{A,X}$ & $\ip{A,X}$ & 
    $\ip{A,X} - \lambda\ip{\onem_n,X}$ & $\ip{A,X}$
\\ \hline 
\multirow{3}{*}{subject to} & $X \onev_n = (n/K) \onev_n$ 
     &  $\ip{\onem_n,X} = n^2/K$ & \\
     & $\diag(X)=\onev_n$ & $\tr(X) = n$ &  & $\tr(X) = n$
\\ \cline{2-5}
     &  $X \ge 0$ & \multicolumn{2}{c|}{$0 \le X \le 1$} & $\mnorm{X} _2\le n/K$
\\ \cline{2-5}
     & \multicolumn{3}{c}{$X \succeq 0$} 
\\ \hline

    model & \multicolumn{2}{c|}{$\bsymBM(p,q) \equiv \Xcz(\Xtru)$} 
    & $\symBM(p,q) \equiv \Xcf$
\end{tabular}
\end{center}
\end{table}

Next, we consider the relaxation of \CaiLi~\cite{Cai2014}, though in a
slightly different form.  This relaxation works for the more general model $\symBM(p,q)$. In this case, we are looking at the feasible set
\begin{align}\label{eq:Xfree}
	\Xcf = \{X = ZZ^T: \; \text{$Z$ is an admissible membership matrix} \}.
\end{align}
For $X \in \Xcf$, we still have $X \succeq 0$ and $X_{ij} \in
\{0,1\}$. Thus, one can simply relax to the problem denoted by {\bf SDP-3} in Table~\ref{SDP-table}.

Note that $\lambda \ip{\onem_n, X}$ remains in the objective, since
there are no constraints to make it constant. 
We cannot enforce an affine constraint involving $\ip{\onem_n,X}$
directly for $\Xcf$ without knowing the block sizes. In fact, let $\nb
= (n_1,\dots,n_K)$ be the vector of block sizes, and let $\onem_\nb :=
\diag^*(\onem_{n_1},\dots,\onem_{n_K})$ be the block-diagonal matrix with
diagonal blocks of all ones with sizes given by $\nb$. It is easy to see that $\Xcf$ is the union of orbits of all possible $\onem_\nb$, 
\begin{align}\label{eq:Xfree:union:orbit}
	\Xcf = \bigcup_{\nb: \; \|\nb\|_1 = n} \!\!\Xcz( \onem_\nb  ) = 
	\bigcup_{\|\nb\|_1 = n} \big\{ P  \onem_\nb P^T:\, \text{$P$ is a permutation matrix} \big\}
\end{align}
from which it follows that $\ip{\onem_n, X} = \|\nb\|_2^2 = \sum_j
n_j^2$, a function of the unknown $\{n_j\}$.

The optimal value for parameter $\lambda$, assuming the model is
$\symBM(p,q)$, is given in~\eqref{eq:log:like:Z-form:1} as a function
of $p$ and $q$. However, one
can think of $\lambda$ as a general regularization parameter
controlling the sparseness of $X$, noticing $\ip{\onem_n,X} =
\|X\|_1$ since $X \ge 0$. It is well known that the $\ell_1$ norm is a
good surrogate for a cardinality constraint when enforcing sparseness,
which leads us to a link to sparse PCA discussed in Section \ref{sec:sparse:PCA}. 

\begin{rem}\label{rem:outliers}
  Both~\cite{Chen2014} and~\cite{Cai2014} consider the effect of
  outliers on their SDPs. \CaiLi~\cite{Cai2014} derive the SDP for the
  model we described but they modify it by penalizing the trace, which
  is justified by their theory for a fairly general model of
  outliers. \ChenXu~\cite{Chen2014} start with a generalized version
  of $\bsymBM(p,q)$ which allows for a subset of nodes that belong to
  no community, and relax that model.  Our relaxation SDP-1 can also
  work for this generalized model if we replace $X \onev_n = (n/K)
  \onev_n$ with the inequality version $X \onev_n \le (n/K) \onev_n$. This has an advantage over \ChenXu's approach, since one does not need to know the number of outliers a priori. 


\end{rem}

\subsection{Connection with nonnegative sparse PCA}\label{sec:sparse:PCA} 
Representation~\eqref{eq:Xfree:union:orbit} suggests another natural
direction to restrict the parameter space. Note that $\|\nb\|_\infty =
\max_j n_j \in [n/K,n]$, as a consequence of $\|\nb\|_1 = n$. The
closer $\|\nb\|_\infty$ is to $n/K$, the more balanced the communities
are. This suggests the following class, 
\begin{align}\label{eq:Xfree:gam}
 	\Xcfg{\gamma} := \bigcup \Big\{
 	 \Xcz(\onem_\nb) :\; \|\nb\|_1 = n,
 		\; \|\nb\|_\infty \le \gamma (n/K) \Big\},
 \end{align} 
where $\gamma \in [1,K]$ measures the deviation from completely balanced communities. 
For $X \in \Xcfg{\gamma}$, note that 
$\mnorm{X}_2 = \mnorm{\onem_\nb}_2 
= \max_j \mnorm{ \onem_{n_j}}_2 
= \|\nb\|_\infty \le \gamma(n/K)$.  As before, we have $\tr(X) = n$,
$\|X\|_1 = \ip{\onem_n,X}$, and 
$X \in \Nc_+^n \!:= \{X\!: X \succeq 0, X \ge 0\}$, 
the doubly nonnegative cone.  Letting $\Xt = (K/n) X$, we have 
\begin{align}\label{eq:sPCA:SDP}
\begin{split}
  \def\arraystretch{1.3}
  \begin{array}{ll}
    \argmax_{\Xt} & \ip{A,\Xt} - \lambda\|\Xt\|_1 \\
        \text{subject to } 
        & \mnorm{\Xt}_2 \le \gamma, \; \tr(\Xt) = K, \;\;
        \Xt \succeq 0, \;\Xt \ge 0.
  \end{array}
\end{split}
\end{align}
Apart from the nonnegative constraint $\Xt \ge 0$ (which can be
removed to obtain a further relaxation), this is a generalization of
the SDP relaxation for  sparse PCA. Specifically, $\gamma = 1$
corresponds to the now well-known relaxation for recovering a sparse
$K$-dimensional leading eigenspace of $A$. The corresponding solution
$\Xt$ can be considered a generalized projection into this subspace,
see for example~\cite{Vu2013,D'Aspremont2004}, and note that $\Xt
\succeq 0, \mnorm{\Xt}_2 \le 1$ is equivalent to $0 \preceq \Xt
\preceq  I$.   We will not pursue this direction here, but it opens up
possibilities for leveraging sparse PCA results in network models.  

\subsection{Connection with adjacency-based spectral clustering}\label{sec:EVT:connection}
The first step in spectral clustering based on the adjacency matrix is the truncation of $A$ to its $K$ largest eigenvalues, which we call eigenvalue truncation (EVT). The resulting matrix $\Xt$ is the solution of a SDP maximizing $\ip{A,\Xt}$ subject to $\Xt \succeq 0,\; \tr(\Xt) = K,\; \mnorm{\Xt}_2 \le 1$. We can consider $X := (n/K) \Xt$ as an estimate of the cluster matrix by EVT. The resulting SDP appears in Table~\ref{SDP-table}, and will be our surrogate to compare the other SDPs to this particular version of spectral clustering. We should note that the more common form of spectral clustering, based on truncation to $K$ largest eignevalues in \emph{absolute} value is equivalent to applying EVT to $|A| = \sqrt{A^2}$.

The SDP formulation of EVT can be considered a relaxation of the MLE in $\bsymBM$, similar to the other SDPs we have considered. It is enough to note that $\mnorm{X}_2 = \mnorm{X_0}_2 = n/K$ for any $X \in \Xcz(\Xtru)$.  Also note that SDP-1 is a strictly tighter relaxation than  EVT.  To see that, take any $X$ which is feasible for SDP-1, and note that $X \onev_n = (n/K) \onev_n$ means that $\onev_n$ is an eigenvector of $X$ associated with eigenvalue $n/K$. The Perron-Frobenius theorem then implies that $\mnorm{X}_2 \le n/K$, hence $X$ is feasible for EVT.

\section{Strong consistency results}\label{sec:consist:results}
In this section, we provide consistency results for SDP-1 and a variant of SDP-2, which we will call SDP-$2'$. This version is obtained from SDP-2 by replacing $\tr(X) = n$ with $\diag(X) = \onev_n$ and removing the now redundant condition $X \le 1$.
This modification allows us to unify the  treatment of these two SDPs. For example, optimality conditions in Section~\ref{sec:optim:cond} are derived for a general blueprint~\eqref{eq:gen:SDP}, which includes both SDP-1 and SDP-$2'$ as special cases. The consistency results will go beyond the $\bsymBM(p,q)$ model originally used in deriving them.   Consider a general balanced block model, denoted as $\bBM(\Qm) = \bBM_m(\Qm)$ with block size $m$, and probability matrix $\Qm \in [0,1]^{K \times K}$. Note the relationship $ n = m K$
between the number of nodes $n$, the block size $m$, and the number of blocks $K$. 
For notational consistency, we will denote diagonal and off-diagonal entries of $\Qm$ differently,
\begin{align}\label{eq:Psi:element:naming}
  p_k := \Qm_{kk}, \quad q_{k \ell} := \Qm_{k \ell}, \quad k \neq \ell.
\end{align}
The balanced planted partition model $\bsymBM(p,q) = \bsymBM_{m,K}(p,q)$ is a special case of $ \bBM_m(\Qm)$ where $p_k = p$ and $q_{k \ell} = q$ for all $k$, $l$. 
 
We start with defining two notions of assortativity that will be key in our results.  Let
\begin{align}\label{eq:qs:def}
  \qs_k := \max_{r=k,s\neq k} q_{rs} = \max_{r\neq k,s= k} q_{rs}.
\end{align}
\begin{defn}[Strong and weak assortativity]\label{def:strong:weak:assort}
  Consider the balanced block model $\bBM_m(\Qm)$ determined by~\eqref{eq:Psi:element:naming}.
  \begin{itemize}
    \item  The model is strongly assortative (SA) if   $\min_{k} p_k \, > \, \max_k \qs_k$.
    \item  The model is weakly assortative (WA) if   $p_k > \qs_k$ for all $k$.
  \end{itemize}
\end{defn}
An alternative way to state strong assortativity is $\min_{k} p_k >  \max_{(k,\ell): \; k \neq \ell} q_{k,\ell}$.   Strong assortativity implies weak assortativity. See~\eqref{eq:K4:Psi:exam} for an example where weak assortativity holds but not the strong one.

%
These definitions apply to general block models since they are defined only in terms of the edge probability matrix $\Qm$. 
We also define a partial order among balanced block models, which reflects the hardness of recovering the underlying cluster matrix $X$. 
\begin{defn}[Strong assortativity (SA) ordering]\label{def:BM:ordering}
  The collection $\{\bBM_m(\Qm): \Psi \in [0,1]^{K\times K} \cap \Symm{K} \}$ is partially ordered by
  \begin{align}\label{eq:BM:ordering:def}
    \bBM_m(\Qmt) \ge \bBM_m(\Qm) \quad \iff \quad \pt_k \ge p_k, \;\; \qt_{k \ell} \le q_{k \ell}, \;\;\forall k \neq \ell.
  \end{align}
  This ordering or the one induced on matrices in $[0,1]^{K \times K} \cap \Symm{K}$  is referred to as SA-ordering.
\end{defn}
%
Intuitively, for assortative models $\bBM_m(\Qmt) \ge \bBM_m(\Qm) $ implies that $\bBM_m(\Qmt)$ is easier than $\bBM_m(\Qm)$ for cluster recovery.   This will be made precise  in Corollary~\ref{cor:BM:ordering} in Section~\ref{SEC:SDP:RESPECTS:ORDERING}.   For example, consider a strongly assortative model $\bBM(\Qm)$ where 
\begin{align}\label{eq:pminus:qplus}
  \pminus := \min_{k} p_k \; > \; \max_{(k,\ell): \; k \neq \ell} q_{k,\ell} =: \qplus.
\end{align}
Then, it is easy to see that $ \bBM(\Qm) \ge \bsymBM(p^{-},q^{+})$,  roughly meaning that fitting $\bBM(\Qm)$ is not harder than fitting $\bsymBM(p^{-},q^{+})$.

In order to study consistency, we always condition on the true cluster matrix, which is taken to be $\Xtru := I_K \otimes \onem_m$ without loss of generality.  Let $\Stru := \supp(\Xtru)$ be the index set of non-zero elements of $\Xtru$.   We write $\SDP(A)$ for the solution set of the SDP for input $A$, where 
SDP is any of SDP-1, SDP-2 or SDP-3, which will be clear from the context.   In this notation, $\SDP(A) = \{\Xtru\}$ means that $\Xtru$ is the unique solution of the SDP, in which case we say that the SDP is \emph{strongly consistent} for cluster matrices. 
\begin{rem}
Our notion of consistency here is stronger than what is commonly called strong consistency in the literature~\cite{Zhaoetal2012, Bickel2009, Amini.et.al.2013, Mossel2014}.   Strong consistency for an algorithm that outputs a set of community labels usually means exact recovery of labels up to a permutation of communities, with high probability.  Viewing SDPs as algorithms that output the cluster matrix $\Xh$, here by strong consistency we mean the exact recovery of $\Xtru$, which immediately implies exact label recovery. We note, however, that in some regimes one can recover labels exactly even when the output $\Xh$ of an SDP is not exact. For example, one can run a community detection algorithm on $\Xh$, say spectral clustering;  see Algorithm~\ref{alg:graphon} in Section~\ref{sec:graphon}.   However, if the labels are inferred directly from $\Xh$  (if $\Xh$ corresponds to a graph with $K$ disjoint connected components, then output the labels implied by the components; otherwise, output random labels), then our notion of strong consistency matches the standard one in the literature.   
\end{rem}

\medskip
 A key piece in our results will be the following matrix concentration inequality noted recently by many authors; see, for example~\cite{Lei&Rinaldo2013,Chen2014,Tomozei2011} and the references therein. Results of this type are often based on the refined discretization argument of~\cite{Feige2005}.
\begin{prop}\label{prop:key:adj:concent}
Let $A = (A_{ij}) \in \{0,1\}^{n \times n}$ be a symmetric binary matrix, with independent lower triangle and zero diagonal. There are universal positive constant $(C,C',c,r)$ such that 
if 
\begin{align*}
  \max_{ij}\var(A_{ij}) \le \sigma^2, \quad\text{for}\quad
  n \sigma^2 \ge C' \log n
\end{align*}
 then with probability at least $1 - c\, n^{-r}$,
\begin{align*}
  \mnorm{ A - \ex A } \le C \sigma \sqrt{n}.
\end{align*}
\end{prop}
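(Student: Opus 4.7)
The plan is to control $\mnorm{W}$ with $W := A - \ex A$ via the variational formula $\mnorm{W} = \sup_{x,y \in S^{n-1}} x^T W y$, combined with a standard $\epsilon$-net argument and a split of the bilinear form into ``light'' and ``heavy'' contributions. This is the Feige--Ofek strategy followed in the cited references, and the constants in the hypothesis $n\sigma^2 \ge C' \log n$ and conclusion $\mnorm{W}\le C\sigma\sqrt{n}$ are produced by carefully balancing the cost of the union bound against the deviations of the two contributions.

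The first step is to fix a $\tfrac{1}{4}$-net $\mathcal{N}\subset S^{n-1}$ of cardinality at most $9^n$, which yields the standard approximation $\mnorm{W}\le 2 \sup_{x,y\in\mathcal{N}} x^T W y$. Because $|\mathcal{N}|^2 \le e^{Cn}$, it is enough to prove, for each fixed pair $(x,y)\in\mathcal{N}$, a tail bound of the form $\Pr\bigl( |x^T W y| > C_1 \sigma \sqrt{n}\bigr) \le 2 e^{-C_2 n}$ with $C_2$ large enough to absorb the net's cardinality and still produce polynomial decay $n^{-r}$.

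A direct application of Bernstein's inequality to $\sum_{i<j}(x_i y_j + x_j y_i)W_{ij}$ would suffice if $\|x\|_\infty \|y\|_\infty$ were small, but fails for localized unit vectors. Following Feige--Ofek, I would split the index pairs into light pairs $L = \{(i,j) : |x_i y_j| \le \sigma\}$ and heavy pairs $H$ (the threshold is of this order up to routine rescaling). On $L$, each summand is bounded and has variance at most $\sigma^2 x_i^2 y_j^2$, so Bernstein yields a subexponential tail at deviation $C_1\sigma\sqrt{n}$; this part is routine and does not depend critically on the lower bound on $n\sigma^2$.

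The heavy contribution is the main obstacle. Here the probabilistic bound is replaced by a deterministic discrepancy estimate on the realized graph: with probability $1-O(n^{-r})$, for every pair of subsets $S,T\subseteq[n]$ the edge count $e(S,T)$ is bounded by a constant multiple of $\max\bigl\{\sigma^2 |S||T|,\; (|S|\vee|T|)\log(n/(|S|\vee|T|))\bigr\}$. This ``bounded discrepancy'' property is exactly what the lower bound $n\sigma^2 \ge C'\log n$ buys, and it is the technically delicate step, requiring its own epsilon-net-over-subsets combined with a Chernoff union bound tailored to the two regimes of $|S||T|$. Once this lemma is in hand, a dyadic decomposition of $H$ according to the magnitudes of $|x_i|$ and $|y_j|$, together with Cauchy--Schwarz to reassemble the $\sqrt{n}$ factor, yields $\bigl|\sum_{(i,j)\in H} x_i y_j W_{ij}\bigr| \le C_3 \sigma \sqrt{n}$ uniformly over $(x,y)\in\mathcal{N}$. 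Combining the light and heavy bounds and taking the union bound over $\mathcal{N}$ completes the proof, with the logarithmic threshold $n\sigma^2 \gtrsim \log n$ being sharp precisely because it is what the discrepancy lemma demands.
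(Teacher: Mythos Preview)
The paper does not prove this proposition; it is stated as a known result and attributed to the literature (Lei--Rinaldo, Chen--Xu, Tomozei--Massouli\'e), with the remark that ``results of this type are often based on the refined discretization argument of Feige--Ofek.'' Your proposal follows exactly that Feige--Ofek light/heavy split combined with the bounded-discrepancy lemma, so it is the standard route the paper is pointing to, and the outline is correct.
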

In what follows $(C,C',c,r)$ will always refer to the constants in this proposition.
Our first result establishes consistency of SDP-$2'$ for the balanced planted partition models. We will work with two rescaled version of $p$, namely 
\begin{align}\label{eq:p:defs}
\pb := p m= p \frac{n}{K},  \quad   \pub := \frac{\pb}{\log n} , 
\end{align}
and similarly for $\qub$, $\qb$ and $q$.

\begin{thm}[Consistency of SDP-$2'$]\label{thm:consist:SDP2}
  Let  $A$ be drawn from $\bsymBM_{m,K}(p,q)$.
  For any $c_1,c_2 > 0$, let $C_1 := C' \vee \frac{4}{9}(c_1+1)$ and $C_2 := C + (\sqrt{4(c_1+1)} \vee 3 \sqrt{4 (c_2+1)})$. Assume $\pub \ge C_1 $. Then, if 
  \begin{align}\label{eq:SDP2:consist:cond}
  \pub - \qub > C_2 ( \sqrt{\pub} + \sqrt{  \qub K}),
  \end{align}
  SDP-$2'$ is strongly consistent with probability at least $1 - c(Km^{-r} + n^{-r}) -n^{-c_1} - 2m^{-1}n^{-c_2}$.
\end{thm}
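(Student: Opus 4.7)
My plan is to use the \emph{primal-dual witness} (PDW) construction alluded to in the introduction, adapted to SDP-$2'$. Writing the Lagrangian with multipliers $\alpha \in \reals^n$ for $\diag(X) = \onev_n$, $\beta \in \reals$ for $\ip{\onem_n, X} = n^2/K$, $\Lambda \in \psd{n}$ for $X \succeq 0$, and $\Gamma \ge 0$ for $X \ge 0$, stationarity at the candidate optimum $X = \Xtru$ yields the KKT identity
\begin{align*}
\Lambda = \diag^*(\alpha) + \beta \onem_n - A - \Gamma,
\end{align*}
and complementary slackness forces $\Lambda \Xtru = 0$ together with $\supp(\Gamma) \subseteq \Stru^c$. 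To upgrade existence of a dual certificate into \emph{uniqueness} of $\Xtru$ as the SDP optimum (hence strong consistency), I would additionally require $\ker(\Lambda) = \rangeS(\Xtru)$, i.e.\ that $\Lambda$ have exact rank $n - K$.

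Next I would construct the witness $(\alpha^\star, \beta^\star, \Lambda^\star, \Gamma^\star)$ in two stages. Choose $\beta^\star$ close to the rescaled off-block probability $\qb = q n/K$, and pick $\Gamma^\star$ supported on $\Stru^c$ so that every off-block submatrix of $\Lambda^\star$ has vanishing row (and column) sums; the cleanest option is to arrange $\Lambda^\star$ to be \emph{block-diagonal} with respect to the true partition, absorbing all off-block fluctuations of $A$ into $\Gamma^\star$. The remaining multipliers $\alpha^\star_i$ are pinned down, block by block, by the row-sum condition $\Lambda^{\star(k)} \onev_m = 0$, which gives $\alpha^\star_i = \dg{i}{k} - \beta^\star m$ (plus lower-order boundary corrections), where $\dg{i}{k} = \sum_{j \in C_k} A_{ij}$ is the within-block degree of $i \in C_k$.

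With $\Lambda^\star$ fully specified, verifying $\Lambda^\star \succeq 0$ with $\ker(\Lambda^\star) = \rangeS(\Xtru)$ reduces to an $m \times m$ problem on each diagonal block, where $\Lambda^{\star(k)}$ is of Laplacian type: $\diag(\dg{\cdot}{k}) - \beta^\star I_m + \beta^\star \onem_m - B^{(k)}$, with $B^{(k)}$ the within-block submatrix of $A$. Restricted to $\onev_m^\perp$, this is essentially $(\pb - \qb) I$ minus the centered matrix $B^{(k)} - \ex B^{(k)}$, whose operator norm Proposition~\ref{prop:key:adj:concent} controls by $C \sqrt{\pb}$ with probability at least $1 - c m^{-r}$; union-bounding over $k$ produces the $K m^{-r}$ contribution to the failure probability. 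Dual feasibility $\Gamma^\star \ge 0$ requires entrywise and row-sum control of the off-block part of $A$, which I would obtain by a second application of Proposition~\ref{prop:key:adj:concent} to the full centered matrix $A - \ex A$ (operator norm $\lesssim \sqrt{qn} = \sqrt{\qb K}$, yielding the $n^{-r}$ term), together with Chernoff bounds on individual degree-type statistics contributing the $n^{-c_1}$ and $2m^{-1} n^{-c_2}$ terms once the density threshold $\pub \ge C_1$ is satisfied — the numerical constants $\tfrac{4}{9}(c_1+1)$ and $\sqrt{4(c_1+1)}$ appearing in $C_1$ and $C_2$ are exactly the Chernoff-tail thresholds needed here.

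Assembling these pieces, the minimum eigenvalue of $\Lambda^\star$ on $\rangeS(\Xtru)^\perp$ is, up to absolute constants, $\pb - \qb - O(\sqrt{\pb} + \sqrt{\qb K})$, which is strictly positive exactly when~\eqref{eq:SDP2:consist:cond} holds. The main obstacle I anticipate is the explicit construction and verification of $\Gamma^\star$: unlike the $K=2$ case of~\cite{Abbe2014}, a single nonnegative $\Gamma^\star$ here must simultaneously kill $K-1$ off-block row sums per vertex in $\Lambda^\star$, and it is precisely this aggregated off-block noise — of operator-norm scale $\sqrt{qn}$ rather than $\sqrt{q}$ per coordinate — that produces the $\sqrt{K}$ factor multiplying $\sqrt{\qub}$ on the right-hand side of~\eqref{eq:SDP2:consist:cond}. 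A final union bound over the three concentration events delivers the probability guarantee stated in the theorem.
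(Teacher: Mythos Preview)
Your overall PDW architecture and the list of concentration tools are right, but the specific construction you propose---forcing $\Lambda^\star$ to be \emph{block-diagonal} by ``absorbing all off-block fluctuations of $A$ into $\Gamma^\star$''---does not work. If $\Lambda^\star_{\subb{k}\subb{\ell}}=0$ for $k\neq\ell$, then necessarily $\Gamma^\star_{\subb{k}\subb{\ell}} = \beta^\star \onem_m - A_{\subb{k}\subb{\ell}}$. Since $A_{\subb{k}\subb{\ell}}$ is a $\{0,1\}$ matrix and, for any $q>0$, has entries equal to $1$ with high probability, dual feasibility $\Gamma^\star\ge 0$ forces $\beta^\star\ge 1$. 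But then your diagonal block analysis collapses: the Laplacian-type block has leading term $(\rho-\beta^\star)m$ with $\rho\le 1$, so $\Lambda^\star\succeq 0$ fails. An operator-norm bound on $A-\ex A$ cannot rescue this, because $\Gamma^\star\ge 0$ is an \emph{entrywise} constraint, not a spectral one.

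The paper's construction resolves exactly this tension: it does \emph{not} take $\Lambda$ block-diagonal, but instead sets the off-block part to $\Lambda_{\subb{k}\subb{\ell}} = -\projonep A_{\subb{k}\subb{\ell}}\projonep$, equivalently $\Gamma_{\subb{k}\subb{\ell}} = \mu\onem_m + \projonep A_{\subb{k}\subb{\ell}}\projonep - A_{\subb{k}\subb{\ell}}$. This projection is the minimal choice that still annihilates $\onev_m$ on both sides (so the row-sum conditions hold), and it converts $\Gamma\ge 0$ into the \emph{degree-type} inequality $\mu m \ge \dg{i}{\ell}+\dg{j}{k}-\dav{k}{\ell}$, which Bernstein handles with $\mu m\approx \qb$ (this is where the $n^{-c_1}$ and $2m^{-1}n^{-c_2}$ terms come from). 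The price is that the PSD verification of $\Lambda$ is no longer block-by-block: the cross terms $-\sum_{k\neq\ell}u_k^T\Delta_{\subb{k}\subb{\ell}}u_\ell$ reassemble into $-u^T\Delta u$, and it is here---in the PSD bound, not in $\Gamma\ge 0$---that the full operator norm $\mnorm{\Delta}\lesssim \sqrt{\pb}+\sqrt{\qb K}$ enters and produces the $\sqrt{\qub K}$ in~\eqref{eq:SDP2:consist:cond}. In short, you have the roles of the two concentration inputs swapped, and the block-diagonal ansatz is the reason.
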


As a consequence, we get consistency for a strongly assortative block model. More precisely, Theorem~\ref{thm:consist:SDP2} combined with Corollary~\ref{cor:BM:ordering} in Section~\ref{SEC:SDP:RESPECTS:ORDERING} gives the following:
\begin{cor}[Consistency of SDP-$2'$ for the strongly assortative case]\label{cor:SDP2:consist:strong:assort}
  Let $A$ be drawn from a strongly assortative $\bBM_m(\Qm)$. Then, the conclusion of Theorem~\ref{thm:consist:SDP2} holds with $(p,q)$ replaced with $(\pminus,\qplus)$ as defined in~\eqref{eq:pminus:qplus}.
\end{cor}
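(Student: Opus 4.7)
The plan is to reduce to Theorem~\ref{thm:consist:SDP2} by a monotonicity (coupling) argument that makes the SA-ordering of Definition~\ref{def:BM:ordering} operational. By the definitions $\pminus = \min_k p_k$ and $\qplus = \max_{k\neq\ell} q_{k\ell}$, every diagonal entry of $\Qm$ is at least $\pminus$ and every off-diagonal entry is at most $\qplus$, so $\bBM_m(\Qm) \ge \bsymBM_{m,K}(\pminus,\qplus)$ in the SA-ordering. It then remains to show that SDP-$2'$ respects this ordering (which is the content of Corollary~\ref{cor:BM:ordering} in Section~\ref{SEC:SDP:RESPECTS:ORDERING}): strong consistency on the smaller model forces strong consistency on the larger one. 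Once this is established, applying Theorem~\ref{thm:consist:SDP2} to $\bsymBM_{m,K}(\pminus,\qplus)$ yields the same probability bound as claimed.

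The monotonicity itself I would prove by a pairwise Bernoulli coupling. Let $A^\star \sim \bsymBM_{m,K}(\pminus,\qplus)$ and couple with $A \sim \bBM_m(\Qm)$ so that for each $i<j$: if $(i,j)$ lies within a common block $k$ (so $(i,j) \in \Stru$), couple $A^\star_{ij} \sim \bern(\pminus)$ and $A_{ij} \sim \bern(p_k)$ with $A^\star_{ij} \le A_{ij}$; if $(i,j)$ lies across blocks $k\neq\ell$, couple $A_{ij} \sim \bern(q_{k\ell})$ and $A^\star_{ij} \sim \bern(\qplus)$ with $A_{ij} \le A^\star_{ij}$. Both are possible because $\pminus \le p_k$ and $q_{k\ell} \le \qplus$. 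Under this coupling, $(A - A^\star)_{ij} \ge 0$ on $\Stru$ and $(A - A^\star)_{ij} \le 0$ off $\Stru$. Now any $X$ feasible for SDP-$2'$ satisfies $X \succeq 0$ and $X \ge 0$, which as noted in the text forces $X_{ij} \in [0,1]$. Since $\Xtru$ is the $\{0,1\}$-indicator of $\Stru$, the matrix $\Xtru - X$ is nonnegative on $\Stru$ and nonpositive off $\Stru$. Combining these two sign patterns,
\begin{align*}
  (A - A^\star)_{ij}\,(\Xtru - X)_{ij} \ge 0 \quad \text{for every } (i,j),
\end{align*}
hence $\ip{A - A^\star,\,\Xtru - X} \ge 0$.

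Now if Theorem~\ref{thm:consist:SDP2} applies to $A^\star$ (which happens on the stated high-probability event, since $(p,q)=(\pminus,\qplus)$ enters exactly as in condition~\eqref{eq:SDP2:consist:cond}), then $\Xtru$ is the unique SDP-$2'$ maximizer of $\ip{A^\star, \cdot}$, giving $\ip{A^\star,\Xtru - X} > 0$ for every feasible $X \neq \Xtru$. Adding the two inequalities,
\begin{align*}
  \ip{A,\Xtru - X} \;=\; \ip{A^\star,\Xtru - X} + \ip{A - A^\star,\Xtru - X} \;>\; 0,
\end{align*}
so $\Xtru$ is the unique maximizer of $\ip{A,\cdot}$ over the feasible set, i.e., SDP-$2'$ is strongly consistent for $A$ on the same event. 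I expect the main obstacle to be the monotonicity-transfer step, not the coupling itself: the argument rests on the feasibility implication $X \in [0,1]^{n \times n}$, which uses both the PSD and the elementwise nonnegativity constraints of SDP-$2'$. Without the $X \ge 0$ constraint the sign analysis of $(A - A^\star)\circ(\Xtru - X)$ collapses, which is why the same reduction need not succeed for other SDPs lacking this constraint, and why SDP-$2'$ (rather than the original SDP-2) is the natural vehicle for the unified treatment.
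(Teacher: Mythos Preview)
Your proposal is correct and follows essentially the same approach as the paper: it combines the deterministic monotonicity step (Lemma~\ref{lem:deterministic:nesting}) and the probabilistic coupling (Corollary~\ref{cor:BM:ordering}) into a single argument, exploiting exactly the sign pattern $(\Xtru - X)_{ij} \ge 0$ on $\Stru$ and $\le 0$ off $\Stru$ for feasible $X \in [0,1]^{n\times n}$. The paper separates these two ingredients into a lemma and a corollary but the content is the same.
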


Note that Theorem~\ref{thm:consist:SDP2} and its corollary automatically apply to SDP-1, because it is a tighter relaxation of the MLE than SDP-$2'$. However, SDP-1 succeeds for the much larger class of \emph{weakly assortative} block models, as reflected in our main result, Theorem~\ref{thm:consist:SDP1} below. Recall the notation $\qs_k$ defined in~\eqref{eq:qs:def} and write $\qsmax = \max_k \qs_k = \max_{k \neq \ell} q_{k\ell}$. The scaled versions $\qsub_k$, $\qsb_k$, and  $\qsubmax, \qsbmax$ are defined based on $\qs_k$ and $\qsmax$ as in~\eqref{eq:p:defs}.
\begin{thm} [Consistency of SDP-1]\label{thm:consist:SDP1}
  Let $A$ be drawn from a weakly assortative $\bBM_m(\Qm)$.
  For any $c_1,c_2 > 0$, let $C_1 := C' \vee \frac{4}{9}(c_1+1)$ and $C_2 := (\sqrt{4(c_1+1)} + C) \vee (6 \sqrt{2 (c_2+1)})$. Assume $\min_k \pub_k \ge C_1 $. Then, if 
  \begin{align}\label{eq:SDP1:consist:cond}
  \min_k \Big[  (\pub_k - \qsub_k) - C_2 \big(\sqrt{\pub_k } + \sqrt{\qsub_k }\big) 
    \Big] &> C \sqrt{\frac{\qsubmax K}{\log n}},
  \end{align}
  SDP-1 is strongly consistent with probability at least $1 - c(Km^{-r} + n^{-r}) -n^{-c_1} - 2m^{-1}n^{-c_2}$.
  \end{thm}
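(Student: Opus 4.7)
The plan is to use a primal–dual witness (PDW) construction. First I would write down the KKT conditions for SDP-1: any optimizer $\Xh$ must admit dual variables $\mu,\nu\in\reals^n$ (for the row-sum and diagonal equalities), $\Gamma\in\reals^{n\times n}$ with $\Gamma\ge 0$ (for $X\ge 0$), and $W\in\psd{n}$ (for $X\succeq 0$), satisfying the stationarity condition
\begin{align*}
 W \;=\; \tfrac{1}{2}(\mu\onev_n^T + \onev_n\mu^T) + \diag^*(\nu) - \Gamma - A
\end{align*}
together with the complementary slackness conditions $\Gamma\circ \Xh=0$ and $W\Xh=0$. To certify that $\Xtru = I_K\otimes\onem_m$ is the \emph{unique} optimum, I would further require strict dual feasibility, namely $\Gamma_{ij}>0$ for every $(i,j)\notin\Stru$ and $\ker(W)=\rangeS(\Xtru)$ (equivalently, $W$ has rank exactly $n-K$). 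Following standard PDW arguments (cf.\ Section~\ref{sec:optim:cond}, which develops this for the blueprint~\eqref{eq:gen:SDP}), these conditions are sufficient for $\SDP\text{-}1(A)=\{\Xtru\}$.

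Next I would construct the witness $\dualtri$ block-by-block. Complementary slackness forces $\Gamma=0$ on $\Stru$, so on the diagonal blocks $\Gamma$ vanishes, and $\mu,\nu$ must be chosen so that the stationarity relation restricted to each diagonal $k$-block is consistent with $W\Xtru=0$. Projecting the stationarity identity onto the range of $\Xtru$ yields linear equations that determine $\mu_k$ and $\nu_k$ (the entries of $\mu,\nu$ corresponding to community $k$) from row sums of $A$ within each block. The extra $n$-dimensional freedom in $\mu$ — the crucial difference between SDP-1 and SDP-$2'$, whose dual has only a scalar for $\ip{\onem_n,X}=n^2/K$ — lets me choose a distinct $\mu_k$ per community, which is exactly what permits handling each $p_k$ versus $\qs_k$ gap separately rather than imposing the uniform gap $\pminus>\qplus$. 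On each off-diagonal $(k,\ell)$-block I would define $\Gamma$ by a rank-one–plus–constant ansatz designed to match the required row sums and leave room for strict positivity when the weak-assortativity gap $p_k - \qs_k$ is positive.

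With $\dualtri$ in hand, the remaining task is to verify $W\succeq 0$ with the correct kernel. Using $W\Xtru=0$ and the block structure, it suffices to show $\onev_K^\perp$-block-projected version of $W$ is strictly positive definite. Writing $W = (\ex W) + (W-\ex W)$, the deterministic mean term has a clean block structure whose smallest eigenvalue on $\rangeS(\Xtru)^\perp$ is controlled by $\min_k(\pb_k-\qsb_k)$. The fluctuation $W-\ex W$ decomposes into (i) the centered adjacency $A-\ex A$, whose spectral norm is bounded by Proposition~\ref{prop:key:adj:concent} with $\sigma^2\asymp \max_k\pb_k/n$, giving a $C\sqrt{\pb_k}\,\log^{1/2}n$-type bound after rescaling; (ii) block-row-sum fluctuations that appear in $\mu,\nu$, handled by Bernstein on independent Bernoulli sums and producing the $\sqrt{\pb_k}+\sqrt{\qsb_k}$ term in~\eqref{eq:SDP1:consist:cond}; and (iii) a cross term from the off-support $\Gamma$ that contributes the $\sqrt{\qsbmax K/\log n}$ term. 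Combining these via a union bound with the given probability budgets $c_1,c_2$ yields the claimed probability.

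The main obstacle is step two, constructing $\Gamma$ on the off-diagonal blocks so that simultaneously (a) $\Gamma\ge 0$ strictly on $\Stru^c$, (b) the resulting $W$ annihilates $\rangeS(\Xtru)$, and (c) $W$ restricted to $\rangeS(\Xtru)^\perp$ can be expressed as a perturbation of a matrix with eigenvalue gap $\asymp \min_k(\pb_k-\qsb_k)$ rather than the uniform gap $\pb-\qb$. The asymmetric dependence on $\qs_k$ (and not on the whole row of $q_{k\ell}$'s) is what makes the weakly assortative bound possible, and achieving it requires balancing $\Gamma$ in an asymmetric manner across $(k,\ell)$ blocks; the book-keeping is analogous to the $K=2$ construction of~\cite{Abbe2014} but must be carried out for general $K$ while respecting both the row-sum and diagonal constraints simultaneously.
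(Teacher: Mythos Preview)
Your proposal follows essentially the same primal--dual witness strategy as the paper, and you correctly isolate the decisive structural point: the $n$ row-sum constraints of SDP-1 yield an $n$-dimensional dual vector $\mu$, which can be taken block-constant with a distinct level $\phi_k$ per community, and this per-block freedom is precisely what permits the weak-assortativity gap $p_k>\qs_k$ in place of the uniform gap $\pminus>\qplus$ required by SDP-$2'$.

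Two technical specifics are glossed over, however, and they are exactly the ones the paper relies on. First, the off-diagonal $\Gamma$ is not a ``rank-one--plus--constant'' ansatz; the paper takes
\[
\Gamma_{\subb{k}\subb{\ell}} \;=\; \tfrac12(\phi_k+\phi_\ell)\,\onem_m \;+\; \projonep A_{\subb{k}\subb{\ell}}\projonep \;-\; A_{\subb{k}\subb{\ell}},
\]
so that elementwise $[\Gamma_{\subb{k}\subb{\ell}}]_{ij}=\tfrac12(\phi_k+\phi_\ell)-\tfrac1m[\dg{i}{\ell}+\dg{j}{k}-\dav{k}{\ell}]$, and nonnegativity reduces to the pairwise constraint $\tfrac12(\phi_k+\phi_\ell)m\ge \dg{i}{\ell}+\dg{j}{k}-\dav{k}{\ell}$, whose right side concentrates near $\qb_{k\ell}$. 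Second --- and this is the linchpin you do not state --- the per-block choice $\phi_k\approx \qs_k$ satisfies all such pairwise constraints simultaneously because of the elementary inequality $q_{k\ell}\le\tfrac12(\qs_k+\qs_\ell)$ (immediate from $q_{k\ell}\le \qs_k$ and $q_{k\ell}=q_{\ell k}\le \qs_\ell$). Without this observation it is not clear that independent per-block choices suffice, and this is exactly where the argument separates from the SDP-$2'$ analysis.

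Two smaller corrections: the $C\sqrt{\qsbmax K}$ term arises from bounding $\mnorm{\onem_{\Stru^c}\circ\Delta}$ via Proposition~\ref{prop:key:adj:concent}, i.e.\ from the off-diagonal noise in $A$, not from $\Gamma$; and uniqueness only requires at least one strictly positive entry in each $\Gamma_{\subb{k}\subb{\ell}}$ (condition~(A3) of Lemma~\ref{LEM:SUFF:COND:EXACT:RECOV}), not strict positivity everywhere off the support.
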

 
 Note that for any weakly assortative $\bBM_m(\Qm)$ with fixed $K$ and constant entries of $\Qm$, condition~\eqref{eq:SDP1:consist:cond} holds for large $n$ and hence SDP-1 is strongly consistent. We show in Section~\ref{sec:failure:SDP-2} that SDP-$2'$ fails in general outside the class of strongly assortative block models.


\begin{rem}\label{rem:SDP2:result:strength}
Our result for SDP-2 can be slightly strengthened by stating~\eqref{eq:SDP2:consist:cond} as in~\eqref{eq:SDP1:consist:cond} with $\pub_k \equiv \pub$ and $\qub_{k,\ell} \equiv \qub$. This gives a better threshold in the case where $\qub \to \infty$ but $\qub/\log n \to 0$.
\end{rem}

\begin{rem}
  One can define strong and weak disassortativity by replacing $p_k$ and $q_{k\ell}$ with $-p_k$ and $-q_{k \ell}$, respectively, in Definition~\ref{def:BM:ordering}. The results then hold if one applies the SDPs to $-A$ in the disassortative case.
\end{rem}

\begin{rem} 
 Another way to express conditions of Theorem~\ref{thm:consist:SDP2} is in terms of the alternative parametrization $(\avgd,\beta)$ where $\avgd := \pb + (K-1) \qb$ is the expected node degree, and $\beta := q/p = \qb/\pb$ is the out-in-ratio. A slight weakening of condition \eqref{eq:SDP2:consist:cond}, using $(a+b)^2 \le 2(a^2 + b^2)$, gives
  \begin{align}\label{eq:beta:lambda:cond}
    (\pb - \qb)^2 \, \gtrsim\, (\pb + \qb K) \log n \;\;\iff\;\; 
      \avgd \,\gtrsim\, \Big( \frac{1+K \beta}{1-\beta}\Big)^2 \log n.
  \end{align}
  where we have used $\avgd \asymp \pb + K \qb$.  We also need $\pb \gtrsim \log n$ which translates to $\avgd \gtrsim (1+K\beta) \log n$, which is implied by~\eqref{eq:beta:lambda:cond}. In particular, for fixed $\beta$, it is enough to have $\avgd = \Omega(K^2 \log n)$ for SDP-$2'$ (and hence SDP-1) to be strongly consistent.
\end{rem}

The proof of Theorem~\ref{thm:consist:SDP1} appears in Section~\ref{sec:proof:consist:SDP1} with some of the more technical details deferred to the appendices.
The proof of Theorem~\ref{thm:consist:SDP2} is similar and appears in Appendix~\ref{sec:proof:consist:SDP2}.

\subsection{Comparison with other consistency results}
Rigorous results about the phase transition in the so-called reconstruction problem for the 2-block balanced PP model, i.e., recovering a labeling positively correlated with the truth, in the sparse regime where $\avgd = O(1)$, have appeared in~\cite{Mossel.et.al.2012,Mossel2013,Massoulie2013} after originally conjectured by~\cite{Decelle2011}. For $K=2$, the problem of exact recovery in PP has recently been studied in~\cite{Abbe2014,Mossel2014}, where the exact recovery threshold is obtained when $\avgd = \Omega(\log n)$, the minimal degree growth required for exact recovery. \cite{Mossel2014} also discusses exact thresholds for weak consistency, i.e., fraction of misclassified labels going to zero. \cite{Abbe2014} also analyzed the MAXCUT SDP showing a consistency threshold within constant factor of the optimal. Since the earlier draft of our manuscript, more refined analyses of SDPs for balanced PP have appeared in~\cite{Hajek2014,Hajek2015}, as well as~\cite{Abbe2015} which obtains the exact threshold for a general SBM, by a two-stage approach with no SDP involved. In~\cite{Hajek2014}, the argument in~\cite{Abbe2014} is refined to show that MAXCUT SDP achieves the threshold of exact recovery with optimal constant, for the case $K=2$. In~\cite{Hajek2015}, the analysis is extended to the general $K$, for an SDP which interestingly is equivalent to what we have called SDP-1, showing that it achieves optimal exact recovery threshold. This threshold is equivalent, up to constants, to that obtained in~\cite{Chen2014}, and hence to~\eqref{eq:beta:lambda:cond} as will be discussed below. The analysis in \cite{Hajek2015} also provides the exact constant and an extension to the unbalanced case. 

For the PP model with general $K$,~\cite{Chen2014} provides sufficient conditions for strong consistency of their nuclear norm relaxation of the MLE. These conditions automatically apply to SDP-$2'$ and SDP-1 since they are tighter relaxations. More
precisely, their model, in the zero outliers case, coincides with $\bsymBM(p,q)$ and their sufficient conditions translate to $(p-q)^2 (n/K)^2 \gtrsim p (n/K)\log n + q n$. A slightly weaker version, obtained by replacing $q$ with $q \log n$, reads $(\pb - \qb)^2 \, \gtrsim\, (\pb + \qb K) \log n $ which is the one we have obtained in~\eqref{eq:beta:lambda:cond} as a consequence of Theorem~\ref{thm:consist:SDP2}.  The stronger version also follows from our proof -- see Remark~\ref{rem:SDP2:result:strength}.  Interestingly, exactly the same condition \eqref{eq:beta:lambda:cond}, is established in~\cite{Cai2014} for
SDP-3, when specialized to $\bsymBM(p,q)$, the case with zero outliers. In other words, results of the form predicted by Theorem~\ref{thm:consist:SDP2} already exist for SDP relaxations of the block model, albeit using different proof techniques. On the other hand, we are not aware of any results like Theorem~\ref{thm:consist:SDP1}, which guarantees success of SDP-1 for weakly assortative block models.  
A somewhat different condition
amounting to $(\pb -\qb)^2 \gtrsim m K^2 \pb$ and $np = K\pb \ge \log
n$ is implied by the results of~\cite{Lei&Rinaldo2013} for spectral clustering
based on the adjacency matrix, which we have called eigenvalue
truncation (EVT). We note that the dependence on $K$ is worse than in the SDP results, among other things.
 This is corroborated empirically in Section~\ref{sec:numeric}, which shows SDPs outperform EVT for larger values of $K$.

We should point out that there is a somewhat parallel line of work regarding relaxations for clustering problems. For example, a variant of SDP-1 (with $\diag(X) = \onev_n$ replaced with $\tr(X) = n$) has been proposed as a relaxation of the $K$-means or normalized $K$-cut problems~\cite{Xing2003,Peng2007}. However, theoretical analysis of SDPs in the clustering context have only recently began. See for example~\cite{Awasthi2014} for a recent analysis, using a probabilistic model of clusters. An earlier line of work reformulates the clustering problem as instances of the planted partition model and analyzes an SDP relaxation for cluster recovery~\cite{Ames2010,Mathieu2010}. The planted $K$-disjoint clique model in~\cite{Ames2010} and the fully random model of~\cite{Mathieu2010} both can be considered as special case of the planted partition model. The analysis in~\cite{Mathieu2010} is in particular interesting for analyzing an SDP with triangle-inequality type constraints and providing approximation bounds relative to the optimal combinatorial solution.

Recently,  a very interesting paper~\cite{Guedon2014} analyzed the performance of SDP relaxations in the sparse regime where $\avgd = O(1)$.   They showed that as long as the feasible region is contained in the so-called Grothendieck set $\{X \succeq 0,\; \diag(X) \le 1\}$, the SDPs can achieve arbitrary accuracy, with high probability, assuming that $(\pb - \qb)^2 / (\pb + \qb K)$ is sufficiently large. These results are complementary to ours and show that all the SDPs in Table~\ref{SDP-table} are capable of approximate recovery in the sparse regime. 


\subsection{Some useful general results}\label{sec:aux:results}
Here we collect some general observations on solutions of SDPs which will be useful in proving Theorems~\ref{thm:consist:SDP2} and~\ref{thm:consist:SDP1}.  Let $\subb{k}$ be the indices of the $k$th community. We have $|S_k| = m$. Let  $X_{\subb{k}\subb{j}}$ be the submatrix of $X$ on indices $\subb{k} \times \subb{j}$, and $X_{\subb{k}}:= X_{\subb{k}\subb{k}}$. Let $\onev_{\subb{k}} \in \reals^n$ be the indicator vector of $\subb{k}$, equal to one on $S_k$ and zero elsewhere. $\onem_{\Stru} \in \{0,1\}^{n \times n}$ denotes the indicator matrix of $\Stru \subset [n]^2$. Let $\basisv{k}{n}$, or simply $e_k$, be $k$th unit vector of $\reals^n$. Let $\Span\{\onev_{\subb{k}}\}$ and $\Span\{\onev_{\subb{k}}\}^\perp$  denote the subspace spanned by $\{\onev_{\subb{2}},\onev_{\subb{2}}, \dots, \onev_{\subb{K}}\}$ and its orthogonal complement.
Let $\dg{}{k} \in \reals^n$ be the vector of node degrees relative to the subgraph induced by $\subb{k}$, $\dg{}{k} = A \onev_{\subb{k}} = A_{\subb{k}} \onev_m$. Note that $[\dg{}{k}]_{\subb{k}} \in \reals^m$ is the subvector of $\dg{}{k}$ on indices $\subb{k}$. 
    


\subsubsection{SDPs respect SA-ordering}\label{SEC:SDP:RESPECTS:ORDERING}
The following lemma formalizes an intuitive fact on how SDPs interact with the SA-ordering of Definition~\ref{def:BM:ordering}.  The proof is given in Appendix~\ref{sec:proof:SDP:respects:ordering}. 
\begin{lem}\label{lem:deterministic:nesting}
  Let $\At \in \Symm{n}$ be obtained from $A$ by setting some elements off $\Stru$ to zero and some elements on $\Stru$ to one. Then, for either of SDP-1 or \SDPp{2},
  \begin{align*}
    \SDP(A) = \{\Xtru\} \quad \implies \quad \SDP(\At) = \{\Xtru\}.
  \end{align*}
\end{lem}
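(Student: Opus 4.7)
The plan is to exploit that both SDP-1 and SDP-$2'$ have feasible regions that do not depend on the input matrix, while their objectives are linear in $A$, namely $\langle A, X\rangle$. Hence the question reduces to a short monotonicity argument comparing the objective gap between $\Xtru$ and any other feasible $X$ under the two inputs $A$ and $\widetilde{A}$.

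First I would record the coordinate-wise bounds on any feasible $X$: by the $2\times 2$ principal-minor identity $X_{ii}X_{jj}-X_{ij}^2 \geq 0$, the constraints $X\succeq 0$, $X\geq 0$, and $\diag(X)=\onev_n$ together force $0\leq X_{ij}\leq 1$ (this is already noted in the excerpt for SDP-1). Since $\Xtru = I_K\otimes \onem_m$ is $\{0,1\}$-valued and equals $1$ precisely on $\Stru$, this bound yields $X_{ij}-[\Xtru]_{ij}\leq 0$ on $\Stru$ and $X_{ij}-[\Xtru]_{ij}\geq 0$ off $\Stru$.

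Next I would set $E := \widetilde{A}-A$. The hypothesis on $\widetilde{A}$ gives $E_{ij}\geq 0$ on $\Stru$ and $E_{ij}\leq 0$ off $\Stru$. Combining the two sign patterns, each term $E_{ij}(X_{ij}-[\Xtru]_{ij})$ is nonpositive, so summing over $(i,j)$ yields
\begin{equation*}
\langle E, X-\Xtru\rangle \;=\; \bigl(\langle \widetilde{A}, X\rangle - \langle \widetilde{A}, \Xtru\rangle\bigr) - \bigl(\langle A, X\rangle - \langle A, \Xtru\rangle\bigr) \;\leq\; 0,
\end{equation*}
which rearranges to $\langle \widetilde{A}, X\rangle - \langle \widetilde{A}, \Xtru\rangle \leq \langle A, X\rangle - \langle A, \Xtru\rangle$ for every feasible $X$.

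Finally, the assumption $\SDP(A)=\{\Xtru\}$ says the right-hand side is strictly negative for every feasible $X\neq \Xtru$; the inequality then transfers to $\widetilde{A}$, showing that $\Xtru$ remains the unique maximizer of the SDP with input $\widetilde{A}$. There is no substantive obstacle here: the argument is essentially an observation about linearity plus a matched sign pattern. The only subtlety worth flagging is the derivation of $X_{ij}\leq 1$ for SDP-1, where it is not imposed explicitly but follows from $X\succeq 0$, $X\geq 0$, and $\diag(X)=\onev_n$; for SDP-$2'$ the same derivation applies since it retains these three constraints after the modification described in Section~\ref{sec:consist:results}.
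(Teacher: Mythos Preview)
Your proof is correct and follows essentially the same monotonicity idea as the paper's own argument: both exploit that any feasible $X$ satisfies $0\le X\le 1$, that $X_0$ equals $1$ on $\Stru$ and $0$ off it, and that the perturbation $\widetilde{A}-A$ has the matching sign pattern. The paper proceeds in two steps (first raising entries on $\Stru$, then zeroing entries off $\Stru$), whereas you do it in a single inequality $\langle E,\,X-X_0\rangle\le 0$; this is slightly cleaner but not a different method.
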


The lemma generalizes to any optimization problem that maximizes $X \mapsto \ip{A,X}$, and has its feasible region included in $\{X: 0 \le X \le 1\}$.
An immediate consequence is the following probabilistic version for \SBM s, stated conditionally on the true cluster matrix $\Xtru$. 
\begin{cor}\label{cor:BM:ordering}
  Assume $\bBM_m(\Qmt) \ge \bBM_m(\Qm)$, and let $\At \sim \bBM_m(\Qmt)$ and $A \sim \bBM_m(\Qm)$. Then, for either of SDP-1 or \SDPp{2},
  \begin{align*}
    \pr\Big( \SDP(\At) = \{\Xtru \} \Big ) \ge \pr\Big( \SDP(A) = \{\Xtru \} \Big ).
  \end{align*}
\end{cor}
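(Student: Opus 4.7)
The plan is to construct a monotone coupling of $A$ and $\At$ on a common probability space, then invoke Lemma~\ref{lem:deterministic:nesting} pointwise. Using common uniform random variables to generate both adjacency matrices turns the SA-ordering $\bBM_m(\Qmt) \ge \bBM_m(\Qm)$ into a deterministic transformation of exactly the type the lemma handles.

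Concretely, I would draw i.i.d.\ $U_{ij} \sim \mathrm{Unif}[0,1]$ for $i < j$, let $k(i)$ denote the community of node $i$, and set (then symmetrize)
\begin{align*}
A_{ij} &:= \mathbf{1}\{U_{ij} \le p_{k(i)}\}, &\At_{ij} &:= \mathbf{1}\{U_{ij} \le \pt_{k(i)}\}, &(i,j) \in \Stru,\; i \neq j, \\
A_{ij} &:= \mathbf{1}\{U_{ij} \le q_{k(i)k(j)}\}, &\At_{ij} &:= \mathbf{1}\{U_{ij} \le \qt_{k(i)k(j)}\}, &(i,j) \notin \Stru.
\end{align*}
The diagonal entries are fixed by whatever convention the paper adopts (and may safely be set identically for $A$ and $\At$ on the diagonal, which lies in $\Stru$). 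Since the $U_{ij}$'s are independent and the Bernoulli marginals are preserved, $A \sim \bBM_m(\Qm)$ and $\At \sim \bBM_m(\Qmt)$ marginally under this joint law.

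The SA-ordering $\Qmt \ge \Qm$ gives $\pt_k \ge p_k$ for every $k$ and $\qt_{k\ell} \le q_{k\ell}$ for every $k \neq \ell$. Therefore, pointwise on the sample space, $\At_{ij} \ge A_{ij}$ on $\Stru$ and $\At_{ij} \le A_{ij}$ off $\Stru$. In other words, $\At$ arises from $A$ by turning some on-$\Stru$ zeros into ones and some off-$\Stru$ ones into zeros, which is precisely the class of transformations covered by Lemma~\ref{lem:deterministic:nesting}. Applying that lemma sample-by-sample yields the event inclusion
\begin{align*}
\{\SDP(A) = \{\Xtru\}\} \;\subseteq\; \{\SDP(\At) = \{\Xtru\}\}
\end{align*}
for either SDP-1 or \SDPp{2}, and taking probabilities gives the stated inequality.

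The argument is essentially bookkeeping once the coupling is in place; the main care required is checking that the coupling is genuinely symmetric and that whatever diagonal convention is used does not violate the hypothesis of Lemma~\ref{lem:deterministic:nesting}. No further probabilistic input is needed, since all of the structural content has already been absorbed into the lemma.
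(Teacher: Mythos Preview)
Your proof is correct and follows essentially the same approach as the paper: construct a monotone coupling of $A$ and $\At$ so that $\At \ge A$ on $\Stru$ and $\At \le A$ off $\Stru$, then apply Lemma~\ref{lem:deterministic:nesting} pointwise. The only cosmetic difference is that the paper builds the coupling by drawing $A$ first and then independent auxiliary Bernoulli variables $R_{ij}$ (setting $\At_{ij} = 1-(1-A_{ij})(1-R_{ij})$ on $\Stru$ and $\At_{ij} = A_{ij} R_{ij}$ off $\Stru$), whereas you use common uniform thresholds; both yield the same monotonicity and marginals.
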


This corollary allows us to transfer consistency results for SDPs regarding a particular \SBM\  to any \SBM\  that dominates it. It also allows us to inflate off-diagonal entries of $\Qm$ for a general $\bBM(\Qm)$ without loss of generality. More precisely, we will assume in the course of the proof that off-diagonal entries of $\Qm$ satisfy certain lower bounds to ensure concentration. These lower bounds can then be safely discarded at the end by Corollary~\ref{cor:BM:ordering}.


\subsubsection{Optimality conditions}\label{sec:optim:cond}
Consider the following general SDP:
\begin{align}\label{eq:gen:SDP}
  \def\arraystretch{1.3}
  \begin{array}{ll}
    \max & \ip{A,X} \\
    \text{s.t.} &  \diag(X) = \onev_n, \; \Lc_2(X) = b_2 \\
    & X \succeq 0, \, X \ge 0 
  \end{array}
\end{align}
where $\Lc_2$ is a linear map from $\Symm{n}$ to $\reals^s$ for some integer $s$, and $b_2 \in \reals^s$. This is a blueprint for both SDP-1 and  SDP-$2'$. Let $\Lc_1(X) := \diag(X)$ and $b_1 = \onev_n$. Then, 
  $\Lc(X) := (\Lc_1(X),\Lc_2(X)) = (b_1,b_2) =: b$
summarizes the linear constraints for the SDP. The dual problem is
\begin{align*}
  \def\arraystretch{1.3}
  \begin{array}{ll}
    \min & \ip{\mu,b_2} + \sum_i \nu_i \\
    \text{s.t.} 
    & \Lc_2^*(\mu) + \diag^*(\nu) \succeq A + \Gamma, \quad \Gamma \ge 0,
  \end{array}
\end{align*}
where $\mu \in \reals^s$, $\nu \in \reals^n$ and $\Gamma \in \Symm{n}$, and the minimization is over the triple $\dualtri$ of dual variables. $\Lc_2^*$ is the adjoint of $\Lc_2$ and $\diag^*$ is the adjoint of $\diag$.
Letting 
\begin{align}\label{eq:Lambda:def:gen}
\Lambda := \Lambda(\mu,\nu,\Gamma)  := \Lc_2^*(\mu) + \diag^*(\nu) -A -\Gamma,
\end{align}
the (KKT) optimality conditions are 
\begin{align*}
  \begin{array}{lll}
    \text{Primal Feas. } & X \succeq 0, \; X \ge 0,\; \Lc(X) = b, \\
    \text{Dual Feas.} & \Lambda \succeq 0, \; \Gamma \ge 0,\\
    \text{Comp. Slackness (a)} & \Gamma_{ij} X_{ij} = 0, \quad \forall i,j, & \text{(CSa)}\\
    \text{Comp. Slackness (b)} &\ip{\Lambda,X} = 0. & \text{(CSb)}
  \end{array}
\end{align*}
Another way to state (CSa) is to write $\Gamma \circ X = 0$ where $\circ$ denotes the Schur (element-wise) product of matrices. 

The primal-dual witness approach that we will use in the proofs is based on finding a pair of primal and dual solutions that simultaneously satisfy the KKT conditions. The pair then witnesses strong duality between the primal and dual problems implying that it is an optimal pair.


\subsubsection{Sufficient conditions for exact recovery}

We would like to obtain sufficient conditions under which the true cluster matrix $\Xtru =  I_K \otimes \onem_m$
is the unique solution of the primal SDP. Complementary slackness (a), or (CSa), implies that we need $\Gamma_{\subb{k} } = 0$ for all $k$, while we are free to choose $\Gamma_{\subb{k}\subb{j}}$ for $j\neq k$,  using the submatrix notation.

Since both $\Xtru$ and $\Lambda$ are PSD, (CSb) is equivalent to $\Lambda \Xtru = 0$, which is in turn equivalent to $\rangeS(\Xtru) \subset \ker(\Lambda)$. Note that $\Xtru$ has $K$ nonzero eigenvalues, all equal to $m$, corresponding to eigenvectors $\{\onev_{\subb{k}}\}_{k=1}^K$, where $\onev_{\subb{k}} \in \reals^n$ is the indicator vector of $\subb{k}$. Hence, $\rangeS(\Xtru) = \Span\{\onev_{\subb{k}}\}$, and (CSb) for $\Xtru$ is equivalent to 
\begin{align*}
  \Span\{\onev_{\subb{k}}\} \subset \ker(\Lambda)
\end{align*}
The following lemma, proved in Appendix~\ref{sec:proof:suff:cond:exact:recov}, gives conditions for $\Xtru$ to be the unique optimal solution.

\begin{lem}\label{LEM:SUFF:COND:EXACT:RECOV}
  Assume that $\Gamma$ is dual feasible (i.e., $\Gamma \ge 0$), and for some $\mu \in \reals$ and $\nu \in \reals^n$,
  \begin{itemize}
    \item[(A1)] $\ker\big(\Lambda(\mu,\nu,\Gamma)\big) = \Span\{\onev_{\subb{k}}\}$, \; and \; $\Lambda(\mu,\nu,\Gamma) \succeq 0$,
    \item[(A2)] $\Gamma_{\subb{k}} = 0,\;\forall k$, 
    \item[(A3)] Each $\Gamma_{\subb{k}\subb{\ell}}$, $k \neq \ell$ has at least one nonzero element.
  \end{itemize}
  Then $\Xtru$ is the unique primal optimal solution, and $\dualtri$ is dual optimal.
\end{lem}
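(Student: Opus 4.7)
The plan is to run the standard primal-dual witness argument: show that the proposed primal $\Xtru$ together with the given dual triple $\dualtri$ satisfies all the KKT conditions from Section~\ref{sec:optim:cond}, which certifies joint optimality, and then extract uniqueness from the complementary slackness relations forced on any other primal optimum.

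First I would check primal feasibility of $\Xtru = I_K \otimes \onem_m$ (the constraint $\Lc_2(\Xtru)=b_2$ is built into the SDP by construction, $\diag(\Xtru)=\onev_n$ is immediate, and $\Xtru \succeq 0$, $\Xtru \ge 0$ are clear). Dual feasibility of $\dualtri$ is exactly $\Lambda\succeq 0$ from (A1) together with the hypothesis $\Gamma\ge 0$. For complementary slackness, (CSa) holds because $\supp(\Xtru) = \Stru = \bigcup_k \subb{k}\times\subb{k}$, on which $\Gamma$ vanishes by (A2). For (CSb), using $\Xtru \succeq 0$ and $\Lambda\succeq 0$, the identity $\ip{\Lambda,\Xtru}=0$ is equivalent to $\Lambda \Xtru = 0$, i.e.\ $\rangeS(\Xtru) \subset \ker(\Lambda)$; since $\rangeS(\Xtru) = \Span\{\onev_{\subb{k}}\}$ (the vectors $\onev_{\subb{k}}$ are the $K$ eigenvectors of $\Xtru$ with eigenvalue $m$), (A1) gives equality of these two subspaces, so (CSb) holds. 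At this point $\Xtru$ is primal optimal and $\dualtri$ is dual optimal.

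For uniqueness, let $X^\star$ be any primal optimal matrix. The weak-duality chain
\begin{align*}
\ip{A,X^\star} \;=\; \ip{\mu,b_2} + \sum_i \nu_i - \ip{\Gamma,X^\star} - \ip{\Lambda,X^\star}
\end{align*}
together with $\Gamma,X^\star \ge 0$ and $\Lambda,X^\star \succeq 0$ forces $\ip{\Gamma,X^\star}=0$ and $\ip{\Lambda,X^\star}=0$ at optimality. The second relation gives $\Lambda X^\star = 0$, so every column of $X^\star$ lies in $\ker(\Lambda) = \Span\{\onev_{\subb{k}}\}$, i.e., is constant on each $\subb{k}$; by symmetry of $X^\star$, the rows are also block-constant, hence there exist scalars $c_{k\ell}=c_{\ell k}$ with $X^\star_{ij}=c_{k\ell}$ for $(i,j)\in \subb{k}\times\subb{\ell}$. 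The diagonal constraint $\diag(X^\star)=\onev_n$ pins $c_{kk}=1$, and the first relation $\Gamma\circ X^\star = 0$ combined with assumption (A3) — which supplies at least one nonzero entry of $\Gamma$ in every off-diagonal block — forces $c_{k\ell}=0$ for $k\neq \ell$. Thus $X^\star = \sum_k \onev_{\subb{k}}\onev_{\subb{k}}^T = \Xtru$.

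There isn't really a hard step here; the only subtlety is bookkeeping the two complementary slackness relations carefully enough to recover the full block structure of any competing optimum. The roles of (A1) and (A3) are precisely complementary: (A1) kills the off-block freedom in the \emph{row/column-space} direction, while (A3) kills it in the \emph{entrywise nonnegativity} direction; (A2) is what lets $\Xtru$ itself be compatible with (CSa). The heavy lifting for both Theorems~\ref{thm:consist:SDP2} and~\ref{thm:consist:SDP1} is therefore pushed into actually constructing $\dualtri$ satisfying (A1)–(A3) with high probability under the block-model randomness, not into this lemma.
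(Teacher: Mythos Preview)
Your proof is correct and follows essentially the same route as the paper's. The only cosmetic difference is that the paper isolates the block-constancy step as a separate lemma (showing $\rangeS(X)\subset\Span\{\onev_{\subb{k}}\}$ forces $X=B\otimes\onem_m$ via the eigendecomposition of $X$), whereas you derive it inline from ``columns in $\ker(\Lambda)$ plus symmetry''; the use of (A2), (A3), and the diagonal constraint to pin down the block entries is identical.
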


Note that condition (A1) is satisfied if for some $\eps > 0$,
\begin{align}
  \label{eq:A1:equiv:1}  \Lambda \onev_{\subb{k}} &= 0, \quad \forall k\\
  \label{eq:A1:equiv:2} u^T \Lambda u &\ge \eps \|u\|_2^2, \quad \forall u \in \Span\{\onev_{\subb{k}}\}^\perp.
\end{align}


\subsection{Proof of Theorem~\ref{thm:consist:SDP1}: primal-dual witness for SDP-1}\label{sec:proof:consist:SDP1}
Let $\Star_i = \bv{i} \onev_n^T + \onev_n \bv{i}^T \in \Symm{n}$ where $\bv{i} = \basisv{i}{n}$ is the $i$th standard basis vector in $\reals^n$. We note that $\ip{X,\Star_i} = \tr(X\Star_i)= 2(X \onev_n)_i$. Thus, SDP-1 is an instance of~\eqref{eq:gen:SDP}, with $\Lc_2(X) = \big( \ip{X,\Star_i} \big)_{i=1}^n$  and $b_2 = 2 m \onev_n$.
The corresponding adjoint operator is
  $\Lc_2^*(\mu) = \sum_{i=1}^n \mu_i \Star_i  = \mu \onev_n^T + \onev_n \mu^T$.
Thus (cf.~\eqref{eq:Lambda:def:gen}),
\begin{align}
  \Lambda = \Lambda(\mu,\nu,\Gamma) = (\mu \onev_n^T + \onev_n \mu^T) + \diag^*(\nu) -A - \Gamma.
\end{align}
The following summarizes our primal-dual construction in this case:
\begin{align}
\nu_{\subb{k}} &= [\dg{}{k}]_{\subb{k}} - \mubar_k m \onev_m, \label{eq:nu:def:SDP1} 
\quad \mu_{\subb{k}} := \frac12 \mubar_k \onev_m, \\ 
\begin{split}
  \label{eq:Gamma:def:SDP1}
  \Gamma_{\subb{k}} &:= 0,  \\
  \Gamma_{\subb{k}\subb{\ell}} &:= 
     \mu_{\subb{k}} \onev_m^T + \onev_m \mu_{\subb{\ell}}^T  + \projonep A_{\subb{k}\subb{\ell}} \projonep - A_{\subb{k}\subb{\ell}},  \\
     &= \frac12(\mubar_k + \mubar_\ell) \onem_m   + \projonep A_{\subb{k}\subb{\ell}} \projonep - A_{\subb{k}\subb{\ell}},
      \quad k \neq \ell
\end{split}
\end{align}
for some numbers $\{\mubar_k\}_{k=1}^K$ to be determined later. Note that $\mu$ is chosen to be constant over blocks, but these constants can vary between blocks. We have the following analogue of Lemma~\ref{LEM:VALID:GAMMA:SDP2}. Recall that $\onem_{\Stru^c}$ is the indicator matrix of $\Stru^c$ where $\Stru$ is the support of $\Xtru$.

\begin{lem}\label{LEM:VALID:GAMMA:SDP1}
  Let $(\mu,\nu,\Gamma)$ be as defined in~\crefrange{eq:nu:def:SDP1}{eq:Gamma:def:SDP1}.  Then, $\Gamma$ verifies (A2) and~\eqref{eq:A1:equiv:1} holds. In addition, 
  \begin{itemize}
  \setlength\itemsep{0em}
    \item[(a)] $\Gamma$ is dual feasible, i.e. $\Gamma \ge 0$, if for all $i\in \subb{k}, j \in \subb{\ell}, \ell \neq k$, 
    \begin{align}\label{eq:mu:low:bound:SDP1}
       \frac12(\mubar_k + \mubar_\ell) m \ge \dg{i}{\ell}  + \dg{j}{k} - \dav{k}\ell.
    \end{align}
    and satisfies (A3) if at least one inequality is strict for each pair $k \neq \ell$.
    \item[(b)] $\Gamma$ verifies~\eqref{eq:A1:equiv:2} if for $\rho_k := \min_{i \in \subb{k}} \dg{i}{k} /m$,
    \begin{equation}\label{eq:Delta:up:bound:SDP1}
      \min_k \big[ (\rho_k - \mubar_k) m - \mnorm{\Delta_k}\big] > \mnorm{\onem_{\Stru^c} \circ \Delta}.
    \end{equation}
  \end{itemize}
\end{lem}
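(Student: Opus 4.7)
The plan is to verify each assertion by a direct block-wise computation, exploiting the fact that $\Gamma$ is engineered precisely so that $\Lambda$ vanishes on $\Span\{\onev_{\subb{k}}\}$ and so that $\Gamma$ contributes nothing to quadratic forms on its orthogonal complement. Property (A2) is immediate from $\Gamma_{\subb{k}} := 0$. To verify~\eqref{eq:A1:equiv:1}, compute $[\Lambda \onev_{\subb{k}}]_i$ for $i \in \subb{r}$ in the two cases $r=k$ and $r \neq k$. For $i \in \subb{k}$, using $\Gamma_{\subb{k}} = 0$ together with $\mu_i = \tfrac12 \mubar_k$ and $\nu_i = \dg{i}{k} - \mubar_k m$, the sum telescopes to $m\mubar_k + \nu_i - \dg{i}{k} = 0$. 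For $i \in \subb{r}$ with $r \neq k$, the key observation is that $\projonep \onev_m = 0$ kills the $\projonep A_{rk} \projonep \onev_m$ term, leaving $\sum_{j \in \subb{k}} \Gamma_{ij} = \tfrac12(\mubar_r + \mubar_k) m - \dg{i}{k}$, and the remaining contributions from $\mu$ and $-A$ again cancel exactly.

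For (a), expand $\Gamma_{ij}$ entry-wise for $(i,j) \in \subb{k} \times \subb{\ell}$ with $k \neq \ell$, using $\projonep = I_m - m^{-1}\onev_m\onev_m^T$. The $A_{ij}$ contribution from $\projonep A_{k\ell} \projonep$ cancels the explicit $-A_{k\ell}$ in the definition of $\Gamma_{k\ell}$, and the two cross terms produce the column/row sums $\dg{j}{k}$ and $\dg{i}{\ell}$, while the quadruple-$\onev$ term contributes $\dav{k}{\ell}/m$. Rearranging $\Gamma_{ij} \ge 0$ and multiplying through by $m$ yields exactly~\eqref{eq:mu:low:bound:SDP1}. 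If the inequality is strict for at least one $(i,j)$ in every off-diagonal block $(k,\ell)$, then the corresponding $\Gamma_{ij} > 0$, verifying (A3).

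For (b), fix $u \in \Span\{\onev_{\subb{k}}\}^\perp$, i.e., $\onev_m^T u_{\subb{k}} = 0$ for every $k$. The rank-two part satisfies $u^T(\mu \onev_n^T + \onev_n \mu^T)u = 2(\onev_n^T u)(\mu^T u) = 0$ since $\onev_n^T u = 0$. More importantly, $u^T \Gamma u = 0$: the diagonal blocks vanish by (A2), and for $k \neq \ell$ the constant piece $\tfrac12(\mubar_k+\mubar_\ell)\onev_m\onev_m^T$ is annihilated by $u_{\subb{k}} \perp \onev_m$, while $\projonep u_{\subb{k}} = u_{\subb{k}}$ forces $u_{\subb{k}}^T(\projonep A_{k\ell} \projonep - A_{k\ell})u_{\subb{\ell}} = 0$. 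Hence $u^T \Lambda u = u^T \diag^*(\nu) u - u^T A u$. The first term is at least $\sum_k (\rho_k - \mubar_k)m \,\|u_{\subb{k}}\|^2$ by the definition of $\rho_k$.

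Writing $A = \ex A + \Delta$, the block-constant $\ex A$ contributes nothing (up to a harmless diagonal-correction term that only helps the bound) for the same orthogonality reason, so splitting $\Delta$ into its diagonal blocks $\{\Delta_k\}$ and the off-diagonal mask $\onem_{\Stru^c}\circ\Delta$ gives $u^T A u \le \sum_k \mnorm{\Delta_k}\|u_{\subb{k}}\|^2 + \mnorm{\onem_{\Stru^c}\circ\Delta}\|u\|^2$. Combining yields
\[
u^T \Lambda u \ge \Big(\min_k\big[(\rho_k - \mubar_k)m - \mnorm{\Delta_k}\big] - \mnorm{\onem_{\Stru^c}\circ\Delta}\Big)\|u\|^2,
\]
which is strictly positive under~\eqref{eq:Delta:up:bound:SDP1}, yielding a valid $\eps > 0$ in~\eqref{eq:A1:equiv:2}. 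The main conceptual step --- and the only nontrivial one --- is recognizing that the $\projonep A_{k\ell}\projonep - A_{k\ell}$ piece of $\Gamma_{k\ell}$ was chosen precisely to force $u^T\Gamma u = 0$ on $\Span\{\onev_{\subb{k}}\}^\perp$; this decouples the dual certificate from the randomness in the off-diagonal blocks of $A$ and reduces the curvature estimate to separate operator-norm control of the diagonal perturbations $\Delta_k$ and the off-diagonal residual $\onem_{\Stru^c}\circ\Delta$, which is what makes weak assortativity (rather than strong) suffice.
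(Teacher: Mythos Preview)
Your proposal is correct and follows essentially the same route as the paper. The only organizational difference is in part~(b): the paper expands $u^T\Lambda u$ block-by-block via $\sum_{k,\ell} u_k^T \Lambda_{\subb{k}\subb{\ell}} u_\ell$ and simplifies each $\Lambda_{\subb{k}\subb{\ell}}$ separately, whereas you handle the four summands of $\Lambda = (\mu\onev_n^T + \onev_n\mu^T) + \diag^*(\nu) - A - \Gamma$ globally, first showing $u^T(\mu\onev_n^T+\onev_n\mu^T)u = 0$ and $u^T\Gamma u = 0$ directly; both computations produce the same bound and rely on the same key observation that $\projonep u_{\subb{k}} = u_{\subb{k}}$ neutralizes the $\projonep A_{\subb{k}\subb{\ell}}\projonep - A_{\subb{k}\subb{\ell}}$ term.
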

This lemma amounts to a set of deterministic conditions for the success of SDP-1.   To complete the proof of Theorem~\ref{thm:consist:SDP1}, we develop a probabilistic analogue by choosing $\mubar_k \approx \qsb_k$ and using the key inequality $ \qb_{k \ell} \le \frac12 (\qsb_k + \qsb_\ell)$. See Appendix~\ref{sec:prob:cond:bm} for details.


\section{Failure of \SDPp{2} in the absence of strong assortativity}
\label{sec:failure:SDP-2}
We now show that strong assortativity is a necessary condition for exact recovery in SDP-2$'$. For this purpose, it is enough to focus on the noiseless case, i.e., when the input to the SDP is the mean matrix of the block model. If SDP-2$'$ fails on exact recovery of the true population mean, there is no hope of recovering its noisy version, i.e., the adjacency matrix. The following result is deterministic and non-asymptotic. In particular, it holds without any constraints on the expected degrees (besides those imposed by assortativity assumptions).  We will state it in a slightly more general form than is needed here, including the case of general block sizes.   Keeping consistency with earlier notation, we let $\onem_{S_k S_\ell} \in \{0,1\}^{n \times n}$ be the indicator matrix of the set $S_k \times S_\ell$, and $\onem_{S_k} := \onem_{S_k S_k}$. Similalry, $I_{S_k}$ is the ${n \times n}$ identity matrix with elements outside $S_k \times S_k$ set to zero, i.e., $\onem_{S_k S_{\ell}}$ is not a submatrix of $\onem_n$, but a masked version of it.

\begin{prop}\label{prop:failure:SDP-2}
	Let $\ex[A]$ be the mean matrix of a weakly assorative block model. Assume that the blocks are indexed by $S_k \subset [n]$ where $|S_k| = \bs_k$, for $k=1,\dots,K$.   For some $I \subset [n]  = \{1, \dots, n\}$, to be determined, consider a solution of the form
	\begin{align}\label{eq:blk:diag:solution:1}
		X = \sum_{k \in I} \sum_{\ell \in I} \alpha_{k \ell} \onem_{S_k S_\ell}
		+ \sum_{ k \notin I} \big[ \beta_k \onem_{S_k} +  (1-\beta_k) I_{S_k} \big], \quad \alpha_{k \ell} = \alpha_{\ell k},
	\end{align} 
	with $\alpha_{kk} = 1, \; k \in I$ and $\beta_k \in [0,1)$ for $k \notin I$. 
	%
	Then the following holds:
	\begin{itemize}\itemsep=1.5ex
		\item[(a)]  Assume that $\argmax_{k \neq \ell} q_{k\ell} = \{(k_0,\ell_0)\}$ and let $I := \{ k:\; p_k \ge q_{k_0 \ell_0}\}$. Furthermore, let $m := \min_k \bs_k$,~$\xi_k := n_k/m$ and
		\begin{align}\label{eq:alpha:k0:ell0}
			\alpha^*_{k_0 \ell_0} := \frac{1}{2 \xi_{k_0} \xi_{\ell_0}} \Big[
			\Big(1- \frac1m\Big) \sum_{k \notin I} \xi_k - \sum_{k \in I} \xi_k(\xi_k-1) \Big].
		\end{align}
		If $\alpha^*_{k_0 \ell_0} \in [0,1]$, then SDP-2$'$, applied with $A = \ex[A]$ and $m = \min_k \bs_k$ has~\eqref{eq:blk:diag:solution:1} as solution,  with $\alpha_{k \ell} = \alpha_{k_0 \ell_0}^* 1\{ \{k,\ell\} = \{k_0,\ell_0\}\}$ and $\beta_k = 0$ for all $k \notin I$.

		\item[(b)] Assume that the given block model is balanced and let $I^c := [K] \setminus I = \{ k:\; p_k < q_{k_0 \ell_0}\}$ where $I$ and $(k_0,\ell_0)$ are defined in part~(a). If $|I^c| \le 2$, then the conclusion of part~(a) holds with $\alpha^*_{k_0 \ell_0} = \frac12(1-1/m) | I^c|$.

		\item[(c)] Assume that the given block model is balanced and weakly but not strongly assortative. Let $\text{SDP}_{sol}(\cdot)$ be the solution set of SDP-2$'$. Then, $\text{SDP}_{sol}(\ex [A]) \neq \{X_0\}$.

	\end{itemize}
\end{prop}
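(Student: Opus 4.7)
The strategy for all three parts is a primal-dual witness for $\SDPp{2}$. Writing $\Lambda(\mu,\nu,\Gamma)=\mu\onem_n+\diag^*(\nu)-\ex[A]-\Gamma$, the KKT conditions for optimality of a primal candidate $X$ are primal feasibility, $\Gamma\ge 0$, $\Lambda\succeq 0$, and the complementary slackness relations $\Gamma\circ X=0$ and $\Lambda X=0$. Part~(a) will be established by constructing such a witness for the candidate $X$ in~\eqref{eq:blk:diag:solution:1} (with $\beta_k=0$), and parts~(b) and~(c) will follow by specialization and a direct objective comparison.

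For part~(a), primal feasibility is routine: $\diag(X)=\onev_n$ holds since $\alpha_{kk}=1$ for $k\in I$ and $\beta_k+(1-\beta_k)=1$ for $k\notin I$; the affine identity $\ip{\onem_n,X}=n^2/K$ is precisely the equation defining $\alpha^*_{k_0\ell_0}$ via~\eqref{eq:alpha:k0:ell0}; $X\ge 0$ follows from $\alpha_{k\ell},\beta_k\in[0,1]$; and $X\succeq 0$ follows by writing the $I$-part of $X$ as $V[\alpha_{k\ell}]V^T$ with $V=[\onev_{S_k}]_{k\in I}$, whose coefficient matrix $I_{|I|}+\alpha^*_{k_0\ell_0}(e_{k_0}e_{\ell_0}^T+e_{\ell_0}e_{k_0}^T)$ is PSD precisely when $\alpha^*_{k_0\ell_0}\in[0,1]$, while each $I^c$-diagonal block is a convex combination of PSD matrices. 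For the dual, I would set $\mu:=q_{k_0\ell_0}$, $\nu_i:=(p_k-\mu)n_k$ for $i\in S_k$ when $k\in I$, $\nu_i:=p_k-\mu$ for $i\in S_k$ when $k\notin I$, and define $\Gamma_{ij}:=0$ wherever $X_{ij}>0$ and $\Gamma_{ij}:=\mu-\ex[A]_{ij}$ wherever $X_{ij}=0$. Nonnegativity $\Gamma\ge 0$ then follows from $q_{k_0\ell_0}$ being the maximum of $q_{k\ell}$ over distinct pairs and from $\mu>p_k$ for $k\notin I$. A block-by-block calculation then shows that $\Lambda$ vanishes off the $I$-diagonal blocks and equals $(p_k-\mu)(n_kI_{S_k}-\onem_{S_k})$ on each $S_k\times S_k$ with $k\in I$, which is PSD since $p_k\ge\mu$ on $I$ and $n_kI-\onem$ has spectrum $\{0,n_k\}$. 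The kernel identities $\Lambda\onev_{S_k}=0$ for $k\in I$ and $\Lambda e_j=0$ for $j$ in any $I^c$-block are automatic from this block structure, and $\rangeS(X)$ is spanned by these vectors, so $\Lambda X=0$.

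Part~(b) is a direct specialization: with $\xi_k\equiv 1$ and $m=n/K$, formula~\eqref{eq:alpha:k0:ell0} collapses to $\alpha^*_{k_0\ell_0}=\tfrac12(1-1/m)|I^c|$, which lies in $[0,1]$ whenever $|I^c|\le 2$, so part~(a) applies verbatim. For part~(c), the WA-but-not-SA hypothesis implies $|I^c|\ge 1$. When $|I^c|\le 2$, part~(b) yields a feasible primal candidate with $\alpha^*_{k_0\ell_0}>0$, and a direct objective computation gives $\ip{\ex[A],X-X_0}=m(m-1)\sum_{k\notin I}(q_{k_0\ell_0}-p_k)>0$, so $X_0\notin\SDP(\ex[A])$. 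When $|I^c|\ge 3$, the unrestricted $\alpha^*$ would exceed $1$; I would instead cap $\alpha_{k_0\ell_0}:=1$ and distribute the remaining $\ip{\onem_n,\cdot}$-mass via $\beta_k\in(0,1)$ on the $I^c$-diagonal blocks; a similar objective comparison will show that the new (still feasible) $X$ continues to strictly dominate $X_0$.

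The principal obstacle is the global PSD certification $\Lambda\succeq 0$ once the dual variables are pinned down by complementary slackness and the kernel condition: this works only because $\mu=q_{k_0\ell_0}$ is the maximum off-diagonal probability, which permits $\Lambda$ to decompose into disjoint PSD blocks. A subtlety in part~(c) for $|I^c|\ge 3$ is that the primal-dual witness of part~(a) does not extend verbatim when $\beta_k>0$ on an $I^c$-diagonal block, because CSa would force $\Gamma=0$ there while $\Lambda$ cannot vanish on it; the remedy is a purely primal comparison, which suffices for (c) since one only needs $X_0$ to be non-optimal rather than a specific alternative $X$ to be optimal. The uniqueness assumption on $(k_0,\ell_0)$ in (a) is similarly cosmetic: ties can be absorbed by symmetrically activating multiple $\alpha_{k\ell}$ without changing the dual construction.
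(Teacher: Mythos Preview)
Your primal--dual construction for parts~(a) and~(b) coincides with the paper's: the same choice $\mu=q_{k_0\ell_0}$, the same block form $\Lambda=\sum_{k\in I}(p_k-\mu)(n_kI_{S_k}-\onem_{S_k})$, and the same determination of $\Gamma$ and $\nu$ from complementary slackness.

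For part~(c) you take a genuinely different route. The paper never attempts a direct objective comparison or a construction for large $|I^c|$; instead it invokes the SA--ordering Lemma~\ref{lem:deterministic:nesting}. Starting from $M=\ex[A]$, it forms $\Mt$ by setting every diagonal block to the all--ones matrix except for a single block that violates strong assortativity; then $|I^c|=1$ for $\Mt$, part~(b) applies, and the contrapositive of Lemma~\ref{lem:deterministic:nesting} transfers $\SDP(\Mt)\neq\{X_0\}$ back to $M$. This avoids any case split on $|I^c|$ and any direct comparison of objective values. Your approach is more self-contained and, where it applies, yields the sharper conclusion $X_0\notin\SDP(\ex[A])$ via the identity $\ip{\ex[A],X-X_0}=m(m-1)\sum_{k\notin I}(q_{k_0\ell_0}-p_k)>0$.

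One small correction to your case split in~(c): the right dichotomy is $\alpha^*\le 1$ versus $\alpha^*>1$, not $|I^c|\le 2$ versus $|I^c|\ge 3$. For example, with $|I^c|=3$ and $m=2$ one has $\alpha^*=\tfrac34\le 1$, so no capping is needed, whereas your capped construction would require $\sum_{k\notin I}\beta_k=|I^c|-2m/(m-1)<0$, which is infeasible. Splitting instead on whether $\alpha^*\le 1$ (use the part~(a) candidate with $\beta_k=0$) or $\alpha^*>1$ (cap $\alpha_{k_0\ell_0}=1$ and spread the residual mass over $\beta_k$) makes both cases go through; your objective comparison in the capped case is then bounded below by $2q_{k_0\ell_0}m^2 - q_{k_0\ell_0}\cdot\tfrac{2m}{m-1}\cdot m(m-1)=0$, with strict inequality because $p_k<q_{k_0\ell_0}$ on $I^c$. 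Alternatively you can simply borrow the paper's reduction to a single violating block, which makes the whole $|I^c|\ge 2$ discussion unnecessary.
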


We note that part~(c) establishes the failure of SDP-2$'$ once strong assortativity is violated. We prove the proposition in Appendix~\ref{sec:proof:prop:failure:SDP-2}. The conclusions of both parts~(a) and~(b) hold even in the strongly assortative case. However, in that case, the set $I^c$ will be empty and the conditions in part~(a) cannot be met, whereas part~(b) gives the expected result of $X = X_0$. The interesting case occurs when strong assorativity is violated, which gives a nonempty set $I^c$. Since $\beta_k = 0$ for $k \in I^c$, this shows that SDP-2$'$ fails to recover those blocks. The condition $|I^c| \le 2$, in the balanced case, might seem restrictive, but it is enough for our purpose of establishing part~(c). In general, i.e., with no assumption on $|I^c|$, SDP-2$'$ still misses the blocks violating strong associativity, though the nonzero-block portion of $X$, namely, $(\alpha_{k \ell})_{k,\ell \in I}$ takes a more complicated form. In parts~(a) and (b), at most one non-diagonal element of $(\alpha_{k \ell})$ is non-zero whereas in general several such elements will be nonzero. These ideas are illustrated in Figure~\ref{fig:SDP13:and:everythin:else}, with more detailed discussion in Appendix~\ref{prop:sdp13}, in particular, with an application of part~(a), in the unbalanced case with $|I^c| > 2$.



\section{Extensions to the unbalanced case}
\label{sec:unbalanced:extension}
Let us discuss how our results can be extended to the unbalanced case. Recall that, in general, $\nb
= (\bs_1,\dots,\bs_K)$ denotes the vector of block sizes. One could argue that as long as $(\min_k n_k) / n\ge C$ for some constant $C > 0$, i.e., $\{n_k/n\}_k$ is bounded away from zero, the problem of block model recovery is not inherently more difficult than that of the balanced case. To simplify our discussion, we focus on the noiseless case from which the results can be extended to the aforementioned bounded block-size regime.
We will show that in a W.A. block model, SDP-1 applied with $m = \min_k n_k$ recovers all the blocks, albeit some imperfectly. We also consider the following mixture of SDP-1 and SDP-3, which we will call \textbf{SDP-13},
\begin{align}\label{eq:sdp:13}
	\begin{split}
		\renewcommand{\arraystretch}{1.3}
			\begin{array}{ll}
			\max\limits_{X} & \ip{A,X} - \mu \ip{\onem_n,X}, \\
			\text{s.t.} & \diag(X) = \onev_n, \;\; X \onev_n \ge m \onev_n, \\
			&X \succeq 0, \; X \ge 0,
		\end{array}
	\end{split}
\end{align}
and show that when applied with $m \le \min_k n_k$ and appropriate choice of $\mu$, it too recovers the blocks with improvements over SDP-1.
Without loss of generality, let us sort the blocks so that $p_1 \ge p_2 \ge \dots \ge p_K$.
\begin{prop}\label{prop:sdp13}
	Let $\ex[A]$ be the mean matrix of a weakly assorative block model, with blocks indexed by $S_k \subset[n]$ where $|S_k| = \bs_k$, for $k=1,\dots,K$.
	Consider a solution of the form
	\begin{align}\label{eq:blk:diag:solution:2}
	X = \sum_{k=1}^K \alpha_k \onem_{S_k} + (1-\alpha_k) I_{S_k}, \quad \alpha_k \in (0,1].
	\end{align} 
	The following holds:
	\begin{itemize}\itemsep=1.5ex
		\item[(a)]  SDP-1 applied with $A = \ex[A]$ and $m \le \min_k \bs_k$ has~\eqref{eq:blk:diag:solution:2} as a solution with $\alpha_k = (m-1)/(\bs_k-1)$ for all $k$. 
		\item[(b)]   Consider $I := \{k: \bs_k > m\}$ and $I_1(k) := \{r \in I: r \le k\}$.
		 Let $J_k := \bigcap_{\,r =1}^k [\qs_r,p_r]$. Define $k_0 := \max\{k:\;  J_k \neq \emptyset\}$. Then, SDP-13, applied with $A = \ex[A]$, $m \le \min_k \bs_k$ and 
		\begin{align*}
			\mu \in J_{k_0} \cap [p_{k_0+1},1] , \quad (p_{K+1} := 0),
		\end{align*}
		 has~\eqref{eq:blk:diag:solution:2} as a solution with
		\begin{align*}
			\alpha_k = 
			\begin{cases}
				1, & k \in I_1(k_0),\\
				(m-1)/(\bs_k-1), &\text{otherwise}.
			\end{cases}
		\end{align*}

	\end{itemize}
\end{prop}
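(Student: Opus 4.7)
The plan is to establish both parts by a primal-dual witness argument, verifying KKT conditions analogous to those in Section~\ref{sec:optim:cond} for the block-diagonal primal $X$ in~\eqref{eq:blk:diag:solution:2} together with an explicitly constructed dual.

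For part~(a), primal feasibility is immediate: $\diag(X) = \onev_n$ by construction, the row sum at any $i \in \subb{k}$ equals $\alpha_k(\bs_k - 1) + 1 = m$ with the prescribed $\alpha_k$, and each diagonal block $\alpha_k \onem_{\bs_k} + (1-\alpha_k) I_{\bs_k}$ is a non-negative convex combination of PSD non-negative matrices. For the dual I follow the template of Lemma~\ref{LEM:VALID:GAMMA:SDP1}: take $\mu_i = p_k/2$ (equivalently $\mubar_k = p_k$) and $\nu_i = A_{ii} - p_k$ for $i \in \subb{k}$, force $\Gamma_{\subb{k}} = 0$ as required by (CSa) because each diagonal block of $X$ is strictly positive, and choose $\Gamma_{ij} = \tfrac{1}{2}(p_k + p_\ell) - q_{k\ell}$ for $i \in \subb{k},\, j \in \subb{\ell},\, k \ne \ell$. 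A direct computation then gives $\Lambda \equiv 0$, so $\Lambda \succeq 0$ and (CSb) hold trivially, while weak assortativity ($p_k, p_\ell > q_{k\ell}$) yields $\Gamma \ge 0$. Hence the pair is KKT-feasible and the proposed $X$ is a primal optimum.

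For part~(b), the strategy is similar but the dual has two additional ingredients: the $\ell_1$-penalty $-\mu\ip{\onem_n,X}$ is absorbed into $\Lambda$, and the inequality $X\onev_n \ge m\onev_n$ admits a non-negative multiplier $\eta \ge 0$, which I take constant on each block ($\eta_i = \eta_k$ for $i \in \subb{k}$). Complementary slackness forces $\eta_k = 0$ for $k \in I_1(k_0)$, where the row sum $\bs_k > m$ is slack. I then set $\eta_k = 0$ for $k \le k_0$ and $\eta_k = \mu - p_k$ for $k > k_0$, both non-negative because $\mu \ge p_{k_0+1} \ge p_k$ for $k > k_0$ by the sorted assumption. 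Next I choose $\nu_k$ so that each row of $\Lambda_{\subb{k}\subb{k}}$ has zero sum, and $\Gamma_{ij}$ for $i \in \subb{k},\, j \in \subb{\ell},\, k \ne \ell$ so that $\Lambda_{\subb{k}\subb{\ell}} = 0$. An explicit calculation gives $\Lambda_{\subb{k}\subb{k}} = (p_k - \mu)(\bs_k I_{\bs_k} - \onem_{\bs_k})$ for $k \le k_0$, which is PSD because $\mu \le p_k$ by $\mu \in J_{k_0}$, and $\Lambda_{\subb{k}\subb{k}} = 0$ for $k > k_0$. Thus $\Lambda$ is block-diagonal PSD, and (CSb) $\Lambda X = 0$ follows from $(\bs_k I_{\bs_k} - \onem_{\bs_k})\onem_{\bs_k} = 0$, which annihilates $X_{\subb{k}\subb{k}} = \onem_{\bs_k}$ for $k \le k_0$.

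The main obstacle is verifying dual feasibility $\Gamma \ge 0$ on the off-block-diagonal, which splits into three cases. For $k, \ell \le k_0$, $\Gamma_{ij} = \mu - q_{k\ell}$, and the bound $\mu \ge \qs_k \ge q_{k\ell}$ follows from $\mu \in J_{k_0}$. For $k \le k_0 < \ell$, $\Gamma_{ij} = \tfrac{1}{2}(\mu + p_\ell) - q_{k\ell}$, which is non-negative because $\mu \ge q_{k\ell}$ and weak assortativity gives $p_\ell > \qs_\ell \ge q_{k\ell}$. For $k, \ell > k_0$, $\Gamma_{ij} = \tfrac{1}{2}(p_k + p_\ell) - q_{k\ell}$, again non-negative by weak assortativity. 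A final check confirms that $J_{k_0} \cap [p_{k_0+1}, 1]$ is non-empty under the definition of $k_0$ together with the sorting $p_1 \ge \cdots \ge p_K$, ensuring that the proposition's choice of $\mu$ is admissible.
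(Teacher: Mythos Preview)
Your proof is correct and follows essentially the same primal-dual witness strategy as the paper: a block-diagonal $\Lambda$ proportional to $n_kI_{n_k}-\onem_{n_k}$ on each block (possibly zero), $\Gamma_{S_k}=0$, and off-diagonal $\Gamma$ of the form $\tfrac12(\psi_k+\psi_\ell)-q_{k\ell}$ for suitable block constants $\psi_k$. The only cosmetic differences are that the paper treats part~(a) as the specialization of the SDP-13 argument obtained by dropping the penalty term and the sign constraint on the row-sum multiplier (rather than appealing to the SDP-1 template of Lemma~\ref{LEM:VALID:GAMMA:SDP1} as you do), and that the paper partitions the blocks as $I_1,I_2,I^c$ and leaves the $I^c$ choice of $\psi_k$ free in $[\qs_k,\min\{p_k,\mu\}]$, whereas your split $\{k\le k_0\}$ versus $\{k>k_0\}$ amounts to one admissible specialization of that choice.
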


\begin{figure}[t]
	\begin{tabular}{cccccc}
	\includegraphics[width=2cm]{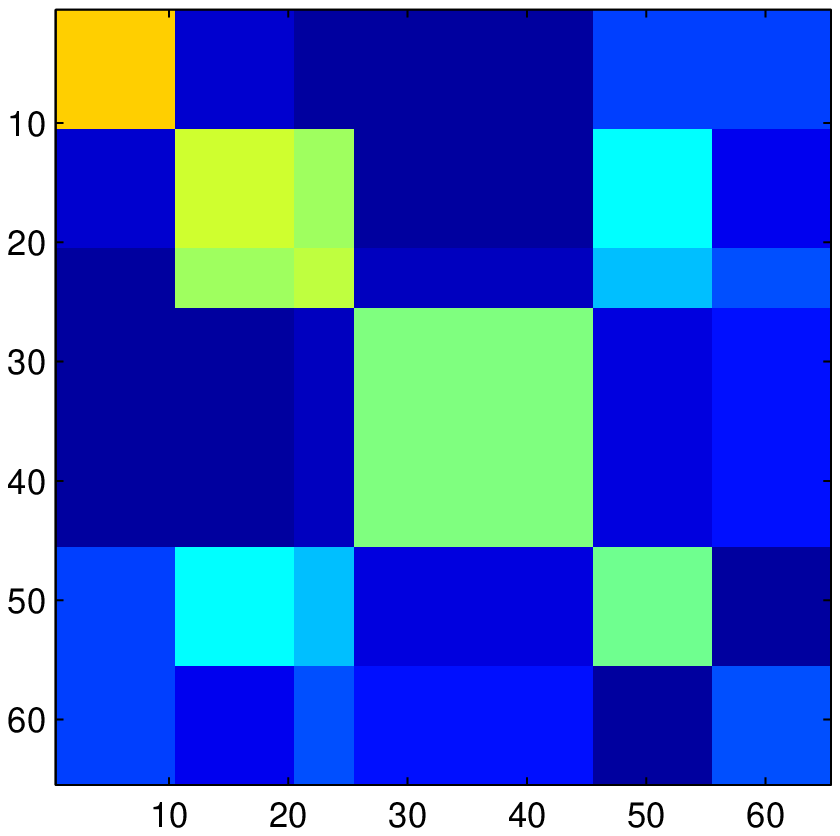} &
	\includegraphics[width=2cm]{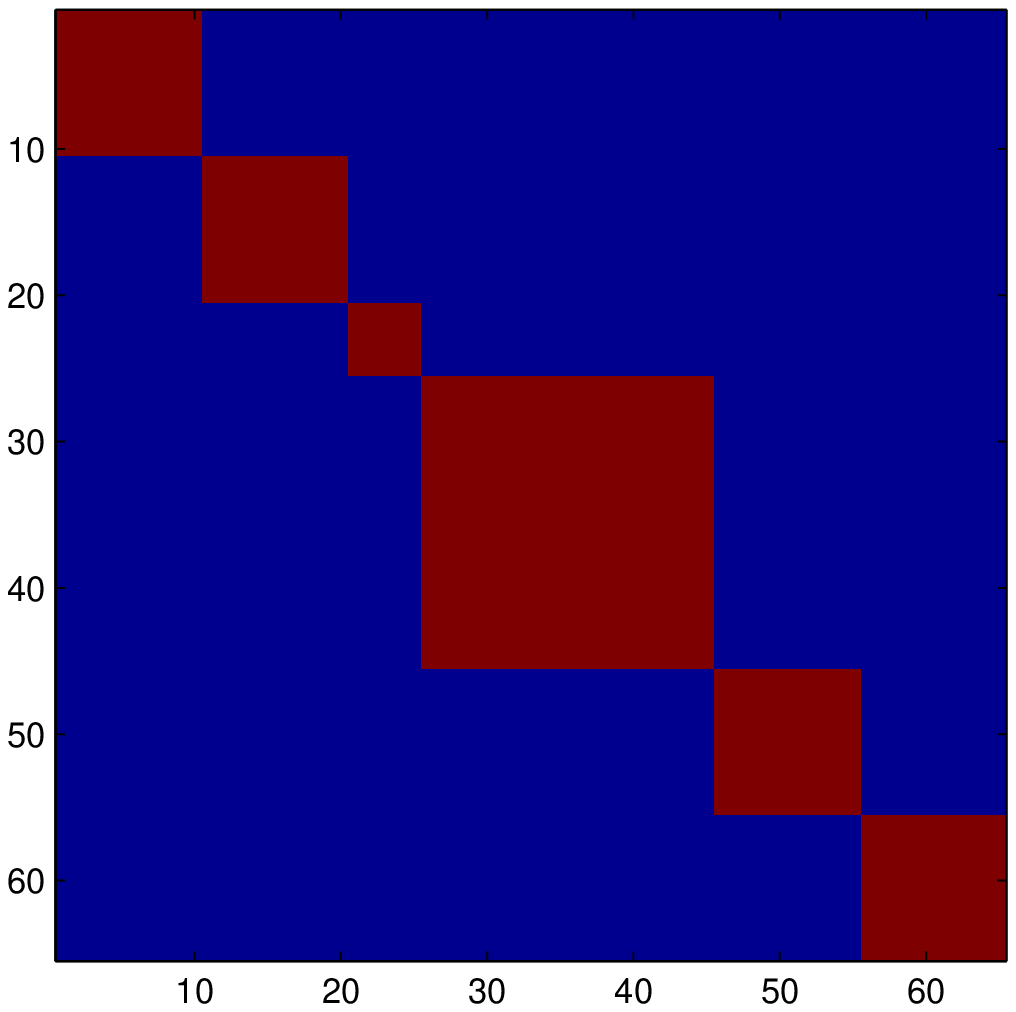} &
	\includegraphics[width=2cm]{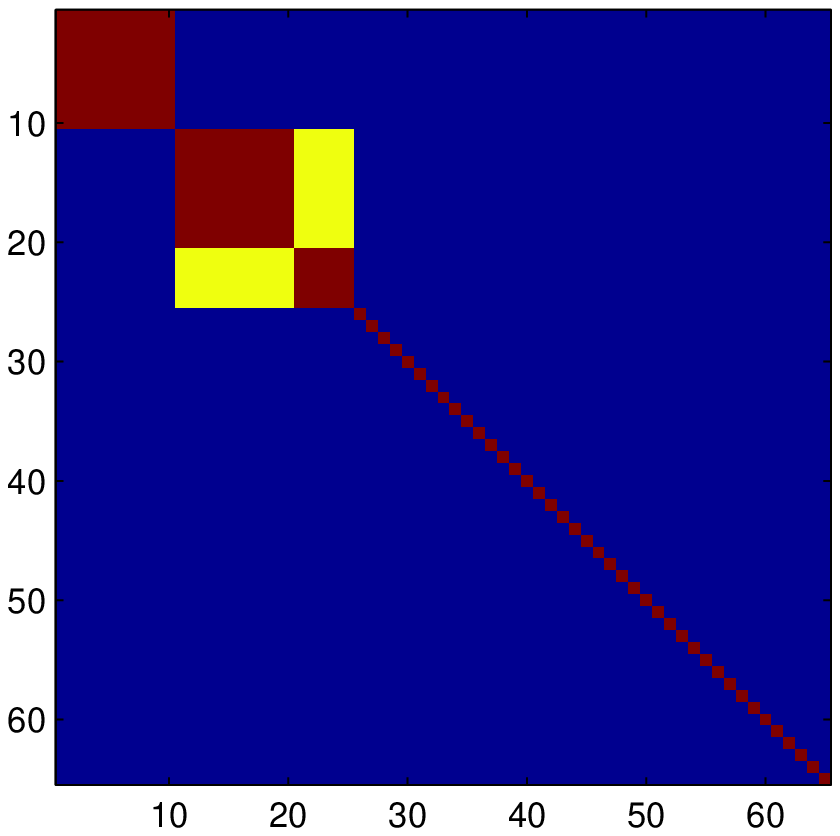} &
	\includegraphics[width=2cm]{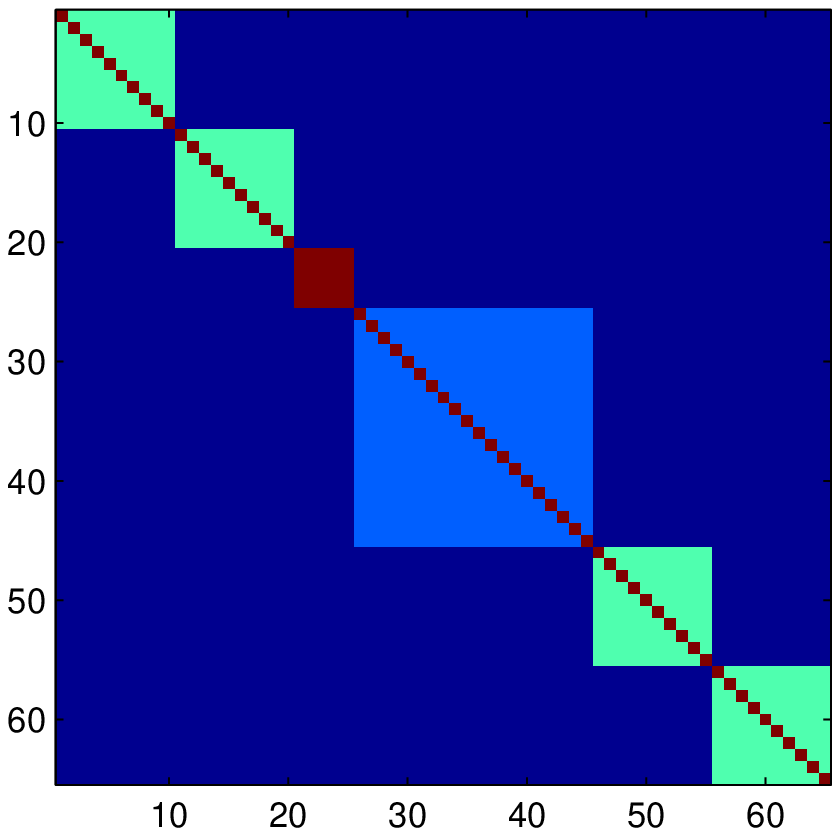} &
	\includegraphics[width=2cm]{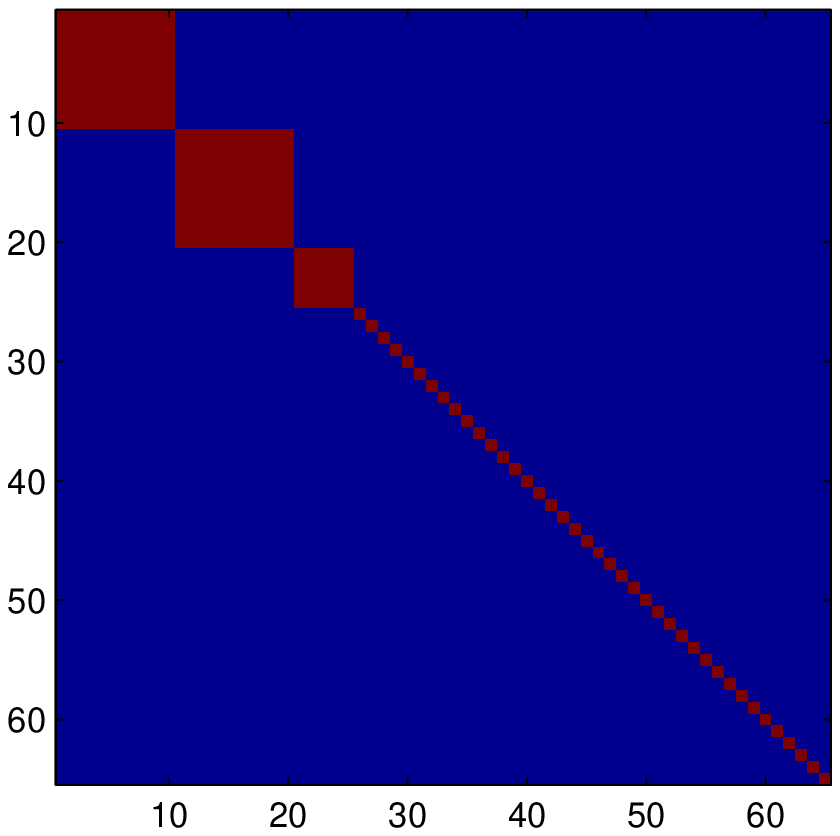} &
	\includegraphics[width=2cm]{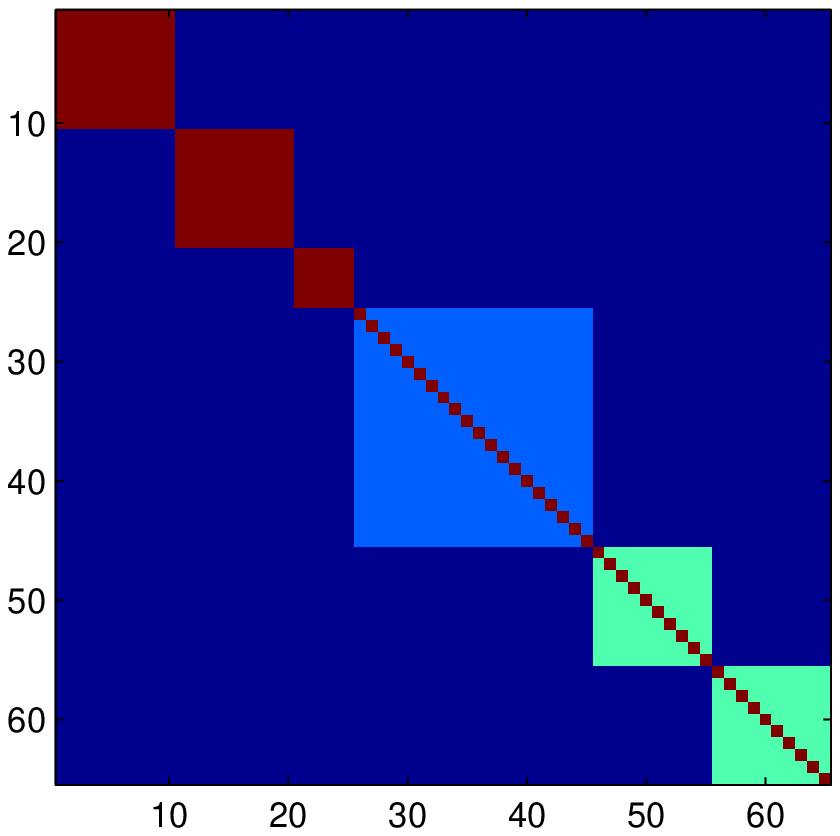}\\
	$\ex[A]$ &‌ Ideal & SDP-2$'$ & SDP-1 & SDP-3 & SDP-13
	\end{tabular}
	
	\centering
	\caption{Illustration of Propositions~\ref{prop:sdp13} and~\ref{prop:failure:SDP-2}. This block model is weakly but not strongly assortative, and has unequal block sizes $\nb = (10,10,5,20,10,10)$. The leftmost column is the population mean, and the rest of the columns are the results of various SDPs, with $m = \min_k n_k$, and equal regularization parameters in the case of SDP-3 and SDP-13 ($\lambda = \mu$). The ideal cluster matrix is also shown for comparison. See Appendix~\ref{sec:SDP13:fig:details} for more details. }	
	 \label{fig:SDP13:and:everythin:else}
\end{figure}

The key difference between the solution presented in Proposition~\ref{prop:sdp13} and that of Proposition~\ref{prop:failure:SDP-2} is that in the former, all $\alpha_k$ are guaranteed to be nonzero, whereas in the latter, $\alpha_k \equiv \beta_k$ corresponding to blocks violating weak assortativity are zero. Let us call the blocks in~\eqref{eq:blk:diag:solution:2} for which $\alpha_k \in (0,1)$, as imperfectly-recovered, while those with $\alpha_k = 1$ as \emph{perfectly recovered}. The result of Proposition~\ref{prop:sdp13} can be summarized as follows: Both SDP-1 and SDP-13, with properly set parameters, recover all the blocks at least imperfectly, while SDP-13 has the potential to recover more blocks perfectly. In particular, we always have $k_0 \ge 1$ in part~(b), implying that SDP-13 recovers at least one more block perfectly relative to SDP-1. In the special case of a strongly assortative block model, we have $\emptyset \neq [\max_k \qs_k, \min_k p_k] \subset \bigcap_{k=1}^K [\qs_r,p_r]$, hence $k_0 = K$ and SDP-13 recovers all the blocks perfectly. It is also interesting to note that both SDP-1 and SDP-13 recover the smallest blocks (i.e., those in $\{k :\bs_k = m\}$) perfectly, when we set $m = \min_k n_k$ (which is the optimal choice if the minimum is known). These observations are illustrated in Figure~\ref{fig:SDP13:and:everythin:else}. The proof of Proposition~\ref{prop:sdp13} appears Appendices, along more details on Figure~\ref{fig:SDP13:and:everythin:else}.


\section{Application to network histograms}\label{sec:graphon}
A balanced block model is ideally suited for computing network histograms as defined by \cite{Olhede2013}, which have been proposed as nonparametric estimators of graphons.  They have been shown to do well empirically and recent results of~\cite[Section~2.4]{Klopp2015} suggest rate-optimality of the balanced models for reasonably sparse graphs;  see also~\cite{Gao2015}.
A graphon is a bivariate symmetric function $f : [0,1]^2 \to [0,1]$. The corresponding network model can be written as $\ex[A | \xi] = f(\xi_i,\xi_j)$ where $\xi = (\xi_1,\dots,\xi_n) \in [0,1]^n$ are (unobserved) latent node positions. Without loss of generality, $(\xi_i)$ can be assumed to be i.i.d. uniform on $[0,1]$. The goal is to recover (a version of) $f$ given $A$. In general, $f$ is identifiable up to a measure-preserving transformation $\sigma$ of $[0,1]$ onto itself, since $f^{\sigma} = f\big(\sigma(\cdot),\sigma(\cdot)\big)$ produces the same network model as $f$.

\newcommand{\ft}{\widetilde{f}}
 Let $\{I_1,I_2,\dots,I_K\}$ be a partition of $[0,1]$ into equal-sized blocks, i.e., $|I_k| = 1/K$ for $k \in [K]$. We associate to each node a label $z_i$, by letting $z_i := k$ if $\xi_i \in I_k$. With some abuse of notation, we identify $z_i$ with an element $(z_{ik})_k$ of $\{0,1\}^K$ as before, and let $Z = (z_{ik})_{ik}$. Then, $M_Z := \ex[A|Z]$ follows a block model as in~\eqref{eq:bm:mean:def} with $[\Psi]_{kk} = |I_k|^{-1} \int_{I_k} f(\xi,\xi) d\xi$ and $[\Psi]_{k \ell} = (|I_k||I_\ell|)^{-1}\int_{I_k} \int_{I_\ell} f(\xi,\xi') d\xi d\xi'$, for $k \neq \ell$. Asymptotically, as $n \to \infty$, this block model is very close to being balanced. It provides an approximation of $f$, via the mapping that sends $\Psi$ to a block constant graphon $\ft$, defined as $\ft(\xi,\xi') = [\Psi]_{k \ell}$ if $\xi \in I_k, \xi' \in I_\ell$. One can show that under regularity assumptions (e.g. smoothness) on $f$, as $K \to \infty$, $\ft$ approximates $f$, for example in the quotient norm: $\inf_\sigma \vnorm{f - \ft^{\sigma}}_{L^2}$. Alternatively, one can consider the mean matrix $M_f := (f(\xi_i,\xi_j))_{ij} \in [0,1]^{n \times n}$ of the graphon model as an empirical version of $f$. In which case, the mean matrix $M_Z$ of the aforementioned block model serves as an approximation to $M_f$, for example in the quotient norm: $\inf_{P} \mnorm{M_f - P M_Z P^T}_F$, where $P$ runs through permutation matrices. This is the approach we take here and, with some abuse of terminology, call $M_f$ the ``graphon''.

  Graphon estimation via block model approximation requires estimating the mean matrix $M_Z$, which is fairly straightforward once we have a good estimate of the cluster matrix $X$. Algorithm~\ref{alg:graphon} details the procedure based on eigenvalue truncation and $K$-means (that is, spectral clustering), leading to estimate $\Mh_{\Zh}$ of $M_f$. We call $\Mh_{\Zh}$ a \emph{network histogram} or a \emph{graphon estimator}, and note that it can be computed from any estimate 
  of $\Xh$. However, in practice, SDP-1 has advantages over other ways of estimating $\Xh$ in this context.  The likelihood-based estimators have no way of enforcing equal number of nodes in each block, whereas our empirical results in Section~\ref{sec:numeric} show that SDP-1 has a high tendency to form equal-sized blocks, more so than SDP-2, making it an ideal choice for histograms.  SDP-3 is not well suited for this task since it does not enforce either a particular number of blocks or a particular block size.  
It is more flexible due to the tuning parameter $\lambda$, but that flexibility is a disadvantage when the goal is to construct a histogram.

%
%

\begin{algorithm}[t]
\caption{Graphon estimation by fitting $\bsymBM(p,q)$}
\label{alg:graphon}
{\small
\begin{algorithmic}[1]
   \Require Estimated cluster matrix $\Xh$, and number of blocks $K$.
   \Ensure Graphon estimator $\Mh_{\Zh}$.
   \State Compute the eigendecomposition  $\Xh = \Uh \widehat{\Lambda} \Uh^T$ and
    set $\Uh^{K} = \Uh(:,\text{1:$K$})$.
    \State Apply \texttt{K-means} to rows of $\Uh^K$ to get a label vector $e \in [K]^n$. 
      \quad Set $\Zh(i,e(i)) = 1$, otherwise 0.
   \State Set $\Qmh_{rk} = \frac{1}{n^2} \sum_{e_i = r, e_j=k} {A_{ij}}$ for $r\neq k$ and 
   $\frac1{n(n-1)} \sum_{e_i=e_j=r} A_{ij}$ otherwise.
   \State Change $\Qmh$ to $Q \Qmh Q^T$ so that its diagonal is decreasing. 
    and update $\Zh$ to $\Zh Q^T$.
   \State Change $\Zh $ to $ P \Zh$ so that corresponding labels are in increasing order.
   \State Set $\Mh_{\Zh} = \Zh \Qmh \Zh^T$.

   
\end{algorithmic}
}
\end{algorithm}

\section{Numerical Results}\label{sec:numeric}\label{sec:numeric:res}

\newcommand{\ppwidth}{.19\linewidth}
\newcommand{\ppscale}{.26}

In this section we present some experimental results comparing SDP-1 with 
SDP-2, SDP-3, and EVT, which amounts to spectral clustering on the
adjacency matrix $A$.   We chose EVT rather than a version of spectral clustering based on
the graph Laplacian because SDPs also all operate on $A$ itself.    For SDP-3, in simulations we set the tuning
parameter $\lambda$ to the optimal value given in~\eqref{eq:log:like:Z-form:1}; a data-driven choice is given in \cite{Cai2014}.   

We first 
consider the balanced symmetric model $\bsymBM(p,q)$,
reparametrized in terms of the average expected degree $\avgd =
p(\frac{n}{K}-1) + q \frac{n}{K}(K-1)$ and the out-in-ratio $\beta =
q/p < 1$.    Estimation becomes harder when $\avgd$ decreases (fewer
edges) and when $\beta$ increases (communities are not well separated). As $K$ increases, however, estimation becomes harder to a certain point and then becomes relatively easier in some settings. 
\newcommand{\nmiscale}{.29}
%
\begin{figure}[t]
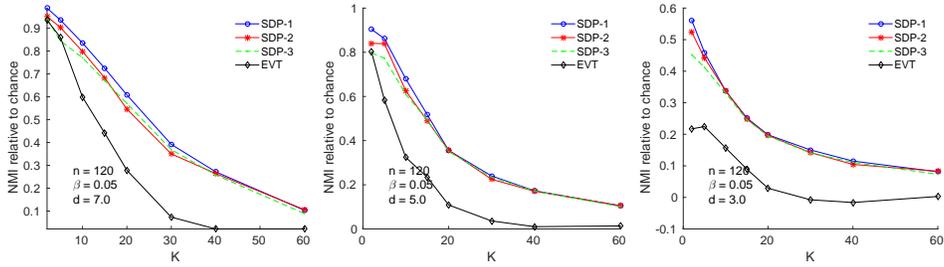

  \centering
  \includegraphics[scale=\nmiscale]{figs/Kvar/{nmi_n120_l7.0_Kvar_oir0.05_G2}.eps}
  \includegraphics[scale=\nmiscale]{figs/Kvar/{nmi_n120_l5.0_Kvar_oir0.05_E2}.eps}
  \includegraphics[scale=\nmiscale]{figs/Kvar/{nmi_n120_l3.0_Kvar_oir0.05_F2}.eps}
  \caption{Bias-corrected NMI vs. $K$ in a balanced planted partition~model, for various values of average degree $\avgd$, with $n=120$ and $\beta = 0.05$. }
  \label{fig:nmi}
\end{figure}
Figure~\ref{fig:nmi} shows the agreement of estimated labels with the
truth, as measured by the normalized mutual
information (NMI), versus the number of communities $K$,  averaged over 25 Monte Carlo replications.   NMI takes values between 0 and 1, with
higher values representing a better match.
%
%
The labels are estimated from $\Xh$ by Algorithm~\ref{alg:graphon}. 
As expected, the SDPs rank according to the tightness of
relaxation, with SDP-1 dominating the other two, and all SDPs
outperforing EVT. In Figure~\ref{fig:nmi}, the NMI is bias-adjusted, so that random guessing maps to NMI = $0$. Without the adjustment, the NMI‌ of random guessing increases as $K$ approaches $n$, leading to a ``dip'' in the plots.  See Figure~\ref{fig:nmi:adj} in~Appendices and the discussion that follows for more details.

\renewcommand{\ppwidth}{.23\linewidth}
\renewcommand{\ppscale}{.24}
\begin{figure}[t]
\centering
  \begin{subfigure}[t]{\ppwidth}
  \includegraphics[scale=\ppscale]{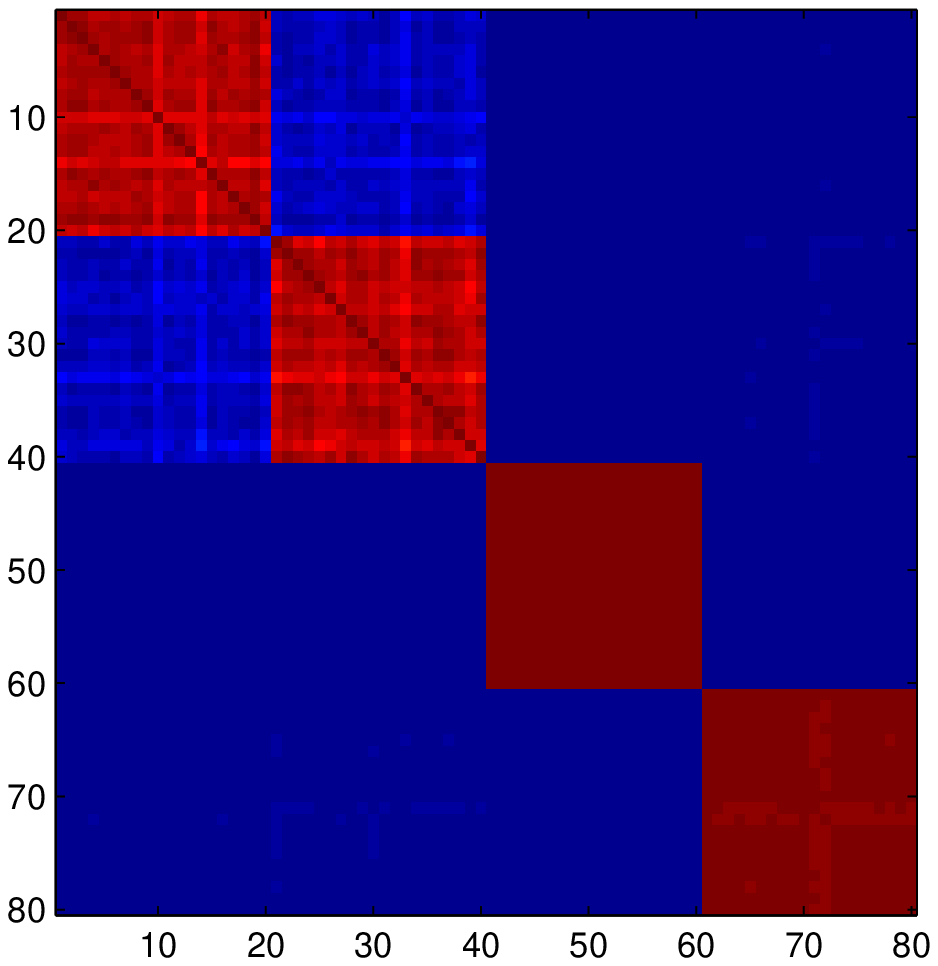}
  \caption[t]{{\tiny SDP-1, $p_3=0.7$}} 
  \end{subfigure}
  \begin{subfigure}[t]{\ppwidth}
  \includegraphics[scale=\ppscale]{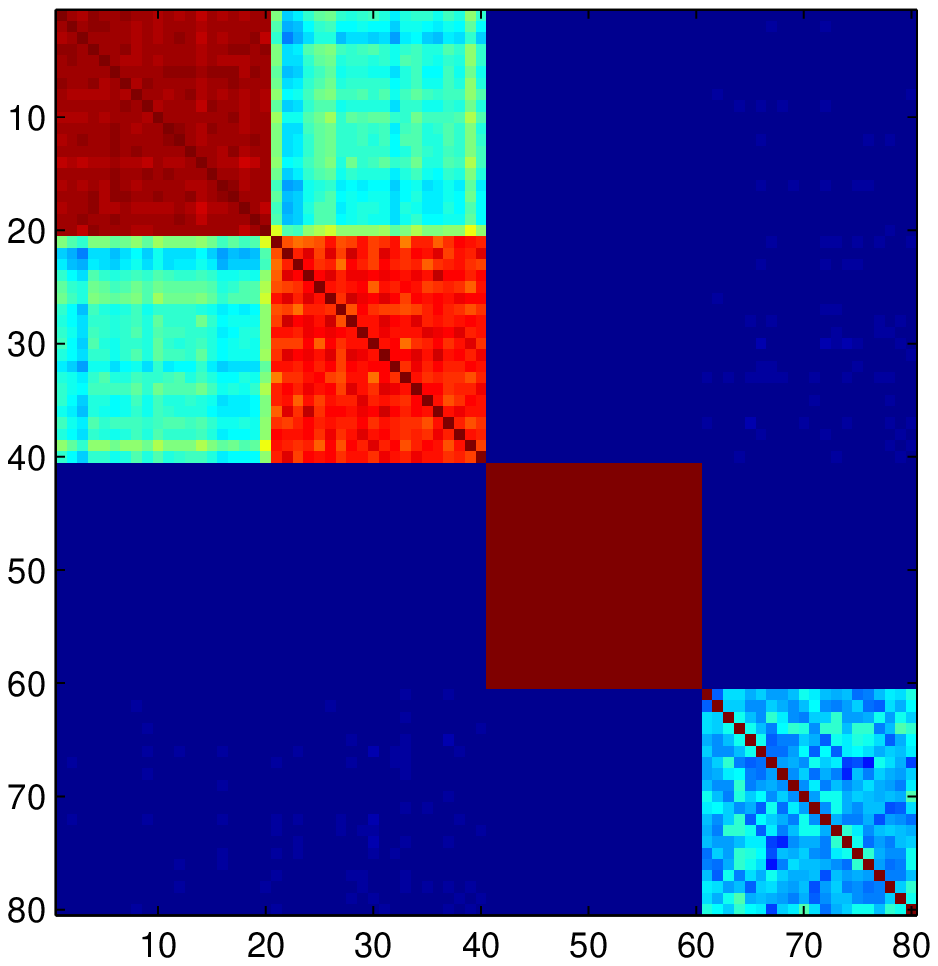}
  \caption[t]{\tiny SDP-2, $p_3=0.7$}  
  \end{subfigure}
  \begin{subfigure}[t]{\ppwidth}
  \includegraphics[scale=\ppscale]{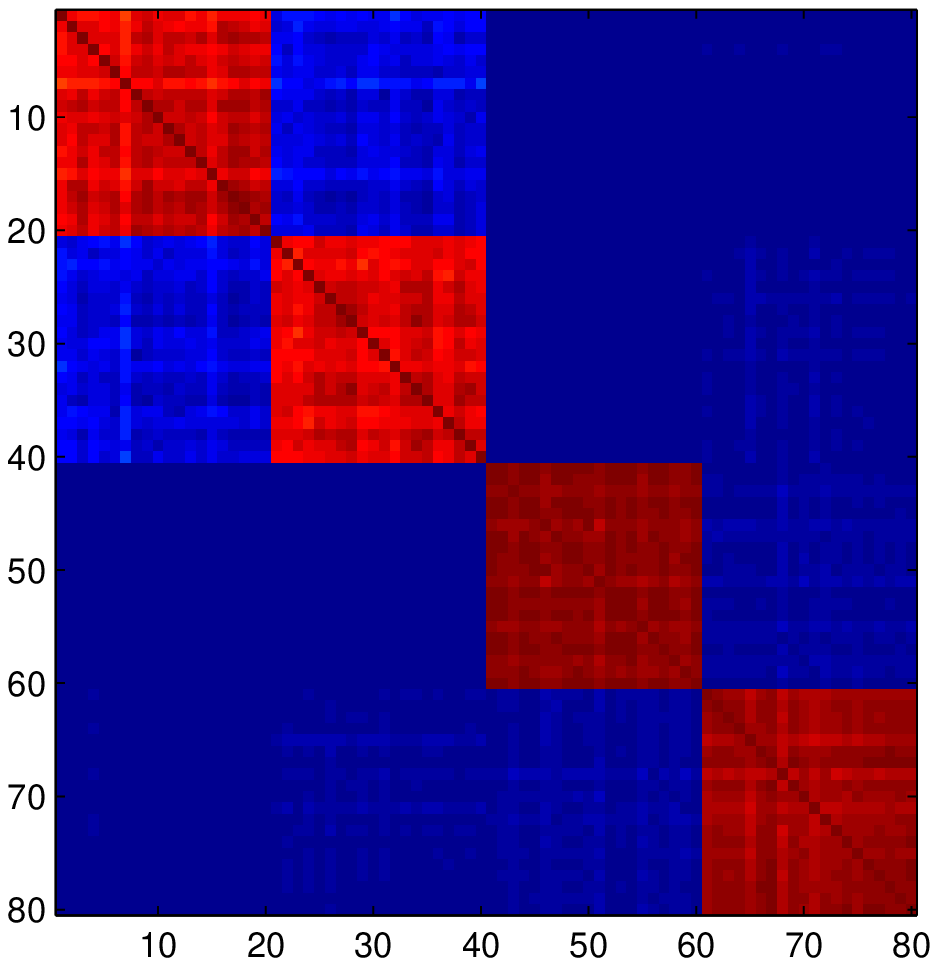}
  \caption[t]{{\tiny SDP-1, $p_3=0.05$}} 
  \end{subfigure}
  \begin{subfigure}[t]{\ppwidth}
  \includegraphics[scale=\ppscale]{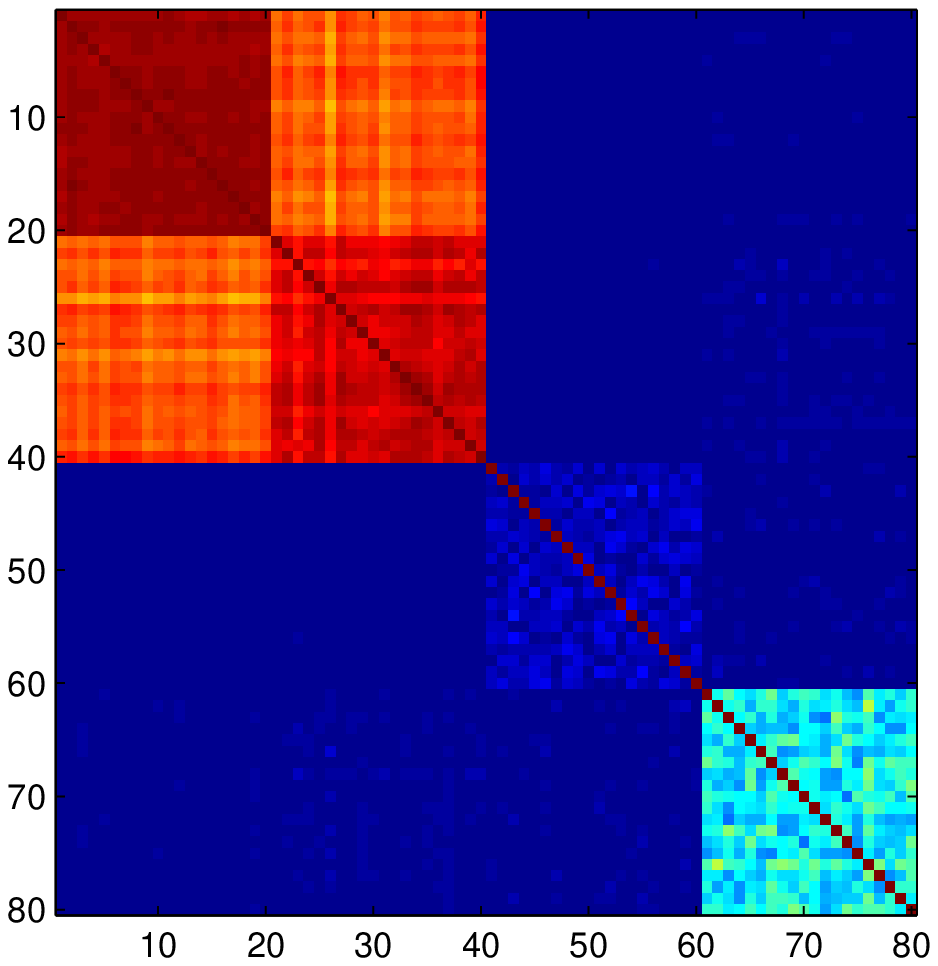}
  \caption[t]{{\tiny SDP-2, $p_3=0.05$}}  
  \end{subfigure}
\caption{Mean estimated cluster matrices, $\Xh$, for SDP-1 and SDP-2, for the weakly but not strongly assortative model ~\eqref{eq:K4:Psi:exam} with $p_3=0.7$ and $p_3 =0.05$. SDP-2 fails to recover one block at $p_3 = 0.7$ and two blocks at $p_3 = 0.05$.}
\label{fig:p3:cluster:mats}
\end{figure}

Next, we consider a more general balanced block model $\bBM(\Qm)$, with $K = 4$ to investigate the predictions of the theorems of Section~\ref{sec:consist:results}. We consider the probability matrix
\begin{align}\label{eq:K4:Psi:exam}
  \Qm = 
  \begin{pmatrix}
     .7  & .4  & .05 &.2 \\
     .4  & .6  & .05 &.2 \\
     .05 & .05 & p_3  &.05 \\
     .2  & .2  & .05 &.4  
  \end{pmatrix}
\end{align}
and we vary $p_3$ from $0.7$ down to $0.05$.   This model never satisfies the strong assortativity assumption over the range of $p_3$, because of the last row. However, it is at the boundary of strong assortativity if $p_3 > 0.4$, since $\Psi_{44} = \max_{k \neq \ell} \Psi_{k \ell}$ and $\Psi_{jj} > \max_{k \neq \ell} \Psi_{k \ell}$ for $j \neq 4$. Its deviation from strong assortativity increases once $p_3$ falls below $0.4$, and again once it crosses below $0.2$. However, except for the boundary value of $p_3 = 0.05$, the model always remains weakly assortative. Figure~\ref{fig:p3:cluster:mats} shows the results of Monte Carlo simulations  with 25 replications, for SDP-1 and SDP-2. Mean cluster matrices $\Xh$ obtained for the two SDPs are shown at the boundary points $p_3 = 0.05, 0.7$.   SDP-2 has difficulty recovering the fourth block in both cases, and completely fails to recover the third block when $p_3 = 0.05$. The performance of SDP-1, however, remains more or less the same, surprisingly even at $p_3=0.05$. This can be clearly seen in Figure~\ref{fig:assort:nmi:err}, which shows  the relative errors $\mnorm{\Xh-\Xtru}_F/\mnorm{\Xtru}_F$ for cluster matrices and the NMI for the  labels reconstructed by Algorithm~\ref{alg:graphon}. Note how SDP-2 degrades as $p_3$ decreases to $0.05$, with a sharp drop around $0.2$, while SDP-1 behaves more or less the same.   Note that for larger values of $p_3$, while SDP-2 does not reconstruct $\Xtru$ sexactly as seen from the relative error plot, the resulting labels are nearly always exactly the truth as seen from the NMI plot. This may be due to the EVT truncation on $\Xh$ implicit in Algorithm~\ref{alg:graphon}.
\begin{figure}[t]
\centering
  \includegraphics[scale=.42]{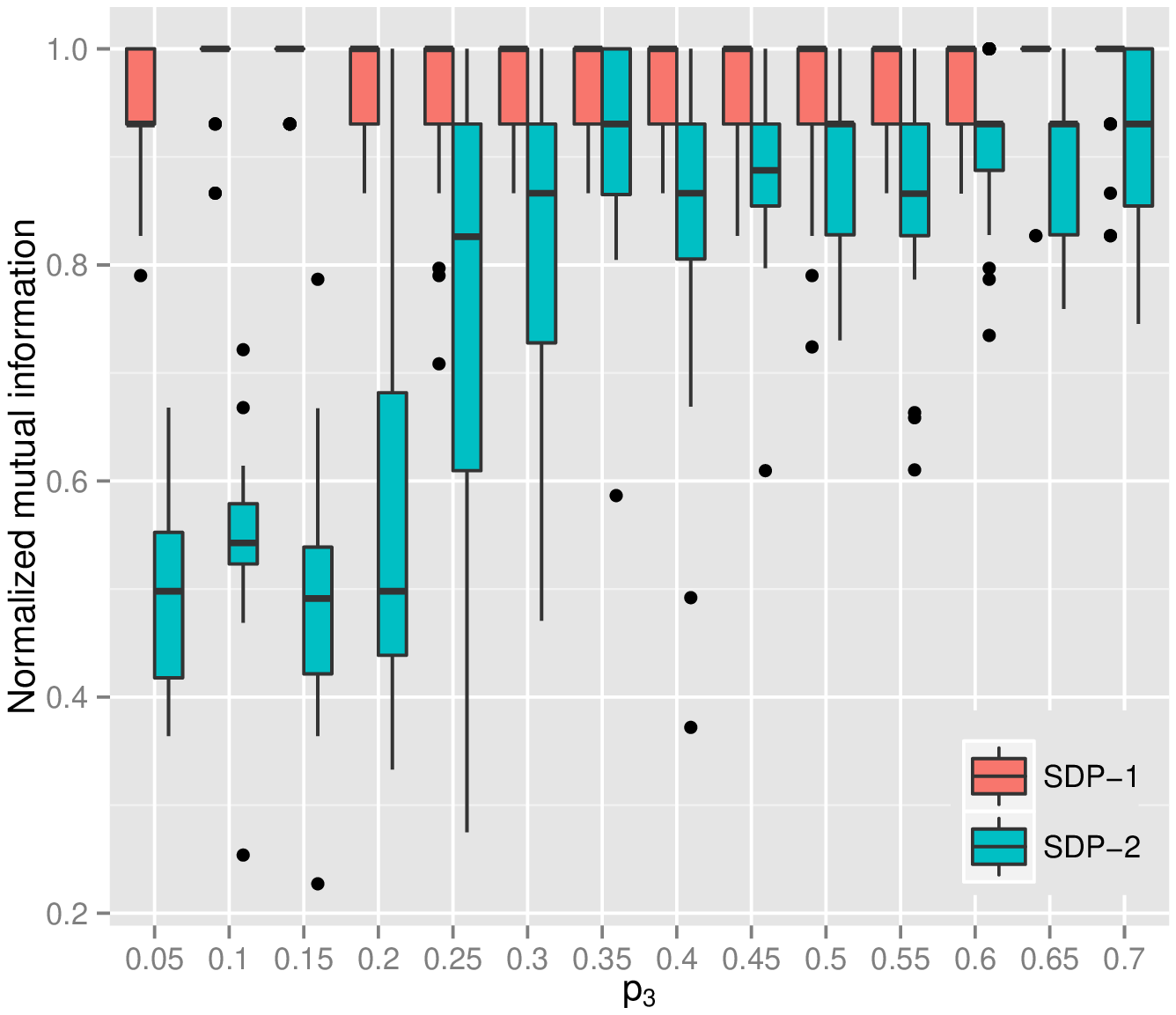}\hspace{2ex}
  \includegraphics[scale=.42]{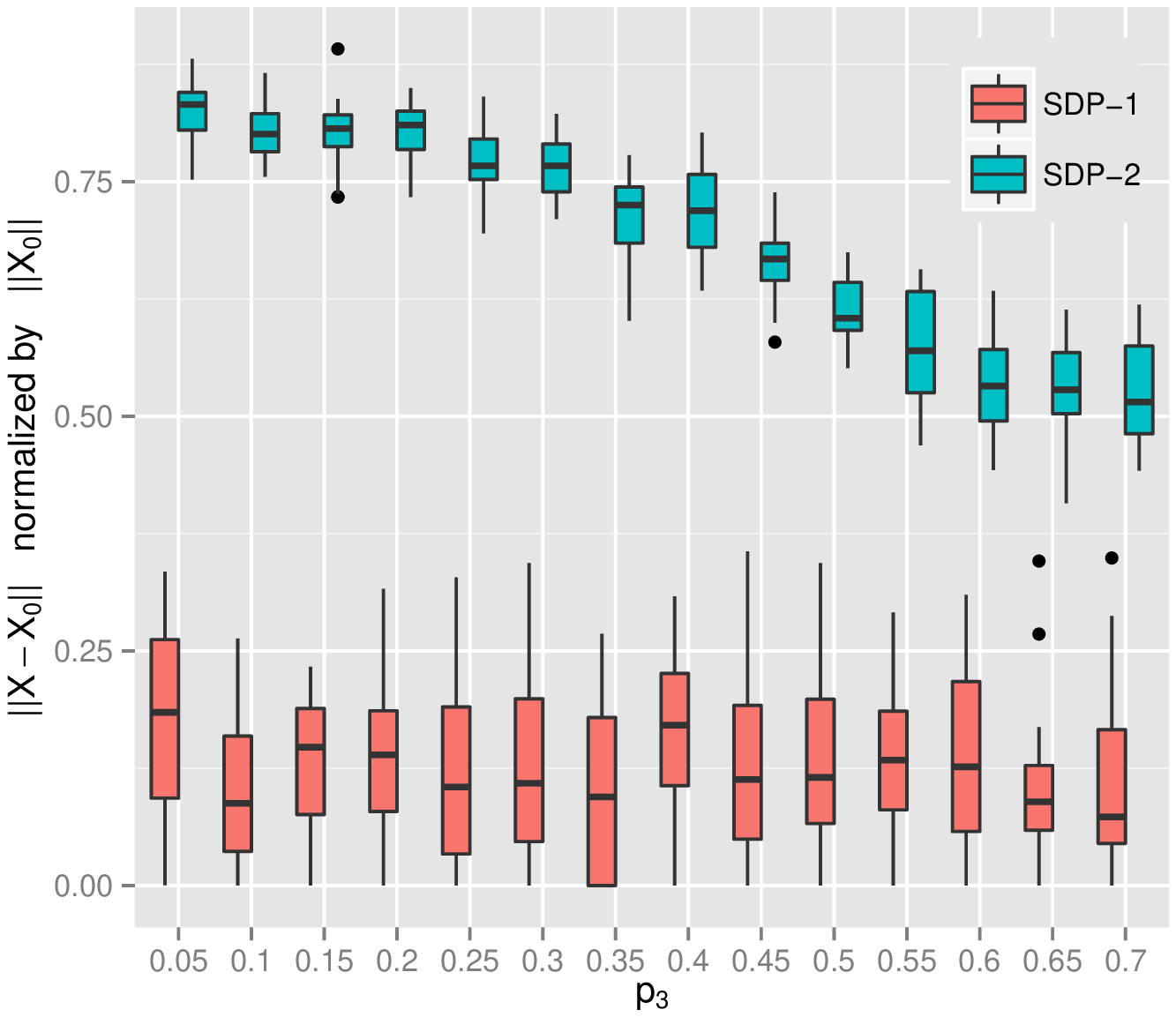}
  \caption{NMI and relative error of $\Xh$ versus $p_3$ for the model with probability matrix~\eqref{eq:K4:Psi:exam}.}
  \label{fig:assort:nmi:err}
\end{figure}

Finally, we apply various SDPs to graphon estimation for the dolphins network~\cite{Lusseau&Newman2004}, with
$n=62$ nodes. Figure~\ref{fig:dolphins} shows the results for the three SDPs
and the EVT with $K= 3$ and $K =10$.   For SDP-3, we used the median connectivity to set $\lambda$
as suggested in \cite{Cai2014}.   The adjacency matrices in the first row and the graphon estimators
in the second row are both permuted according to the ordering from
Algorithm~\ref{alg:graphon}.   The SDPs again provide a much cleaner
picture of the communities in the data than the EVT.  The blocks found by SDP-1 are similar in size and well separated from each other compared to the other two SDPs. 
We also applied the algorithms with $K=2$ to compare to the the partition suggested
by Fig. 1(b) in \cite{Lusseau&Newman2004}, which can be considered the ground truth for a two-community structure.  SDP-1, SDP-2, SDP-3, and EVT
misclassify  7, 1, 4, and 11 nodes, respectively, out of 62.  Since this 
partition into two blocks has 
unbalanced blocks (20 and 42), we expect SDP-1 to not match it as well.  
However,  if we replace equality constraints with
the inequality ones as discussed in Remark~\ref{rem:outliers}, SDP-1 misclassifies only 2
nodes. It is worth noting that the ground truth in this case is only one possible way to describe the network, taken from one scientific paper focused on the dolphins split, and there may well be more communities than two in the data.  The nine strong (and one weak) clusters found by SDP-1 may be of interest for further understanding of this network.



\newcommand{\dolphins}[3]{{figs/dolphins/dolphins_#1_K#2_#3.eps}}
\newcommand{\dolphwidth}{.118\linewidth}
\newcommand{\dolphscale}{.18}
\begin{figure}[t]
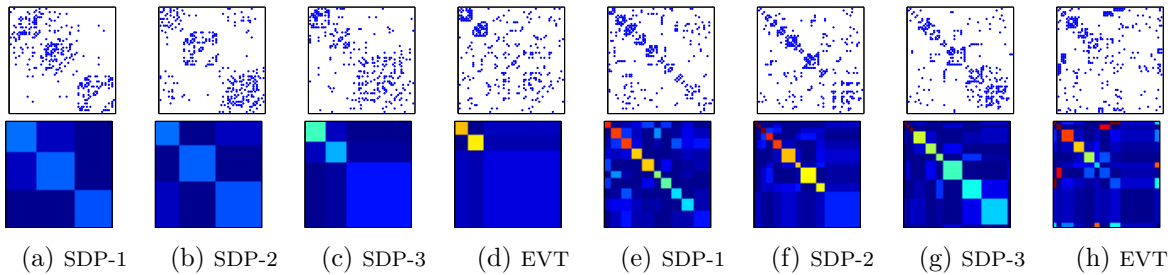

	\centering
	\begin{subfigure}[t]{\dolphwidth}
	\includegraphics[scale=\dolphscale]{\dolphins{adj}{3}{1}}
	\includegraphics[scale=\dolphscale]{\dolphins{Grphn}{3}{1}}
	\subcaption{\scriptsize SDP-1}
	\end{subfigure}
	\begin{subfigure}[t]{\dolphwidth}
	\includegraphics[scale=\dolphscale]{\dolphins{adj}{3}{2}}
	\includegraphics[scale=\dolphscale]{\dolphins{Grphn}{3}{2}}
	\subcaption{\scriptsize SDP-2}
	\end{subfigure}
	\begin{subfigure}[t]{\dolphwidth}
	\includegraphics[scale=\dolphscale]{\dolphins{adj}{3}{3}}
	\includegraphics[scale=\dolphscale]{\dolphins{Grphn}{3}{3}}
	\subcaption{\scriptsize SDP-3}
	\end{subfigure}
	\begin{subfigure}[t]{\dolphwidth}
	\includegraphics[scale=\dolphscale]{\dolphins{adj}{3}{4}}
	\includegraphics[scale=\dolphscale]{\dolphins{Grphn}{3}{4}}
	\subcaption{\scriptsize EVT}
	\end{subfigure}
	\begin{subfigure}[t]{\dolphwidth}
	\includegraphics[scale=\dolphscale]{\dolphins{adj}{10}{1}}
	\includegraphics[scale=\dolphscale]{\dolphins{Grphn}{10}{1}}
	\subcaption{\scriptsize SDP-1}
	\end{subfigure}
	\begin{subfigure}[t]{\dolphwidth}
	\includegraphics[scale=\dolphscale]{\dolphins{adj}{10}{2}}
	\includegraphics[scale=\dolphscale]{\dolphins{Grphn}{10}{2}}
	\subcaption{{\scriptsize SDP-2}}
	\end{subfigure}
	\begin{subfigure}[t]{\dolphwidth}
	\includegraphics[scale=\dolphscale]{\dolphins{adj}{10}{3}}
	\includegraphics[scale=\dolphscale]{\dolphins{Grphn}{10}{3}}
	\subcaption{\scriptsize SDP-3}
	\end{subfigure}
	\begin{subfigure}[t]{\dolphwidth}
	\includegraphics[scale=\dolphscale]{\dolphins{adj}{10}{4}}
	\includegraphics[scale=\dolphscale]{\dolphins{Grphn}{10}{4}}
	\subcaption{\scriptsize EVT}
	\end{subfigure}
	
	\caption{Results for the dolphins network for $K=3$ (a--d) and
          $K=10$ (e--h). Row 1: adjacency matrix sorted
  according to the permutation of Algorithm~\ref{alg:graphon}. Row 2:
  Graphon estimator $\Mh_{\Zh}$ of Algorithm~\ref{alg:graphon}. 
  }
	\label{fig:dolphins}
\end{figure}
%


\vspace{-1ex} 
\section{Discussion} \label{sec:discuss}
In this paper, we have put several SDP relaxations of the MLE
into a unified framework (Table~\ref
{SDP-table}) by treating them as relaxations over different parameter spaces.  
SDP-1, the tighter relaxation we proposed, was shown to empirically dominate previous relaxations, and while all the SDPs we considered are strongly consistent on the strongly assortative class of block models, we showed that SDP-1 is strongly consistent  on the much larger class of weakly assortative models, while  SDP-2 fails outside the strongly assortative class. We proposed a mixture of SDP-1 and SDP-3 which combines the flexibilities of both, namely, consistency in weakly assortative and unbalanced models.
It remains an open question whether a SDP relaxation can work for mixed networks with both assortative and dissortative communities. There are some indications that one can tackle mixed networks by applying SDPs to $|A| = \sqrt{A^2}$, the positive square-root of $A$.


 We also note that SDP-3 is harder to compare directly to SDP-1 or SDP-2 because it depends on a 
tuning parameter $\lambda$.  However,  Lagrange duality implies that
for every $A$ and $K$ there exists a $\lambda$ that makes SDP-3
equivalent to SDP-2.  In general, SDP-3 is more flexible than SDP-2 because of the continuous parameter $\lambda$, but this also makes it unsuitable for certain tasks such as histogram estimation.   
Empirically, the SDPs outperformed adjacency-based spectral clustering (EVT), especially for a large number of communities $K$. This is reflected in current theoretical guarantees, where the conditions for the SDPs have better dependence on $K$ that those available for the EVT. In addition, SDP formulation of EVT shows it to be a looser relaxation than, say, SDP-1 for balanced planted partition model. The three SDPs also seem to be inherently more robust to noise than the EVT, perhaps due to the implicit regularization effect of the doubly nonnegative cone.


\appendix



\section{Proof of Lemma~\ref{LEM:VALID:GAMMA:SDP1}}\label{sec:proof:valid:Gamma:SDP1}
As in the proof of Theorem~\ref{thm:consist:SDP2}, and in accordance with condition (A2), we set $\Gamma_{\subb{k}} := 0$.
Then, 
\begin{align}
  \label{eq:Lambda:def:SDP1}
  \begin{split}
    \Lambda_{\subb{k}} &= \mu_{\subb{k}}  \onev_m^T + \onev_m \mu_{\subb{k}}^T + \diag^*(\nu_{\subb{k}})- A_{\subb{k}} \\
    \Lambda_{\subb{k}^c \subb{k}} &= \mu_{\subb{k}^c}  \onev_m^T + \onev_{n-m} \mu_{\subb{k}}^T - (A+\Gamma)_{\subb{k}^c \subb{k}} 
  \end{split}
\end{align}
Recalling that $[\dg{}{k}]_{\subb{k}} = A_{\subb{k}} \onev_m$, we can rewrite~\eqref{eq:A1:equiv:1} as 
\begin{alignat}{4}
 \label{eq:SDP1:A1:equiv:1a}
  \Lambda_{\subb{k}} \onev_m = 0  &\iff 
    && \mu_{\subb{k}} m + \onev_m \mu_{\subb{k}}^T \onev_m + \nu_{\subb{k}} - [\dg{}{k}]_{\subb{k}} &&= 0 \\
 \label{eq:SDP1:A1:equiv:1b}
  \Lambda_{\subb{\ell}\subb{k}} \onev_m = 0 &\iff 
    && \big[\mu_{\subb{\ell}} \onev_m^T + \onev_m \mu_{\subb{k}}^T - (A+\Gamma)_{\subb{\ell} \subb{k}}\big] \onev_m &&= 0 , \quad k \neq \ell
 \end{alignat}
As in the case of SDP-$2'$ (cf.~Appendices), \eqref{eq:SDP1:A1:equiv:1b} is equivalent to
\begin{align*}
  \mu_{\subb{\ell}} \onev_m^T + \onev_m \mu_{\subb{k}}^T - (A+\Gamma)_{\subb{\ell} \subb{k}} = - B_{\subb{\ell}\subb{k}}
\end{align*}
for some $B_{\subb{\ell}\subb{k}}$ acting on $\Span\{\onev_m\}^\perp$. As before, we set $B_{\subb{\ell}\subb{k}}:=\projonep A_{\subb{\ell}\subb{k}} \projonep$, and note that
\begin{align}
  \label{eq:SDP1:Delta:def}
  \Delta := A - \ex A, \quad  [\ex A]_{\subb{k}} = p_k \onem_m, \quad [\ex A]_{\subb{k}\subb{\ell}} = q_{k \ell} \onem_m, \; k \neq \ell,
\end{align}
so that $B_{\subb{\ell}\subb{k}} = \projonep \Delta_{\subb{\ell}\subb{k}} \projonep$. 
 Now, take $u \in \Span\{\onev_{S_k}\}^\perp$. Then, $u = \sum_k u_{\subb{k}} = \sum_k e_k \otimes u_k$, for some $\{u_k\} \subset \Span\{\onev_m\}^\perp$. We will work with expansion of $u^T \Lambda u$ obtained in~Appendices (Eq.~\eqref{eq:u:Lambda:u:expan}).
  Using $A_{\subb{k}} = p_k \onem_m + \Delta_{\subb{k}}$ and~\eqref{eq:Lambda:def:SDP1}, we have
  \begin{align*}
    u_k^T \Lambda_{\subb{k}} u_k 
      &= u_k^T\big[\mu_{\subb{k}}  \onev_m^T + \onev_m \mu_{\subb{k}}^T  - p_k \onem_m + \diag^*(\nu_{\subb{k}}) - \Delta_{\subb{k}} \big] u_k \\
      &= u_k^T\big( \diag^*(\nu_{\subb{k}}) - \Delta_{\subb{k}} \big) u_k 
  \end{align*}
  using $\onev_m^T u_k = 0$. Let us now choose $\mu$ to be constant over blocks, that is,
  $\mu_{\subb{k}} := \frac12 \mubar_k \onev_m, \, \forall k$ 
  for some numbers $\{\mubar_k\}$ to be determined later. Note that~\eqref{eq:SDP1:A1:equiv:1a} reads
   $ \mubar_k m \onev_m + \nu_{\subb{k}} - [\dg{}{k}]_{\subb{k}} = 0$
  or equivalently 
  \begin{align}\label{eq:nu:def:SDP1:b}
    \diag^*(\nu_{\subb{k}}) = \diag^*([d(\subb{k})]_{\subb{k}}) - \mubar_k m I_m.
  \end{align}
  On the other hand, for $k \neq \ell$, we have $u_k^T \Lambda_{\subb{k}\subb{\ell}} u_\ell = -u_k^T B_{\subb{k}\subb{\ell}} u_\ell = -u_k^T \Delta_{\subb{k}\subb{\ell}}  u_\ell$ 
  since $\{u_k\} \subset \Span\{\onev_m\}^\perp$. We arrive at
  \begin{align}\label{eq:u:Lambda:u:SDP1}
    u^T \Lambda u = \sum_k u_k^T\big( \diag^*([d(\subb{k})]_{\subb{k}}) - \mub_k m I_m - \Delta_{\subb{k}} \big) u_k 
    -\sum_{k \neq \ell} u_k^T \Delta_{\subb{k} \subb{\ell}} u_\ell
  \end{align}


\begin{proof}[Proof of (a) and (b)]
  To verify dual feasibility, recall that $\projonep e_j = e_j - \frac1m \onev_m$. Then,
  \begin{align*}
    [\Gamma_{\subb{k}\subb{\ell}}]_{ij} 
    = e_i^T \Gamma_{\subb{k}\subb{\ell}} e_j 
    &=  \frac12(\mubar_k + \mubar_\ell) + (e_i - \frac1m \onev_m)^T A_{\subb{k}\subb{\ell}} (e_j - \frac1m \onev_m) - A_{ij}  \\
    &=  \frac12(\mubar_k + \mubar_\ell)   - \frac1{m} \big[\dg{i}{\ell}  + \dg{j}{k} - \dav{k}\ell \big] \ge 0.
  \end{align*}
    
  To verify~\eqref{eq:A1:equiv:2}, we recall representation~\eqref{eq:u:Lambda:u:SDP1}. By assumption $ \diag^*([d(\subb{k})]_{\subb{k}}) \succeq \rho_k m I_m$ for all $k$. From~\eqref{eq:nu:def:SDP1:b} and~\eqref{eq:u:Lambda:u:SDP1} it follows that for $u \in \Span\{\onev_{S_k}\}^\perp$
  \begin{align*}
    u^T \Lambda u &\ge 
    \sum_k u_k^T\big(\rho_k m I_m - \mubar_k m I_m - \Delta_{\subb{k}} \big) u_k 
    -\sum_{k \neq \ell} u_k^T \Delta_{\subb{k} \subb{\ell}} u_\ell \\
     &\ge \sum_k \big[ (\rho_k - \mubar_k)m - \mnorm{\Delta_{\subb{k}}} \big]\vnorm{u_k}^2 
     -  u^T (\onem_{\Stru^c} \circ \Delta) u\\ 
      &\ge \min_k \big[(\rho_k - \mubar_k) m - \mnorm{\Delta_{\subb{k}}}\ \big] \,\|u\|^2 - \mnorm{\onem_{\Stru^c} \circ \Delta} \,\|u\|^2.
  \end{align*}
\vspace{-4ex}

  \end{proof}


\section{Probabilistic bounds for $\bBM$}\label{sec:prob:cond:bm}
  We will complete the construction of $(\mu,\nu,\Gamma)$ in~\crefrange{eq:nu:def:SDP1}{eq:Gamma:def:SDP1} for $\bBM_m(\Qm)$, by specifying $\{\mubar_k\}$ and finishing the the proof of Theorem~\ref{thm:consist:SDP1}. The following is the analogue of Lemma~\ref{LEM:DEG:CONCENT:SDP2} in~Appendices, for $\bBM_m(\Qm)$. The proof is similar and is omitted.
  \begin{lem}\label{lem:deg:concent:SDP1}
  Let $\gamma_k := \sqrt{ (4c_1 \log n)/ \pb_k}$ and $\zeta_{k \ell} := \sqrt{ (4c_2 \log n)/ \qb_{k\ell}}$. Assume $\gamma_k,\zeta_{k \ell} \in [0,3]$.~Then,
  \begin{alignat*}{4}
    \dg{i}{k} &\ge \pb_k(1-\gamma_{k \ell}), 
      &&\; i \in \subb{k}, \forall k
      && \quad \text{w.p. at least $1 - n^{-(c_1-1)}$, and} \\
    \big| \dg{i}{\ell} - \qb_{k\ell} \big| &\le \zeta_{k\ell} \, \qb_{k\ell}, 
      &&\; i \in \subb{k}, \forall (k \neq \ell),   
      &&\quad \text{w.p. at least $1-2 m^{-1}n^{-(c_2-2)}$}.
  \end{alignat*}
  \end{lem}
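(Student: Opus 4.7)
The plan is to apply multiplicative Chernoff--Bernstein bounds block-by-block and then union bound, in direct analogy with the proof of Lemma~\ref{LEM:DEG:CONCENT:SDP2}, replacing the scalar parameters $(p,q)$ of $\bsymBM(p,q)$ by the block-specific parameters $(p_k, q_{k\ell})$ of $\bBM_m(\Qm)$. The key structural observation is that for $i \in \subb{k}$ the within-block degree $\dg{i}{k} = \sum_{j \in \subb{k}} A_{ij}$ is, up to an at-most-1 diagonal perturbation (absorbed into the slack), a sum of $m-1$ independent $\bern(p_k)$ variables with mean essentially $\pb_k$; and for $i \in \subb{k}$, $\ell \neq k$, the cross-block degree $\dg{i}{\ell} = \sum_{j \in \subb{\ell}} A_{ij}$ is a sum of $m$ independent $\bern(q_{k\ell})$ variables with mean exactly $\qb_{k\ell}$.

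For part~(i) I would use the lower-tail Chernoff bound
$\pr\bigl(\dg{i}{k} \le (1-\gamma_k)\pb_k\bigr) \le \exp\bigl(-\gamma_k^2 \pb_k/4\bigr)$,
valid for $\gamma_k \in [0,3]$. Substituting $\gamma_k^2 \pb_k = 4 c_1 \log n$ gives a single-node failure probability of $n^{-c_1}$, and a union bound over the $n = mK$ nodes $i$ (and the $K$ blocks to which $i$ belongs, which add nothing since each $i$ lies in exactly one $\subb{k}$) yields overall failure probability at most $n \cdot n^{-c_1} = n^{-(c_1-1)}$, as claimed.

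For part~(ii) I would apply the two-sided form
$\pr\bigl(|\dg{i}{\ell} - \qb_{k\ell}| \ge \zeta_{k\ell}\qb_{k\ell}\bigr) \le 2\exp\bigl(-\zeta_{k\ell}^2 \qb_{k\ell}/4\bigr) = 2\, n^{-c_2}$,
and union bound over $i \in \subb{k}$ and the $K(K-1)$ ordered pairs $k \neq \ell$. The number of events is at most $m \cdot K(K-1) \le n K = n^2/m$, so the aggregate failure probability is at most $2 m^{-1} n^{-(c_2-2)}$. The restriction $\gamma_k, \zeta_{k\ell} \in [0,3]$ is what puts us in the regime where multiplicative Chernoff applies with a clean quadratic exponent (outside this range one must pass to the linear-exponent Bernstein form).

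I do not anticipate any serious obstacle here: the argument is a direct transcription of the $\bsymBM(p,q)$ case with block-specific parameters, and the only genuine bookkeeping is ensuring that (a) the single extra $A_{ii}$ term in the within-block sum is absorbed into the slack provided by the constant $1/4$ in the exponent rather than the tighter $1/2$ or $1/3$, and (b) the union-bound count $K(K-1) m \le n^2/m$ is tracked correctly to produce the $m^{-1}$ factor in the stated failure probability.
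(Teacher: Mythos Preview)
Your proposal is correct and follows essentially the same route as the paper: the paper omits the proof of this lemma, pointing to the proof of Lemma~\ref{LEM:DEG:CONCENT:SDP2}, which applies Bernstein's inequality with the bound $\phi(t) \ge t^2/4$ for $t \in [0,3]$ and then union bounds exactly as you describe (over $n$ nodes for the within-block part, and over $2\binom{K}{2}m \le n^2/m$ events for the cross-block part). One small difference in bookkeeping: the paper handles the diagonal entry $A_{ii}$ by simply assuming the diagonal is filled with $\bern(p_k)$ variates (noting this does not affect the primal since $X_{ii}=1$), rather than absorbing it into the slack as you suggest, but either treatment works.
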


  We also have the following corollary of Proposition~\ref{prop:key:adj:concent} for $\bBM_m(\Qm)$. Recall the chain of definitions and equivalences:
   $ \qsbmax := \max_k \,\qsb_k = \max_{k \neq \ell} \,\qb_{k \ell} = m (\max_{k \neq \ell} \, q_{k \ell} ) =: m \qmax$.

  \begin{cor}\label{cor:adj:concent:SDP1}
  Let $A \in \{0,1\}^{n \times n}$ be distributed as $\bBM_m(\Qm)$ and $\Delta := A - \ex A$. Assume that $p_k \ge (C' \log m) / m$ for all $k$ and $\qmax \ge (C' \log n)/ n$. Then, 
  \begin{itemize}
    \item $\max_k \mnorm{\Delta_{\subb{k}}} \le C \sqrt{\pb_k}$, w.p. at least $1- c K m^{-r}$. 
    \item $\mnorm{\onem_{\Stru^c} \circ \Delta} \le C \sqrt{\qsbmax K}$ w.p. at least $1-c n^{-r}$.
  \end{itemize}
  \end{cor}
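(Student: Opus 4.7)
The plan is to apply Proposition~\ref{prop:key:adj:concent} twice, once for each of the two bounds. Both statements are of the form ``operator norm of a centered binary symmetric matrix with independent lower-triangle entries,'' which is exactly the setting the proposition handles. The main task is to identify the correct $\sigma^2$ and the correct ``$n$'' in each application, and to verify the structural hypotheses (symmetry, independent lower triangle, zero diagonal) are preserved under the two operations at play, namely principal submatrix extraction and Schur-masking.

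For the first bound, I would restrict attention to the block $\subb{k}\times\subb{k}$. The submatrix $\Delta_{\subb{k}} = A_{\subb{k}} - p_k\onem_m$ is an $m\times m$ symmetric matrix whose strict lower triangle consists of independent centered Bernoulli$(p_k)$ entries, and whose diagonal can be taken to be zero (recalling from Section~\ref{sec:SBM:def} that the $A_{ii}$ are not part of the data and are set to match the mean). The entrywise variance is bounded by $p_k(1-p_k)\le p_k$, so I set $\sigma^2 = p_k$. The hypothesis $m\sigma^2 = mp_k \ge C'\log m$ is exactly the standing assumption $p_k \ge (C'\log m)/m$. Proposition~\ref{prop:key:adj:concent} (with $n$ replaced by $m$) then gives $\mnorm{\Delta_{\subb{k}}} \le C\sqrt{p_k m} = C\sqrt{\pb_k}$ with probability at least $1 - c m^{-r}$, and a union bound over $k=1,\dots,K$ yields the first conclusion with failure probability $cKm^{-r}$.

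For the second bound, let $\At := \onem_{\Stru^c}\circ \Delta$. This is the matrix obtained from $\Delta$ by zeroing out the diagonal blocks. It is symmetric, its diagonal is zero (since the diagonal sits inside $\Stru$), and its lower-triangle entries are independent, because Schur-masking by a fixed $0/1$ matrix preserves independence of the surviving entries. The nonzero entries are centered Bernoulli$(q_{k\ell})$ variables for $k\neq\ell$, whose variance is at most $q_{\max}(1-q_{\max})\le q_{\max}$, so I set $\sigma^2 = q_{\max}$. The hypothesis $n\sigma^2 = nq_{\max}\ge C'\log n$ is precisely the second standing assumption. Proposition~\ref{prop:key:adj:concent} then gives $\mnorm{\At}\le C\sqrt{q_{\max}\, n}$ with probability at least $1 - cn^{-r}$. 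Using $n = mK$ and the identity $m\,q_{\max} = \qsbmax$ (see the chain of equalities immediately preceding the corollary), this rewrites as $C\sqrt{\qsbmax K}$, as required.

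There is no substantive obstacle here: both parts are essentially direct invocations of Proposition~\ref{prop:key:adj:concent}. The only points that require a moment of care are (i) verifying that the structural hypotheses of the proposition transfer under the restriction $\Delta\mapsto \Delta_{\subb{k}}$ and the Schur mask $\Delta\mapsto \onem_{\Stru^c}\circ\Delta$, and (ii) correctly matching the proposition's generic ``$n$'' to $m$ in the first application and to $n$ itself in the second, so that the variance threshold condition is interpreted with the right dimension.
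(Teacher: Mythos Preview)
Your proposal is correct and follows essentially the same approach as the paper: apply Proposition~\ref{prop:key:adj:concent} to each $m\times m$ diagonal block $\Delta_{\subb{k}}$ with $\sigma^2=p_k$ and take a union bound, then apply it once more to the $n\times n$ masked matrix $\onem_{\Stru^c}\circ\Delta$ with $\sigma^2=q_{\max}$ and rewrite $\sqrt{q_{\max}\,n}=\sqrt{\qsbmax K}$. Your additional remarks verifying that symmetry, independence of the lower triangle, and the zero diagonal survive the restriction and masking operations are in fact more careful than the paper's own proof.
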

  \begin{proof}
    The assertion about diagonal blocks follows as in Corollary~\ref{cor:adj:concent:SDP2} in Appendices. 
    For the second assertion, we note that $\onem_{\Stru^c} \circ \Delta$
    is an $n \times n$ matrix whose entries have variance $\le \max_{k,\ell} \,(q_{k,\ell}) = \max_{k,\ell} (\qb_{k,\ell}/m) =  \qsbmax / m$, hence  w.p. at least $1-cn^{-r}$,  $\mnorm{\onem_{\Stru^c} \circ \Delta} \le C \sqrt{(\qsbmax/m) n}$. 
  \end{proof}

  According to Lemma~\ref{lem:deg:concent:SDP1}, for sufficiently small $\gamma_k$ and $\zeta_{k,\ell}$, we have w.h.p. that
  $\dav{k}{\ell}$  also lies in $[\qb_{k \ell}(1-\zeta_{k\ell}),\qb_{k \ell} (1+\zeta_{k \ell})]$, for $k \neq \ell$, so that 
  \begin{align*}
    \dg{i}{\ell}  + \dg{j}{k} - \dav{k}\ell &\le \qb_{k \ell}(1+3\zeta_{k \ell}) \\
     &\le \qb_{k \ell} + 3 \sqrt{4c_2 \, \qb_{k \ell} \log n}, \quad (i,j) \in \subb{k}\times \subb{\ell}.
  \end{align*}
  Note that right-hand side is increasing in $\qb_{k \ell}$. We also note the following key inequality
    $\qb_{k \ell} \le \frac12 (\qsb_k + \qsb_\ell)$
  obtained by summing the following two inequalities
  \begin{align*}
    \frac12 \qb_{k \ell} \le \frac12 \max_{r = k, s \neq k} \qb_{r s}, \quad 
     \frac12 \qb_{k \ell}  \le \frac12 \max_{r \neq  \ell, s = \ell} \qb_{r s}
  \end{align*}
  which hold for $k \neq \ell$. Hence,
  \begin{align*}
    \qb_{k \ell} + 3 \sqrt{4c_2 \, \qb_{k \ell} \log n} 
      &\le \frac12(\qsb_k + \qsb_\ell) + 3 \sqrt{2c_2 \, (\qsb_k+\qsb_\ell) \log n} \\
      &\le \frac12\Big(\qsb_k + 6\sqrt{2c_2 \, \qsb_k \log n}\Big) +
       \frac12 \Big(\qsb_\ell + 6 \sqrt{2c_2 \, \qsb_\ell \log n}\Big) 
  \end{align*}
  where we have used $\sqrt{x+y} \le \sqrt{x} + \sqrt{y}$ for $x,y \ge 0$. Thus, taking
  \begin{align*}
    \mubar_k := \frac1m \phib_k, \quad \phib_k:= \qsb_k + 6\sqrt{2c_2 \, \qsb_k \log n}
  \end{align*}
  satisfies~\eqref{eq:mu:low:bound:SDP1}. We also have 
  $
    m \rho_k := \min_{i \in \subb{k}} \dg{i}{k} \ge \pb_k - \sqrt{4c_1\, \pb_k \log n}
  $,
  and from Corollary~\ref{cor:adj:concent:SDP1}, $\mnorm{\Delta_{\subb{k}}} \le C \sqrt{\pb_k}$ for all $k$. It follows that
  \begin{align*}
    (\rho_k - \mubar_k) m - \mnorm{\Delta_k}
      &\ge \pb_k - \sqrt{4c_1\, \pb_k \log n}
        - \Big(\qsb_k + 6\sqrt{2c_2 \, \qsb_k \log n}\Big) - C \sqrt{\pb_k} \\
      &\ge (\pb_k 
        - \qsb_k) - (C + \sqrt{4c_1}) \sqrt{\pb_k \log n} - 6\sqrt{2c_2 \, \qsb_k \log n} 
  \end{align*}
   By Corollary~\ref{cor:adj:concent:SDP1}, $\mnorm{\onem_{\Stru^c} \circ \Delta} \le C \sqrt{\qsbmax K}$. Thus, to satisfy~\eqref{eq:Delta:up:bound:SDP1}, it is enough to have
  \begin{align*}
    \min_k \Big[  (\pb_k - \qsb_k) - (C + \sqrt{4c_1}) \sqrt{\pb_k \log n} - 6\sqrt{2c_2 \, \qsb_k \log n} 
    \Big] &> C \sqrt{\qsbmax K}.
  \end{align*}
  which is implied by
  \begin{align}\label{eq:SDP1:log:cond}
  \min_k \Big[  (\pb_k - \qsb_k) - C_2 \big(\sqrt{\pb_k \log n} + \sqrt{\qsb_k \log n}\big) 
    \Big] &> C \sqrt{\qsbmax K}.
  \end{align}

  Auxiliary conditions we needed on $\pb_k$ and $\qb_{k \ell}$ were $\pb_k \ge (4c_1/9) \log n$ and $\qb_{k \ell} \ge (4c_2/9) \log n$ from Lemma~\ref{lem:deg:concent:SDP1} and $\pb_k \ge C' \log m$ and $n \qmax > C' \log n$. As before, we can drop the lower bounds on $\{q_{k\ell}\}_{k \neq \ell}$ due to~Corollary~\ref{cor:BM:ordering}. The lower bounds on $\pb_k$ are implied by $\pb_k \ge (C' \vee (4c_1/9)) \log n$. This completes the proof. To get to the form in which the theorem is stated, replace $c_1$ with $c_1 + 1$ and $c_2$ with $c_2 + 2$, and divide~\eqref{eq:SDP1:log:cond} by $\log n$.





	    
	       
	             







The following appendices contain proofs of the remaining results, a detailed description of the implementation of an ADMM solver for SDP 1, and additional details on simulations.

\section{Proofs of Section~\ref{SEC:SDP:RESPECTS:ORDERING}}\label{sec:proof:SDP:respects:ordering}
\begin{proof}[Proof of Lemma~\ref{lem:deterministic:nesting}]
  Let $\Stru := \supp(\Xtru)$. We proceed in two steps, first setting elements on $\Stru$ to one, and then setting elements on $\Stru^c$ to zero. 
  More precisely, let $\At_1 = \At$ on $\Stru$ (meaning that $[\At_1]_{ij} = \At_{ij}$ for $(i,j) \in \Stru$) and $\At_1 = A$ on $\Stru^c$.
  Let $X$ be any feasible solution other than $\Xtru$, so that $0 \le X\le 1$. We will use the notation $\ip{A,X}_{\Stru} := \sum_{(i,j)\in \Stru} A_{ij} X_{ij}$. By (unique) optimality of $\Xtru$ for $A$, we have $\ip{A,X} < \ip{A,\Xtru}$. Then,
  \begin{align*}
       \ip{\At_1,X-\Xtru}_{\Stru^c}  = \ip{A,X-\Xtru}_{\Stru^c} < \ip{A,\Xtru-X}_{\Stru} \le \ip{\At_1,\Xtru-X}_{\Stru}
  \end{align*}
  where the first equality is by assumption and the last inequality follows from $A \le \At_1$ on $\Stru$, and that $\Xtru - X \ge 0$ on $\Stru$. (Note that $\Xtru = \onem_m$ on $\Stru$ and $X \le 1$ everywhere.) Hence, the conclusion of the lemma follows for $\At_1$.  Now, we can write
  \begin{align*}
    \ip{\At,X} \le \ip{\At_1,X} < \ip{\At_1,\Xtru} = \ip{\At,\Xtru}.
  \end{align*}
  The first inequality is by nonnegativity of $X$ and $\At \le \At_1$ everywhere. The second inequality is by (unique) optimality of $\Xtru$ for $\At_1$. The last equality is by $\At = \At_1$ on $\Stru$.
\end{proof}

\begin{proof}[Proof of Corollary~\ref{cor:BM:ordering}]
   We construct a coupling between $A$ and $\At$. Recall that $\subb{k}$ denotes the indices of nodes in community $k$. Draw $A \sim \bBM_m(\Qm)$, and draw
   \begin{alignat*}{3}
    R_{ij} &\sim \bern\Big(\frac{\pt_k - p_k}{1- p_k}\Big), &&\quad (i,j)\in \subb{k}, \forall k\\
    R_{ij} &\sim \bern(\qt_{k \ell}/q_{k \ell}), &&\quad (i,j)\in \subb{k} \times \subb{\ell}, \forall k < \ell
   \end{alignat*}
   independently from $A$. Extend $R$ symmetrically, by setting $R_{\subb{k}\subb{\ell}} = R_{\subb{k}\subb{\ell}}^T$ for $k > \ell$. Let 
   \begin{alignat*}{3}
    \At_{ij} &:= 1-(1-A_{ij})(1-R_{ij}), &&\quad (i,j)\in \subb{k}, \forall k\\
    \At_{ij} &:= A_{ij}R_{ij}, &&\quad (i,j)\in \subb{k}\times \subb{\ell}, \forall k < \ell
   \end{alignat*}
   and extend symmetrically. It is easy to verify that $\At$ has distribution $\bBM_m(\Qmt)$. Moreover, by construction $\At \ge A$ on $\supp(\Xtru)$ and $\At \le A$ on $\supp(\Xtru)^c$. The result now follows from Lemma~\ref{lem:deterministic:nesting}.
\end{proof}

\section{Proof of Lemma~\ref{LEM:SUFF:COND:EXACT:RECOV}}\label{sec:proof:suff:cond:exact:recov}
To prove the lemma, we need the following intermediate result.
\begin{lem}\label{LEM:BLOCK:CONSTANCY}
  Let $X\in\Symm{n}$ with $\rangeS(X) \subset \Span\{\onev_{\subb{k}}\}$. Then $X = B \otimes \onem_m$ for some $B \in \Symm{K}$, that is, $X$ is block-constant.
\end{lem}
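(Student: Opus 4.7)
The plan is to unpack what the range condition says about columns of $X$, then invoke symmetry to get the analogous statement for rows, and finally read off the Kronecker structure.

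First I would observe that $\rangeS(X) \subset \Span\{\onev_{\subb{k}}\}$ means every column of $X$ is a linear combination of the community indicator vectors $\onev_{\subb{1}},\dots,\onev_{\subb{K}}$. Concretely, for each fixed $j$ there exist scalars $c_1(j),\dots,c_K(j)$ with $X_{\cdot,j} = \sum_{k=1}^K c_k(j)\, \onev_{\subb{k}}$. This is exactly the statement that the map $i \mapsto X_{ij}$ is constant on each block $\subb{k}$: if $i,i' \in \subb{k}$ then $X_{ij}=X_{i'j}$.

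Next I would apply the symmetry hypothesis $X = X^T$. The previous step, applied to $X^T$ (whose columns are the rows of $X$), gives that for each fixed $i$ the map $j \mapsto X_{ij}$ is also constant on each block $\subb{\ell}$. Combining the two observations, $X_{ij}$ depends only on the pair $(k,\ell)$ such that $i \in \subb{k}$ and $j \in \subb{\ell}$; call the common value $B_{k\ell}$.

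Using the chosen enumeration in which $\subb{k} = \{(k-1)m+1,\dots,km\}$ (so $\Xtru = I_K \otimes \onem_m$), this is exactly the identity $X = B \otimes \onem_m$. Symmetry of $X$ forces $B_{k\ell} = B_{\ell k}$, so $B \in \Symm{K}$, completing the proof. There is no real obstacle here — the only thing to be careful about is that the Kronecker product formula relies on the contiguous indexing of the blocks, which is harmless since it is the convention fixed earlier for $\Xtru$.
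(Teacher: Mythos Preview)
Your argument is correct but takes a different route from the paper. The paper invokes the spectral theorem: since $X$ is symmetric with $\rangeS(X)\subset\Span\{\onev_{\subb{k}}\}$, each eigenvector $v_j$ of $X$ lies in that span and hence has the form $v_j = u_j\otimes\onev_m$ for some $u_j\in\reals^K$; then $X=\sum_j\beta_j v_jv_j^T = \big(\sum_j\beta_j u_ju_j^T\big)\otimes\onem_m$ using the mixed-product property of Kronecker products. Your approach is more elementary, working directly at the level of entries (column-constancy on blocks, then row-constancy via symmetry) and avoiding the spectral decomposition entirely. The paper's route makes the symmetry of $B$ automatic from the eigendecomposition, whereas you check it at the end; conversely, your argument would extend verbatim to non-symmetric $X$ (yielding a non-symmetric $B$), which the spectral route would not.
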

\begin{proof}
   Note that $\onev_{\subb{k}} = \bv{k} \otimes \onev_m$ where $\bv{k} = \basisv{k}{K}$ is the $k$-th basis vector of $\reals^K$.
  An eigenvector $v_j$ of $X$ will be of the form $v_j = \sum_k \alpha_k^j \onev_{\subb{k}} = (\sum \alpha_k^j \bv{k}) \otimes \onev_m = u_j \otimes \onev_m$  for some $u_j \in \reals^K$. Then,
  \begin{align*}
    X = \sum_j \beta_j v_j v_j^T 
      &= \sum_j \beta_j (u_j \otimes \onev_m) (u_j \otimes \onev_m)^T \\
      &= \sum_j \beta_j (u_j u_j^T) \otimes(\onev_m \onev_m^T) = \big(\sum_j \beta_j u_j u_j^T\big) \otimes \onem_m.
   \end{align*} 

\end{proof}

\begin{proof}[Proof of Lemma~\ref{LEM:SUFF:COND:EXACT:RECOV}.]
  Conditions (A1) and (A2) together satisfy (CSa) and (CSb) for $\Xtru$ and $\dualtri$, in addition to dual feasibility. Hence, $\Xtru$ is an optimal solution of the primal problem. To show uniqueness, let $X$ be any optimal primal solution. Then $X$ and the specific triple $\dualtri$ assumed in the statement of the lemma should together satisfy optimality conditions. (CSb) for $X$ (and the triple) implies 
  \begin{align*}
    \rangeS(X) \subset \ker\big(\Lambda(\mu,\nu,\Gamma)\big) = \Span\{\onev_{\subb{k}}\}
  \end{align*}
  by (A1), which then implies $X = B \otimes \onem_m$ for some $B = (b_{k \ell})\in \Symm{K}$ by Lemma~\ref{LEM:BLOCK:CONSTANCY}. 
  Note that this means $X_{\subb{k} \subb{\ell}} = b_{k \ell} \onem_m$.
  Now, (CSa) for $X$ implies
  \begin{align*}
    0 = X_{\subb{k}\subb{\ell}}  \circ \Gamma_{\subb{k}\subb{\ell}} = b_{k \ell} \Gamma_{\subb{k}\subb{\ell}}, \quad \text{for}\; k \neq \ell
  \end{align*}
  using $\onem_m \circ D = D$, for any $D$. But since $\Gamma_{\subb{k}\subb{\ell}}$ is not identically zero by (A3), we should have $b_{k\ell} = 0$, for $k\neq \ell$. One the other hand, primal feasibility of $X$, in particular, $X_{ii} = 1$ implies $b_{kk} = 1$. That is, $B = I_K$, hence $X = I_K \otimes \onem_m = \Xtru$.
\end{proof}


\subsection{Proof of Theorem~\ref{thm:consist:SDP2}: primal-dual witness for SDP-$2'$}\label{sec:proof:consist:SDP2}
For SDP-$2'$, the linear condition $\Lc_2(X) = b_2$ is just the scalar equation $\ip{\onem_n,X} = n^2/K = n m$. The dual variable $\mu$ is a scalar in this case, and we have $b_2 = m n$ and $\Lc_2^*(\mu) = \mu \onem_n$. Hence, 
we have (cf.~\eqref{eq:Lambda:def:gen})
\begin{align}\label{eq:Lambda:def:SDP2}
  \Lambda = \Lambda(\mu,\nu,\Gamma) = \mu \onem_n + \diag^*(\nu) -A - \Gamma .
\end{align}

Let $\dg{}{k} = A \onev_{\subb{k}}$ be the vector of node degrees relative to subgraph $\subb{k}$. We denote its $i$th element by $\dg{i}{k} = \sum_{j \in \subb{k}} A_{ij}$.  Let $\projonep := I_m - \frac1m \onem_m$ be projection onto $\Span\{\onev_m\}^\perp$. The following summarizes our primal-dual construction, modulo the choice of $\mu$:
\begin{align}\label{eq:nu:def}
  \nu_i &:=   \dg{i}{k} - \mu m, \quad \text{for} \;i \in S_k,\\
  \begin{split}
  \label{eq:Gamma:def}
  \Gamma_{\subb{k}} &:= 0, \quad  \forall k \\
  \Gamma_{\subb{k}\subb{\ell}} &:= 
    \mu \onem_m + \projonep A_{\subb{k}\subb{\ell}} \projonep - A_{\subb{k}\subb{\ell}},
\end{split}
\end{align}
for all $k \neq \ell$.  Note that $\Gamma$ is symmetric. Let
\begin{align}\label{eq:dav:def}
    \Delta := A - \ex[A], \quad \dav{k}\ell := \frac1m \sum_{i \in \subb{k}} \dg{i}{\ell} =
       \frac1m \sum_{j \in \subb{\ell}} \dg{j}{k}.
\end{align}
 The following lemma, proved in~Section~\ref{sec:proof:valid:Gamma:SDP2}
 , verifies the validity of this construction.

\begin{lem}\label{LEM:VALID:GAMMA:SDP2}
  Let $(\mu,\nu,\Gamma)$ be as in~ \cref{eq:nu:def,eq:Gamma:def}. Then, $\Gamma$ verifies (A2) and~\eqref{eq:A1:equiv:1} holds for all $\mu$. In addition,
  \begin{itemize}
  \setlength\itemsep{0em}
    \item[(a)] $\Gamma$ is dual feasible, i.e. $\Gamma \ge 0$, if for all $i\in \subb{k}, j \in \subb{\ell}, k \neq \ell$, 
    \begin{align}\label{eq:mu:low:bound}
      \mu m \ge \dg{i}{\ell}  + \dg{j}{k} - \dav{k}\ell,
    \end{align}
    and satisfies (A3) if at least one inequality is strict for each pair $k \neq \ell$.
    \item[(b)] $\Gamma$ verifies~\eqref{eq:A1:equiv:2} if
    \begin{align}\label{eq:Delta:up:bound:SDP2}
      (\rho - \mu) m > \mnorm{\Delta}, \quad \text{where} \quad \rho := \min_k \min_{i \in \subb{k}} \dg{i}{k} /m.
    \end{align}

  \end{itemize}
\end{lem}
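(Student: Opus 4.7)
The plan is to verify each of the four claims in turn: (A2), the kernel condition in (A1) (i.e.\ equation~\eqref{eq:A1:equiv:1}), then parts (a) and (b), and to do so by computing $\Lambda$ block-by-block from the definitions in~\cref{eq:nu:def,eq:Gamma:def}.

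First I would unpack $\Lambda = \mu \onem_n + \diag^*(\nu) - A - \Gamma$ on each block. Since $\Gamma_{\subb{k}} = 0$ by construction, (A2) is immediate, and on a diagonal block $\Lambda_{\subb{k}} = \mu \onem_m + \diag^*(\nu_{\subb{k}}) - A_{\subb{k}}$. Applying this to $\onev_m$ and using $A_{\subb{k}}\onev_m = [\dg{}{k}]_{\subb{k}}$ together with the definition $\nu_i = \dg{i}{k} - \mu m$ on $\subb{k}$ makes the terms cancel pairwise. On an off-diagonal block, substituting~\eqref{eq:Gamma:def} collapses everything to $\Lambda_{\subb{\ell}\subb{k}} = -\projonep A_{\subb{\ell}\subb{k}} \projonep$, which annihilates $\onev_m$ because $\projonep \onev_m = 0$. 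Together these show $\Lambda \onev_{\subb{k}} = 0$ for every $k$, which is~\eqref{eq:A1:equiv:1}.

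For part (a), the plan is to compute $[\projonep A_{\subb{k}\subb{\ell}} \projonep]_{ij}$ entry-wise. Using $\projonep = I_m - \frac1m \onem_m$, the three cross-terms identify the row sum of $A_{\subb{k}\subb{\ell}}$ as $\dg{i}{\ell}$, the column sum as $\dg{j}{k}$, and the total sum as $m\, \dav{k}{\ell}$. Plugging into~\eqref{eq:Gamma:def} yields the clean formula
\begin{equation*}
  [\Gamma_{\subb{k}\subb{\ell}}]_{ij} \;=\; \mu - \tfrac{1}{m}\bigl(\dg{i}{\ell} + \dg{j}{k} - \dav{k}{\ell}\bigr),
\end{equation*}
so dual feasibility $\Gamma_{\subb{k}\subb{\ell}} \ge 0$ is equivalent to the inequality~\eqref{eq:mu:low:bound}, and (A3)---non-vanishing of each off-diagonal block---is equivalent to strictness of at least one such inequality for each pair $k\neq \ell$.

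The main obstacle is part (b). Here I would decompose any $u \in \Span\{\onev_{\subb{k}}\}^\perp$ as $u = \sum_k \bv{k} \otimes u_k$ with $u_k \in \Span\{\onev_m\}^\perp$, and exploit two cancellations simultaneously: $\projonep u_k = u_k$, and $\onev_m^\top u_k = 0$ (which kills both the constant $\mu \onem_m$ term and the mean-matrix block-constant part $p\onem_m$ or $q\onem_m$ of $A$). On a diagonal block this leaves $u_k^\top \diag^*(\nu_{\subb{k}}) u_k - u_k^\top \Delta_{\subb{k}} u_k$; on an off-diagonal block it leaves $-u_k^\top \Delta_{\subb{k}\subb{\ell}} u_\ell$. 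Summing and recombining gives the identity
\begin{equation*}
  u^\top \Lambda u \;=\; \sum_k u_k^\top \diag^*(\nu_{\subb{k}}) u_k \;-\; u^\top \Delta u.
\end{equation*}
The first sum is at least $(\rho - \mu) m \|u\|^2$ by the definition of $\rho$ and $\nu$, while the second is at most $\mnorm{\Delta}\|u\|^2$ by the operator-norm bound. Hence $u^\top \Lambda u \ge [(\rho - \mu) m - \mnorm{\Delta}]\|u\|^2$, which is strictly positive under~\eqref{eq:Delta:up:bound:SDP2}, establishing~\eqref{eq:A1:equiv:2}. The only delicate bookkeeping is ensuring the off-diagonal cross terms assemble correctly into $u^\top \Delta u$---this is where the symmetry of $\Delta$ and the fact that $\Delta$ has vanishing block averages (modulo the diagonal, which contributes $O(1)$ and can be absorbed) are used.
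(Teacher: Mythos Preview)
Your proposal is correct and follows essentially the same route as the paper: block-wise computation of $\Lambda$, the entrywise expansion of $\projonep A_{\subb{k}\subb{\ell}}\projonep$ for part~(a), and for part~(b) the decomposition $u=\sum_k \bv{k}\otimes u_k$ with $u_k\in\Span\{\onev_m\}^\perp$, leading to $u^\top\Lambda u = \sum_k u_k^\top\diag^*(\nu_{\subb{k}})u_k - u^\top\Delta u$ and hence the bound $[(\rho-\mu)m-\mnorm{\Delta}]\|u\|^2$.

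One remark on your final sentence: the reassembly $\sum_{k,\ell} u_k^\top \Delta_{\subb{k}\subb{\ell}} u_\ell = u^\top \Delta u$ is purely the block decomposition of the quadratic form and needs nothing beyond the symmetry of $\Delta$. Your appeal to ``vanishing block averages'' of $\Delta$ is misplaced---$\Delta$ is random and has no such property; what you actually used (and already stated earlier) is that the \emph{block-constant} part $\ex A$ vanishes because $\onev_m^\top u_k=0$, so that $u_k^\top A_{\subb{k}\subb{\ell}} u_\ell = u_k^\top \Delta_{\subb{k}\subb{\ell}} u_\ell$. The parenthetical about the diagonal contributing $O(1)$ is likewise unnecessary: the paper simply fills the diagonal of $A$ so that~\eqref{eq:bm:mean:def} holds exactly, and in any case the diagonal of $A$ never interacts with the $\diag^*(\nu)$ term once you have subtracted it off. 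Drop that sentence and the argument is clean.
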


    We note that choosing $\mu m$ to be the maximum of the RHS of~\eqref{eq:mu:low:bound}, i.e.,
    \begin{align*}
      \max_{k,\ell} \max_{i \in \subb{k},\, j \in \subb{\ell}} \big[\dg{i}{\ell}  + \dg{j}{k} - \dav{k}\ell\big]
    \end{align*}
    together with~\eqref{eq:Delta:up:bound:SDP2} gives a deterministic condition for the success of SDP-$2'$. 
    In~Section~\ref{sec:prob:cond:ppbal}, we give a probabilistic version of this condition which completes the proof of Theorem~\ref{thm:consist:SDP2}.



\section{Proof of Lemma~\ref{LEM:VALID:GAMMA:SDP2}}\label{sec:proof:valid:Gamma:SDP2}
We start by seeing how far the KKT conditions determine the dual variables and how much freedom in choosing them is left. In accordance with condition (A2), we set $\Gamma_{\subb{k}} := 0$.
Then, \eqref{eq:A1:equiv:1} holds if and only if $\Lambda_{\subb{k}} \onev_m = 0$ and $ \;\Lambda_{\subb{k}^c\subb{k}} \onev_m = 0$, or equivalently,
\begin{alignat}{4}
\label{eq:A1:equiv:1a}
  \Lambda_{\subb{k}} \onev_m 
    &= (\mu  \onem_m + \diag^*(\nu_{\subb{k}})- A_{\subb{k}} ) \onev_m 
    &&= \mu m \onev_m + \nu_{\subb{k}} - A_{S_k} \onev_m  
    &&= 0\\
\label{eq:A1:equiv:1b}
  \Lambda_{\subb{k}^c\subb{k}} \onev_m 
    &= [\mu \onem_{n-m,m} - (A+\Gamma)_{\subb{k}^c\subb{k}}] \onev_m 
    &&= \mu m \onev_{n-m} - (A+\Gamma)_{\subb{k}^c\subb{k}} \onev_m 
    &&= 0
\end{alignat}
Let $\dg{}{k} = A \onev_{\subb{k}}$ be the vector of node degrees relative to community/subgraph $\subb{k}$. We denote its $i$th element as $\dg{i}{k} = \sum_{j \in \subb{k}} A_{ij}$. Note also that $A_{S_k}\onev_m = [\dg{}{k}]_{\subb{k}}$.
Then, setting $\nu_i :=   \dg{i}{k} - \mu m$ for $i \in \subb{k}$ 
verifies~\eqref{eq:A1:equiv:1a}. To verify~\eqref{eq:A1:equiv:1b}, we need to have
\begin{align}
  (A+\Gamma)_{\subb{k}\subb{\ell}} \onev_m = \mu m \onev_m, \quad \text{for all}\;\ell \neq k.
  \label{eq:Gamma:row:sum:cond}
\end{align}
Note that the same holds for $(A+\Gamma)_{\subb{\ell}\subb{k}}$. That is, every row and column of $(A+\Gamma)_{\subb{k}\subb{\ell}}$, $k\neq \ell$ should sum to a constant ($= \mu m$). 
In other words, $\onev_m$ is a right and left eigenvector of $(A+\Gamma)_{\subb{k}\subb{\ell}}$ associated with eigenvalue $\mu m$. By spectral theorem (i.e., SVD), we should have
\begin{align}\label{eq:B:def:SDP2}
  (A+\Gamma)_{\subb{k}\subb{\ell}} = \mu \onem_m + B_{\subb{k}\subb{\ell}} 
\end{align}
where $B_{\subb{k}\subb{\ell}}$ acts on $\Span\{\onev_m\}^\perp$. 
To satisfy~\eqref{eq:A1:equiv:2}, we first note that
\begin{align*}
  \Span\{\onev_{S_k}\}^\perp = 
  \Big\{ u = \sum_k e_k \otimes u_k:\; u_k \in \reals^m, \, \onev_m^T u_k = 0, \, \forall k\Big\}, 
  \, \text{where}\; e_k = e_k^{(m)}.
\end{align*}
In other words, $\Span\{\onev_{S_k}\}^\perp$ is the set of vectors $u$ such that each sub-vector $u_{S_k}$ sums to zero. Now, take $u \in \Span\{\onev_{S_k}\}^\perp$. Then, $u = \sum_k u_{\subb{k}} = \sum_k e_k \otimes u_k$, for some $\{u_k\} \subset \Span\{\onev_m\}^\perp$, and we have
  \begin{align}\label{eq:u:Lambda:u:expan}
    u^T \Lambda u = \sum_{k,\ell} u_{\subb{k}}^T \Lambda \,u_{\subb{\ell}} = 
    \sum_{k,\ell} u_k^T \Lambda_{\subb{k}\subb{\ell}} \,u_\ell =\; 
    \sum_{k} u_k^T \Lambda_{\subb{k}} u_k + 
    \sum_{k\neq \ell} u_k^T \Lambda_{\subb{k}\subb{\ell}} u_\ell.
  \end{align}
Recall that $\Delta := A - \ex[A]$ where $[\ex A]_{\subb{k}} = p \onem_m$ and $[\ex A]_{\subb{k}\subb{\ell}} = q\onem_m, \; k \neq \ell$.
  Then, $A_{\subb{k}} = p \onem_m + \Delta_{\subb{k}}$ and from~\eqref{eq:Lambda:def:SDP2}, we have $\Lambda_{\subb{k}} =  \mu \onem_m + \diag^*(\nu_{\subb{k}}) - A_{\subb{k}}$. It follows that
  \begin{align*}
    u_k^T \Lambda_{\subb{k}} u_k 
      = u_k^T\big[(\mu -p) \onem_m + \diag^*(\nu_{\subb{k}}) - \Delta_{\subb{k}} \big] u_k 
      = u_k^T\big( \diag^*(\nu_{\subb{k}}) - \Delta_{\subb{k}} \big) u_k 
  \end{align*}
  using the fact that $u_k^T \onem_m u_k = (\onev_m^T u_k)^2 = 0$. We also note from~\eqref{eq:nu:def} that
  \begin{align*}
    \diag^*(\nu_{\subb{k}}) = \diag^*([d(\subb{k})]_{\subb{k}}) - \mu m I_m.
  \end{align*}

  On the other hand, for $k \neq \ell$, we have $\Lambda_{\subb{k}\subb{\ell}} = \mu \onem_m - (A+\Gamma)_{\subb{k}\subb{\ell}} = - B_{\subb{k} \subb{\ell}}$ from~\eqref{eq:Lambda:def:SDP2} and~\eqref{eq:B:def:SDP2}. To summarize, 
  \begin{align}\label{eq:u:Lambda:u:SDP2}
    u^T \Lambda u = \sum_k u_k^T\big( \diag^*([d(\subb{k})]_{\subb{k}}) - \Delta_{\subb{k}} \big) u_k 
    - \mu m \sum_{k} \|u_k\|^2 - \sum_{k \neq \ell} u_k^T B_{\subb{k} \subb{\ell}} u_\ell.
  \end{align}
  To satisfy~\eqref{eq:A1:equiv:2}, we want $u^T \Lambda u$ to be big, which is the case if both $\mu$ and $\{B_{\subb{k} \subb{\ell}}\}$ are small. We are free to choose them subject to dual feasibility constraint $\Gamma \ge 0$, which translates to
  $\mu \onem_m + B_{\subb{k}\subb{\ell}} \ge A_{\subb{k}\subb{\ell}}$. Our construction of $\Gamma_{\subb{k}\subb{\ell}}$ in~\eqref{eq:Gamma:def} corresponds to $B_{\subb{k}\subb{\ell}} := \projonep A_{\subb{k}\subb{\ell}} \projonep $. See also Remark~\ref{rem:trade-off:mu:B} for a discussion of the trade-off involved.

\begin{proof}[Proof of part~(a)]
  To verify dual feasibility, we use  $\projonep e_j = e_j - \frac1m \onev_m$ to write
  \begin{align*}
    [\Gamma_{\subb{k}\subb{\ell}}]_{ij} 
    = e_i^T \Gamma_{\subb{k}\subb{\ell}} e_j 
    &= \mu + (e_i - \frac1m \onev_m)^T A_{\subb{k}\subb{\ell}} (e_j - \frac1m \onev_m) - A_{ij}  \\
    &= \mu - \frac1m e_i^T A_{\subb{k}\subb{\ell}} \onev_m - \frac1{m} \onev_m^T A_{\subb{k}\subb{\ell}} e_j 
      + \frac1{m^2}  \onev_m^T A_{\subb{k}\subb{\ell}} \onev_m\\
    &= \mu - \frac1{m} \big[\dg{i}{\ell}  + \dg{j}{k} - \dav{k}\ell \big] \ge 0
  \end{align*}
  (A3) holds if $ [\Gamma_{\subb{k}\subb{\ell}}]_{ij} > 0$ for at least one $(i,j) \in \subb{k} \times \subb{\ell}$, for each pair $k \neq \ell$, which is equivalent to the stated condition.
\end{proof}
\begin{proof}[Proof of part~(b)]    
  To verify~\eqref{eq:A1:equiv:2}, we recall representation~~\eqref{eq:u:Lambda:u:SDP2}. By assumption $ \diag^*([d(\subb{k})]_{\subb{k}}) \succeq \rho m I_m$ for all $k$. Also, using 
  $B_{\subb{k}\subb{\ell}} = \projonep A_{\subb{k}\subb{\ell}} \projonep$ we have
  \begin{align*}
    u_k^T B_{\subb{k}\subb{\ell}} u_\ell = u_k^T \projonep (q\onem_m + \Delta_{\subb{k}\subb{\ell}}) \projonep u_\ell = 
    u_k^T \projonep \Delta_{\subb{k}\subb{\ell}} \projonep u_\ell = u_k^T \Delta_{\subb{k}\subb{\ell}}  u_\ell
  \end{align*}
  for $\{ u_k \} \subset \Span\{\onev_m\}^\perp$.
  From~\eqref{eq:u:Lambda:u:SDP2} it follows that
  \begin{align*}
    u^T \Lambda u &\ge \rho m \sum_{k} \vnorm{u_k}^2 - \sum_k u_k^T \Delta_{\subb{k}} u_k - \mu m \sum_{k} \vnorm{u_k}^2   
      -\sum_{k \neq \ell}  u_k^T \Delta_{\subb{k}\subb{\ell}}  u_\ell \\
      &= (\rho - \mu) m \|u\|^2 - u^T \Delta u \\
      &\ge \big[(\rho- \mu)m - \mnorm{\Delta}\big] \,\|u\|^2.
  \end{align*}
\end{proof}

\begin{rem}\label{rem:trade-off:mu:B}
    The trade-off in choosing $\mu$ and $B_{\subb{k}\subb{\ell}}$ can be abstracted away in the following subproblem:
    \begin{align*}
      h(\mu) := \min \big\{ \| \Bt\|: \; \mu \onem_m + \Bt \ge \At, \;  \rangeS(\Bt) \subset \Span\{\onev_m\}^\perp \big\}  
     \end{align*}
     where $\At \in \{0,1\}^{m \times m}$ is a non-symmetric adjacency matrix (say, of a directed Erdos-Renyi graph with connection probability $q$). If $\mu = 1$, one can take $\Bt = 0$, hence $h(1) = 0$. As one decreases $\mu$ from $1$, the feasible set of the problem shrinks until the problem becomes infeasible for some $\mu_0 \in (0,1)$, if $\At \neq 0$. We have chosen $\Bt = \projonep \At \projonep$, essentially the largest choice, to make $\mu$ as small as possible. This might not in general be optimal. It would be interesting to study $h(\mu)$ more carefully. For example, another choice is $\Bt = P_V \At P_V$ where $V$ is a proper subspace of $\Span\{\onev_m\}^\perp$ of low dimension. This increases $\mu$, but decreases $h(\mu)$, helping us to better control the contributions of off-diagonal blocks in~\eqref{eq:u:Lambda:u:SDP2}.
  \end{rem}

\section{Probabilistic conditions for $\bsymBM$}\label{sec:prob:cond:ppbal}
We will show when the construction of $(\mu,\nu,\Gamma)$ in~\eqref{eq:nu:def} and~\eqref{eq:Gamma:def} works for the balanced planted partition model,  completing the proof of Theorem~\ref{thm:consist:SDP2}. We start we a consequence of Proposition~\ref{prop:key:adj:concent}.

\begin{cor}\label{cor:adj:concent:SDP2}
  Let $A = (A_{ij}) \in \{0,1\}^{n \times n}$ be drawn from $\bsymBM(p,q)$ with $p \ge (C' \log m)/m $ and $q \ge (C' \log n)/ n$. Then, w.p. at least $1- c (K m^{-r}+ n^{-r})$, 
  \begin{align*}
    \mnorm{A-\ex A} \le C (\sqrt{p\, m} + \sqrt{q n}).
  \end{align*}
\end{cor}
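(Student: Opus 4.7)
The plan is to split $\Delta := A - \ex A$ according to whether entries lie on a diagonal block or an off-diagonal block of the planted partition, apply Proposition~\ref{prop:key:adj:concent} to each piece separately with the appropriate variance proxy, and combine via the triangle inequality. Writing $\Delta = (\onem_{\Stru} \circ \Delta) + (\onem_{\Stru^c} \circ \Delta)$, I get
\begin{align*}
  \mnorm{\Delta} \;\le\; \mnorm{\onem_{\Stru} \circ \Delta} + \mnorm{\onem_{\Stru^c} \circ \Delta},
\end{align*}
so it suffices to bound each of the two terms by the corresponding summand in $C(\sqrt{pm}+\sqrt{qn})$.

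For the diagonal part, $\onem_{\Stru}\circ\Delta$ is block-diagonal with $K$ independent $m\times m$ symmetric blocks $\Delta_{\subb{k}} = A_{\subb{k}} - \ex A_{\subb{k}}$, each having independent lower triangle, zero diagonal, and entries of variance at most $p$. Since $\mnorm{\onem_{\Stru}\circ\Delta} = \max_k \mnorm{\Delta_{\subb{k}}}$, I would apply Proposition~\ref{prop:key:adj:concent} to each block with $\sigma^2 = p$ and ambient dimension $m$: the hypothesis $mp \ge C' \log m$ is exactly the assumption on $p$, yielding $\mnorm{\Delta_{\subb{k}}} \le C\sqrt{pm}$ with probability at least $1 - cm^{-r}$ per block, and by a union bound $\max_k \mnorm{\Delta_{\subb{k}}} \le C\sqrt{pm}$ with probability at least $1 - cKm^{-r}$.

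For the off-diagonal part, $\onem_{\Stru^c}\circ A$ is itself an $n\times n$ symmetric binary matrix with independent lower triangle and zero diagonal (entries inside diagonal blocks have been masked to $0$); its centering is $\onem_{\Stru^c}\circ\ex A$, so $\onem_{\Stru^c}\circ\Delta$ is just the difference of this binary matrix and its mean. The maximum entry-wise variance is bounded by $q$, and the hypothesis $nq \ge C'\log n$ lets me invoke Proposition~\ref{prop:key:adj:concent} with $\sigma^2=q$ and dimension $n$, giving $\mnorm{\onem_{\Stru^c}\circ\Delta} \le C\sqrt{qn}$ with probability at least $1 - c n^{-r}$. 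A final union bound over the two events and the triangle inequality deliver the corollary with the stated failure probability $c(Km^{-r}+n^{-r})$. No real obstacle arises here; the only thing to check is that the masked matrices satisfy the hypotheses of Proposition~\ref{prop:key:adj:concent} (binary, symmetric, zero diagonal, independent lower triangle), which they do by construction.
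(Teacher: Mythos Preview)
Your proposal is correct and mirrors the paper's own proof essentially line for line: the same splitting $\Delta = (\onem_{\Stru}\circ\Delta) + (\onem_{\Stru^c}\circ\Delta)$, the identification $\mnorm{\onem_{\Stru}\circ\Delta} = \max_k \mnorm{\Delta_{\subb{k}}}$, and the two applications of Proposition~\ref{prop:key:adj:concent} with $\sigma^2 = p$ at dimension $m$ and $\sigma^2 = q$ at dimension $n$, combined by a union bound. Your only addition is spelling out that the masked off-diagonal matrix still satisfies the hypotheses of the proposition, which the paper leaves implicit.
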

\begin{proof}
  Let $\Delta:= A - \ex A$ and decompose it into its diagonal and off-diagonal blocks. In particular, let $\Stru := \supp(\Xtru) = \bigcup_k \subb{k} \times \subb{k}$ and let $\Stru^c$ be its complement. Then,
  \begin{align*}
    \mnorm{\Delta} \le  \mnorm{\onem_{\Stru} \circ \Delta} +  \mnorm{\onem_{\Stru^c} \circ \Delta} = \max_k \mnorm{\Delta_{\subb{k}}} +  \mnorm{\onem_{\Stru^c} \circ \Delta}
  \end{align*}
  $\onem_{\Stru^c} \circ \Delta$ is an $n \times n$ matrix whose entries have variance $\le q$, hence $\mnorm{\onem_{\Stru^c} \circ \Delta} \le C \sqrt{q n}$ w.p. at least $1-cn^{-r}$. Each $\Delta_k$ is an $m \times m$ matrix whose entries have variance bounded by $p$, hence $\mnorm{\Delta_k} \le C \sqrt{pm}$ w.p. at least $1-c m^{-r}$, for each $k$. The result follows from union bound.
\end{proof}

The following consequence of Bernstein's inequality summarizes the concentration of $\dg{}{k}$ around their mean. For simplicity, we will assume that the diagonal of $A$ is also filled with $\bern(p)$ variates. This has no effect on the optimal primal solution due to the diagonal conditions $X_{ii} = 1$. Recall that
\begin{align*}
  m K = n, \quad \pb := p m, \quad \qb := q m.
\end{align*}

\begin{lem}\label{LEM:DEG:CONCENT:SDP2}
  Let $\gamma := \sqrt{ (4c_1 \log n)/ \pb}$ and $\zeta := \sqrt{ (4c_2 \log n)/ \qb}$ and assume $\gamma,\zeta \in [0,3]$. Then,
  \begin{alignat*}{4}
    \dg{i}{k} &\ge \pb(1-\gamma), 
      &&\; i \in \subb{k}, \forall k 
      &&\quad \text{w.p. at least $1 - n^{-(c_1-1)}$, and} \\
      \big| \dg{i}{\ell} - \qb \big| &\le \zeta \qb, 
      &&\; i \in \subb{k}, \forall (k \neq \ell),
      &&\quad \text{w.p. at least $1-2 m^{-1}n^{-(c_2-2)}$}.
  \end{alignat*}
\end{lem}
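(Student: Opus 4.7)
The plan is to apply Bernstein's inequality to each $\dg{i}{k}$ and $\dg{i}{\ell}$ separately and then take a union bound over the indices, which is exactly the standard Chernoff/Bernstein-style route for concentration of sums of independent Bernoullis. Under the convention stated just before the lemma that the diagonal of $A$ is filled with $\bern(p)$ draws, each $\dg{i}{k}$ is literally a sum of $m$ i.i.d. Bernoulli variates, so no truncation or decoupling is needed.

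For the first assertion, fix $i \in \subb{k}$. Then $\dg{i}{k} = \sum_{j \in \subb{k}} A_{ij}$ has mean $mp = \pb$, and each summand has variance at most $p$, so the total variance is at most $\pb$. I would apply the one-sided Bernstein bound
\[
\pr\big(\dg{i}{k} \le \pb - t\big) \le \exp\!\Big(-\frac{t^2}{2\pb + 2t/3}\Big)
\]
with $t = \gamma \pb$. Since $\gamma \le 3$, the denominator is at most $(2 + 2\gamma/3)\pb \le 4 \pb$, so the bound simplifies to $\exp(-\gamma^2 \pb/4) = n^{-c_1}$ by the definition $\gamma^2 = 4 c_1 (\log n)/\pb$. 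A union bound over the $n$ choices of $i$ gives total failure probability at most $n^{-(c_1-1)}$, as claimed.

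For the second assertion, fix $i \in \subb{k}$ and $\ell \neq k$. Then $\dg{i}{\ell} = \sum_{j \in \subb{\ell}} A_{ij}$ has mean $mq = \qb$ and total variance at most $\qb$. The two-sided Bernstein bound with $t = \zeta \qb$ and $\zeta \le 3$ gives
\[
\pr\big(|\dg{i}{\ell} - \qb| > \zeta \qb\big) \le 2 \exp(-\zeta^2 \qb/4) = 2 n^{-c_2},
\]
by the definition of $\zeta$. A union bound over the $n$ choices of $i$ and the at most $K = n/m$ choices of $\ell$ produces failure probability at most $2 n \cdot (n/m) \cdot n^{-c_2} = 2 m^{-1} n^{-(c_2-2)}$, matching the claim.

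There is really no obstacle here: the only delicate point is that the hypotheses $\gamma, \zeta \in [0,3]$ are exactly what makes the Bernstein denominator $\le 4$, cleanly producing the exponents $c_1 \log n$ and $c_2 \log n$ that align with the stated probabilities; outside this range one would have to carry a more careful bookkeeping of constants, but this is not needed for the regime in which the lemma is invoked.
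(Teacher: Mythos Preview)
Your proposal is correct and follows essentially the same route as the paper: Bernstein's inequality applied to each $\dg{i}{k}$ and $\dg{i}{\ell}$, the observation that $\gamma,\zeta\le 3$ collapses the Bernstein denominator to give the clean $\exp(-\gamma^2\pb/4)$ and $\exp(-\zeta^2\qb/4)$ bounds, followed by union bounds yielding the stated probabilities. The only cosmetic difference is that the paper writes Bernstein in the $v\phi(t)$ parameterization with $\phi(t)=t^2/(2(1+t/3))$ and uses $\phi(\gamma)\ge\gamma^2/4$ on $[0,3]$, whereas you absorb this directly into the denominator; the arithmetic is identical.
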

The proof is deferred to the Appendix~\ref{sec:proof:deg:concent:SDP2}.
Assume now that the conditions of Lemma~\ref{LEM:DEG:CONCENT:SDP2} (on $\gamma$ and $\zeta$) are met.
Then w.h.p., $\dav{k}{\ell}$ is also in $[\qb (1-\zeta),\qb (1+\zeta)]$, for $k \neq \ell$, so that 
\begin{align*}
  \dg{i}{\ell}  + \dg{j}{k} - \dav{k}\ell \le 2\qb(1+\zeta)-\qb(1-\zeta) = \qb(1+3\zeta).
\end{align*}
Thus, to satisfy~\eqref{eq:mu:low:bound}, it is enough to have $\mu m \ge \qb (1+3\zeta)$. On the other hand, Lemma~\ref{LEM:DEG:CONCENT:SDP2} implies that $\displaystyle m \rho := \min_k \min_{i \in \subb{k}} \dg{i}{k} \ge \pb - \pb \gamma$. Then,
\begin{align*}
  (\rho-\mu) m &\ge \pb - \pb\gamma - \qb - 3\qb \gamma \\
         &\ge \pb - \qb - \sqrt{4c_1\pb \log n} - 3 \sqrt{4c_2 \qb \log n}  
\end{align*}
By Corollary~\ref{cor:adj:concent:SDP2}, w.h.p. $\mnorm{\Delta} \le C(\sqrt{\pb} + \sqrt{\qb K})$, where we have used $q n = \qb K$. Then, to satisfy~\eqref{eq:Delta:up:bound:SDP2}, it is enough to have
\begin{align*}
   \pb - \qb - \sqrt{4c_1\pb \log n} - 3 \sqrt{4c_2 \qb \log n} > C(\sqrt{\pb} + \sqrt{\qb K})
\end{align*}
which is implied by
\begin{align*}
  \pb - \qb> (C+\sqrt{4c_1})\sqrt{\pb \log n} + (C+3\sqrt{4c_2})\sqrt{\qb K \log n}
\end{align*}
in turn implied by
\begin{align}\label{eq:SDP2:log:cond}
  \pb - \qb > C_2 ( \sqrt{\pb \log n} + \sqrt{ \qb K \log n}).
\end{align}

Auxiliary conditions we needed on $\pb$ and $\qb$ were $\pb \ge (4c_1/9) \log n$ and $\qb \ge (4c_2/9) \log n$ from Lemma~\ref{LEM:DEG:CONCENT:SDP2} and $\pb \ge C' \log m$ and $n q > C' \log n$ from Corollary~\ref{cor:adj:concent:SDP2}. We can drop the lower bounds on $q$ due to Corollary~\ref{cor:BM:ordering}. The lower bounds on $\pb$ are implied by $\pb \ge (C' \vee (4c_1/9)) \log n$. This completes the proof. To get to the form in which the theorem is stated, replace $c_1$ with $c_1 + 1$ and $c_2$ with $c_2 + 2$, and divide~\eqref{eq:SDP2:log:cond} by $\log n$.

\section{Proof of Lemma~\ref{LEM:DEG:CONCENT:SDP2}}\label{sec:proof:deg:concent:SDP2}
We recall the following version of Bernstein inequality.
\begin{prop}[Bernstein]
  Let $\{X_i\}$ be independent zero-mean RVs, with $|X_i| \le 1$ almost surely, and let $v := \sum_i \ex[X_i^2]$, then
  \begin{align*}
    \pr \Big( \sum_{i=1}^n X_i > v t\Big) \le \exp [-v \phi(t)], \; t >0, 
      \quad \text{where} \; \phi(t):= \frac{t^2}{2(1 + t/3)}.
  \end{align*}
\end{prop}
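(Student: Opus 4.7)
My plan is to follow the classical Chernoff/Cramér route, but avoid the explicit optimization over the dual parameter by picking the near-optimal value in closed form so that it matches the target rate function $\phi(t)$ exactly.

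First I would apply the Markov/Chernoff step: for any $\lambda>0$,
\[
\pr\!\Big(\sum_i X_i>vt\Big)
\le e^{-\lambda v t}\,\ex\!\Big[\exp\!\big(\lambda\sum_i X_i\big)\Big]
=e^{-\lambda v t}\prod_i \ex[e^{\lambda X_i}],
\]
by independence. The crux is then a single-variable moment generating function bound for a zero-mean $X$ with $|X|\le 1$. Expanding the exponential and using $|X|^k\le X^2$ for $k\ge 2$ (here is where $|X|\le 1$ enters) together with $\ex X=0$ gives
\[
\ex[e^{\lambda X}]\le 1+\ex[X^2]\sum_{k\ge 2}\frac{\lambda^k}{k!}
=1+\ex[X^2]\,(e^\lambda-1-\lambda)\le \exp\!\big(\ex[X^2](e^\lambda-1-\lambda)\big).
\]
The next step I would take is to replace $e^\lambda-1-\lambda$ with a rational majorant that linearizes nicely after optimization. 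Using $k!\ge 2\cdot 3^{k-2}$ for $k\ge 2$, the geometric series bound gives, for $0\le \lambda<3$,
\[
e^\lambda-1-\lambda=\sum_{k\ge 2}\frac{\lambda^k}{k!}\le \frac{\lambda^2/2}{1-\lambda/3}.
\]
Multiplying over the independent coordinates and summing the variances into $v$, this yields $\prod_i\ex[e^{\lambda X_i}]\le \exp\!\big(\tfrac{v\lambda^2/2}{1-\lambda/3}\big)$, so
\[
\pr\!\Big(\sum_i X_i>vt\Big)\le \exp\!\Big(-\lambda vt+\frac{v\lambda^2/2}{1-\lambda/3}\Big),\qquad 0\le \lambda<3.
\]

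Finally, rather than differentiating, I would plug in $\lambda=t/(1+t/3)$, which always lies in $[0,3)$ for $t>0$. A direct computation gives $1-\lambda/3=1/(1+t/3)$, hence $\lambda v t=vt^2/(1+t/3)$ and $\tfrac{v\lambda^2/2}{1-\lambda/3}=\tfrac{vt^2/2}{1+t/3}$, and the two terms combine to produce exactly $-v\phi(t)$ with $\phi(t)=\tfrac{t^2}{2(1+t/3)}$, as required. The main obstacle, mild as it is, is choosing the right rational upper bound on $e^\lambda-1-\lambda$: a slightly weaker bound would still give sub-Gaussian–sub-exponential tails but with the wrong constants in the denominator of $\phi$; the factor $1/3$ comes precisely from the $k!\ge 2\cdot 3^{k-2}$ estimate that makes the geometric series converge to $\lambda^2/(2(1-\lambda/3))$.
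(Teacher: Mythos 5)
Your proof is correct: the Chernoff step, the moment bound $\ex[X^k]\le\ex[X^2]$ for $k\ge 2$ via $|X|\le 1$, the geometric majorization $e^\lambda-1-\lambda\le\frac{\lambda^2/2}{1-\lambda/3}$ from $k!\ge 2\cdot 3^{k-2}$, and the closed-form choice $\lambda=t/(1+t/3)$ all check out and combine to give exactly $\exp[-v\phi(t)]$. The paper simply recalls this classical form of Bernstein's inequality without proof, and what you have written is the standard argument establishing it, so there is nothing to reconcile.
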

   For the first assertion, note that for $i \in \subb{k}$, $\dg{i}{k} = \sum_{j \in \subb{k}} A_{ij}$ is a binomial random variable with mean $mp$ and variance $mp(1-p) \le mp$. then applying Bernstein's with $v = mp$ and $t = \gamma$, we have
  $
    \pr [\dg{i}{k} - mp < - mp\gamma ] \le \exp( -mp \, \phi(\gamma)).
  $
  It follows from union bound that 
  \begin{align*}
    \pr \Big(\min_{k} \min_{i \in \subb{k}}\, \dg{i}{k} \ge \pb (1-\gamma) \Big) \ge 1- m K \exp(-\pb \phi(\gamma))
  \end{align*}
  For $\gamma \in [0,3]$, we have $\phi(\gamma) \ge \gamma^2/4$. It follows that
  $mK \exp(-\pb \phi(\gamma)) \le n \exp(-\pb \gamma^2/4) \le n n^{-c_1}$, proving the first assersion. The second assersion follows similarly, by noting that $\dg{i}{\ell}$ is binomial with mean $\qb = qm$ for $i \in \subb{k}$, $k \neq \ell$. It follows from two-sided Bernstein and union bound that
  \begin{align*}
    \pr \Big(\max_{k\neq\ell} \max_{i \in \subb{k}}\, \big| \dg{i}{\ell} - \qb \big| \le \zeta \qb \Big) 
    &\ge 1-  \big[2\tbinom{K}{2} m\big] 2 \exp(-\qb \phi(\zeta)) \\
    &\ge 1 - (m K)^2 m^{-1} 2 \exp(-\qb \zeta^2/4).
  \end{align*}
  The rest of the argument follows as before.

\section{Proof of Proposition~\ref{prop:failure:SDP-2}}\label{sec:proof:prop:failure:SDP-2}
  The implication $(a)\implies(b)$ follows since in the balanced case $\xi_k = 1, \forall k$.  Let us explain how part~(b) implies part~(c): Let $M := \ex[A]$ for a weakly but not strongly assortative block model. Let $\Mt$ be the matrix obtained from $M$ by setting all the diagonal blocks identically equal to one, except for one of the blocks that violates strong assortativity. Part~(b) applies to $\Mt$ with $|I^c| = 1$ and hence $\text{SDP}_{sol}(\Mt) \neq \{X_0\}$. It then follows from Lemma~\ref{lem:deterministic:nesting} that $\text{SDP}_{sol}(M) \neq \{X_0\}$ which is the desired result.

  The remainder of this section is devoted to proving part~(a).
    We take dual variable $\Lambda$ be of the following form
  \begin{align}\label{eq:Lam:gen:form:2}
    \Lambda &= \sum_{k \in I} \lambda_k (- \onem_{S_k} + \bs_k I_{S_k}),\quad \text{with}\; \lambda_k \ge 0.
  \end{align}
  In order to satisfy $\Lambda X = 0$, it is enough to have $\Lambda \onev_{S_r} = 0, \; r \in I$. This holds for the form given in~\eqref{eq:Lam:gen:form:2}, namely, $\Lambda \onev_{S_r} = \lambda_r (- \onem_{S_r} + \bs_r I_r) \onev_{S_r} = \lambda_r(-\bs_r I_{S_r} + \bs_r I_{S_r}) = 0$. We also assume the following form for $\Gamma$,
  \begin{align}\label{eq:Gam:546}
    \Gamma &= \sum_{k \neq \ell } \rho_{k \ell} \onem_{S_k S_\ell}  + 
    \sum_{k \notin I} \big[ \gamma_k \onem_{S_k} - \gamma_k I_{S_k}\big].
  \end{align}
  For $k \in I$, we have $\alpha_{kk} \neq 0$, and (CSa) implies $\Gamma_{S_k} = 0$. We also have $X_{ii} = 1, \,\forall i$, hence $\Gamma_{ii} = 0, \,\forall i$. The form given in~\eqref{eq:Gam:546} respects these conditions.

  \medskip
  Let $M := \ex[A]$ and recall that $\Lambda = \mu \onem_n + \diag^*(\nu)  - (M+\Gamma)$, hence~\eqref{eq:Lam:gen:form:2} is equivalent to 
  \begin{align}
    \mu \onem_{\bs_k \bs_\ell} - (M+\Gamma)_{S_k S_\ell} &= 0, \quad k \neq \ell \label{eq:cond:167}\\
    \mu \onem_{\bs_k} + \diag^*(\nu_{S_k}) - M_{S_k} &= -\lambda_k \onem_{\bs_k} + \lambda_k \bs_k I_{\bs_k}, \quad k \in I \label{eq:cond:267}\\
    \mu \onem_{\bs_k} + \diag^*(\nu_{S_k}) - (M+\Gamma)_{S_k} &= 0, \quad k \notin I \label{eq:cond:367}.
  \end{align}

  Recall that $M_{S_k S_\ell} = q_{k\ell} \onem_{\bs_k \bs_\ell}$ for $k \neq \ell$, and $M_{S_k} = p_k \onem_{\bs_k}$. Hence,~\eqref{eq:cond:167} is equivalent to $\forall (k\neq \ell)\; \rho_{k\ell} = \mu - q_{k\ell}$. Let us now simplify condition~\eqref{eq:cond:267}. 
  Looking at the diagonal, we have $\mu + \nu_i - p_k = (\bs_k-1)\lambda_k, \; i \in S_k, \, k \in I$. Looking at the off-diagonal, we have $\lambda_k = p_k - \mu$. Hence, $\nu_i = \bs_k \lambda_k$ for $i \in S_k, k \in I$. Now consider~\eqref{eq:cond:367}. For $k \notin I$, the diagonal gives $\mu + \nu_i - p_k  = 0$ since $A_{ii} = p_k$ and $\Gamma_{ii} = 0$ (because of $X_{ii} = 1$.) Hence, $\nu_i = p_k-\mu$ for $i \in S_k, k \notin I$. The off-diagonal gives $\gamma_k = \mu - p_k$. The following table summarizes these relationships:
  \begin{align*}
    \begin{array}{l|l|l}
      k \in I & \; k \notin I & \forall (k \neq \ell)  \\
      \hline
      \lambda_k = p_k - \mu & & \rho_{k\ell} = \mu - q_{k \ell}\\
      \nu_i = n_k \lambda_k, \, i \in S_k & \nu_i = p_k - \mu,\, i \in S_k\\
      & \gamma_k = \mu - p_k& 
    \end{array}
  \end{align*}
  (CSa) implies $(\forall k \notin I)\, \beta_k \gamma_k = 0$, and  $(\forall k,\ell \in I, \, k\neq\ell),\; \rho_{k \ell} \,\alpha_{k \ell} = 0$. Together with dual feasibility, namely $\lambda_k \ge 0$, $\gamma_k \ge 0$,  and $\rho_{k\ell} \ge 0$, we obtain the following restrictions on $\mu$,
  \begin{align}\label{eq:table:325}
    \begin{array}{c|c|c}
      k \in I & \; k \notin I & \forall (k \neq \ell)  \\
      \hline
      \mu < p_k &  \mu \ge p_k &  \mu \ge q_{k \ell}\\
      & \beta_k(\mu - p_k) = 0& \alpha_{k \ell} (\mu - q_{k\ell}) = 0,\; k, \ell \in I
    \end{array}
  \end{align}

  It is interesting to note that when $\{q_{k \ell}, k \neq \ell\}$ are distinct (which is not assumed here), at most one of $\{\alpha_{k \ell}, k < \ell,\; k, \ell \in I\}$ is nonzero. This is enforced by the condition in the last row and column of~\eqref{eq:table:325}, since $\mu$ can be equal to at most one of $q_{k\ell}$.

  Recall that $(k_0,\ell_0) := \argmax_{k \neq \ell} q_{k\ell}$ which is assumed to be unique, and $I := \{k :\; p_k \ge \max_{k \neq \ell} q_{k \ell}\}$. By the assumption of weak assortativity, $\{k_0,\ell_0\} \subset I$.
  Let $\mu = q_{k_0 \ell_0}$. Then, the condition in the last row and column of~\eqref{eq:table:325} implies that $\alpha_{k \ell} = 0, \forall k,\ell \in I\setminus\{ k_0,\ell_0\},\; k \neq \ell$, that is, only $\alpha_{k_0 \ell_0} = \alpha_{\ell_0 k_0}$ could be nonzero. Also, note that with this choice of $\mu$, the first row of~\eqref{eq:table:325} is satisfied.

  Since $\mu > p_k$ for $k \in I^c := [K] \setminus I = \{ k: p_k < q_{k_0\ell_0}\}$, the condition in the second row and column of~\eqref{eq:table:325}, implies that $\beta_k = 0$ for $k \notin I$. It remains to verify primal feasibility, namely, $\ip{X,\onem_n} = m n$. Recall that $\bs_k = \xi_k m$ with $\xi_k \ge 1$ (since $m = \min_k \bs_k$). We need to have
  \begin{align*}
    \ip{X,\onem_n} = \sum_{k \in I} \bs_k^2 + 2 \alpha_{k_0 \ell_0} \,\bs_{k_0} \bs_{\ell_0} + \sum_{k \notin I} \bs_k = n m
  \end{align*}
  Writing $n = \sum_{k} \bs_k$ and dividing by $m$, this is equivalent to
  \begin{align*}
    \sum_{k \in I} \xi_k^2 + 2 \alpha_{k_0 \ell_0} \xi_{k_0} \xi_{\ell_0} + \frac1 m \sum_{k \notin I} \xi_k = \sum_k \xi_k
  \end{align*}
  Some algebra gives the expression $\alpha_{k_0 \ell_0} = \alpha^*_{k_0 \ell_0}$ where the latter is given in~\eqref{eq:alpha:k0:ell0}.
  Under the stated assumption, $\alpha_{k_0 \ell_0} \in [0,1]$, the constructed solution is primal feasible and the proof is complete.

\section{Proof of Proposition~\ref{prop:sdp13}}

  We start by proving part~(b). Part~(a) then follows by simple modifications to the argument. Throughout, we mainly have the case $m = \min_k n_k$ in mind, which adds to the complexity in the construction of the primal-dual witness. When $m < \min_k n_k$, the set $I^c$ that appears below will be empty and the argument simplifies.

  Let $\Lc_2(X) = X \onev_n$ and $b_2 = m\onev_n$. The dual to problem~\eqref{eq:sdp:13} is
  \begin{align*}
    \renewcommand{\arraystretch}{1.3}
    \begin{array}{ll}
      \max\limits_{\Gamma,\,\rho,\,\nu,\,\mu} & - \ip{\rho,b_2} + \ip{\nu,\onev_n} \\
      \text{s.t.} & \Lambda :=  \diag^*(\nu) - \Lc_2^*(\rho) - (A-\mu \onem_n +\Gamma) \succeq 0 \\
      & \Gamma \ge 0, \; \rho \ge 0
    \end{array}
  \end{align*}
  Besides primal and dual feasibility we have the following complementary slackness conditions
  \begin{align*}
    \renewcommand{\arraystretch}{1.3}
    \begin{array}{c|c|c}
      \text{(CSa)} & \text{(CSb)} & \text{(CSc)}\\
      \hline
      \Gamma_{ij} X_{ij} = 0, \;\forall i,j & \; \Lambda X = 0 \; &\rho_i [\Lc_2(X) - b_2 ]_i = 0, \;\forall i
    \end{array}
  \end{align*}
  Consider the potential primal solution given in~\eqref{eq:blk:diag:solution:2} and note that it always satisfies $X_{ii} = 1$, $X \succeq 0$ and $X \ge 0$. We can use (CSc) to make a reasonable choice of $\{\alpha_k\}$.
  For $i \in S_k$, $[\Lc_2(X)]_i = (X \onev_n)_i =  \alpha_k \bs_k + (1-\alpha_k)$, hence $\rho_i [\Lc_2(X) - b_2 ]_i = 0$, together with the corresponding dual feasibility, translate to 
  \begin{align}\label{SDP13:CS:phi:alpha}
    \phi_k [ (\alpha_k \bs_k  + 1-\alpha_k) - m] = 0, \\
      \alpha_k \bs_k  + 1-\alpha_k \ge m
  \end{align}
  Note that if $\bs_k > m$, then setting $\alpha_k = 1$ forces $\phi_k = 0$, hence we lose the flexibility associated with $\phi_k$. This suggests that we should avoid setting $\alpha_k = 1$, as much as possible, unless $\bs_k = m$. For simplicity, let $I_1 := I_1(k_0)$, and recall that $I := \{ k: \bs_k > m\}$ and $I_1 \subset I$. Let $I_2 := I \setminus I_1$ and note that $\alpha_k$ given in part~(b) of the proposition can be written as 
  \begin{align*}
    \alpha_k := 
    \begin{cases}
      \frac{m - 1}{\bs_k-1} < 1, & k \in I_2 \\
      1, & k \in I^c \cup I_1
    \end{cases}
  \end{align*}
where $I^c := [n] \setminus I = \{k: \bs_k = m\}$. Note that this choice frees $\phi_k, \forall k$ to be any nonnegative number, except for $k \in I_1$ where we need $\phi_k = 0$.

\medskip
We now turn to the dual variables. Let us take $\Lambda$ to be of the form
\begin{align*}
  \Lambda = \sum_{k=1}^K \lambda_k (-\onem_{S_k} + \bs_k I_{S_k}), \quad \lambda_k \ge 0.
\end{align*}
$\Lambda$ is block diagonal, and the $k$th block has eigenvalues $\lambda_k (0,\bs_k,\bs_k,\dots,\bs_k)$. We will choose $\rho_{S_k} = \frac12 \phi_k 1_{S_k}$. 
Note that $\Lc^*_2(\rho) = \rho \onev_n^T + \onev_n \rho^T$, hence $[\Lc^*_2(\rho)]_{S_k S_\ell} = \frac12 (\phi_k + \phi_\ell) \onem_{\bs_l,\bs_\ell}$ for all $k,\ell$. With $M:= \ex[A]$, the following has to hold
\begin{align}
  \mu \onem_{\bs_k,\bs_\ell} -  \frac12 (\phi_k + \phi_\ell) \onem_{\bs_k,\bs_\ell} - (M+\Gamma)_{S_k S_\ell} &
    = 0, \quad k \neq \ell \notag \\
  \mu \onem_{\bs_k} + \diag^*(\nu_{S_k}) - \phi_k \onem_{\bs_k} - M_{S_k} &= \lambda_k (-\onem_{\bs_k} + \bs_k I_{\bs_k}).
  \label{eq:temp:156}
\end{align}
In deriving~\eqref{eq:temp:156}, we have used $\Gamma_{S_k} = 0, \, \forall k$ which follows from the particular choice of $X$ in~\eqref{eq:blk:diag:solution:2} and (CSa). 
Let $\psi_k := \mu - \phi_k$. Using $M_{S_k S_\ell} = q_{k\ell} \onem_{\bs_k \bs_\ell}, k \neq \ell$ and $M_{S_k} = p_k \onem_{\bs_k} $, we arrive at
\begin{align*}
  \Gamma_{S_k S_\ell} &= \Big[\frac12(\psi_k + \psi_\ell) - q_{k\ell} \Big] \onem_{\bs_k,\bs_\ell}, \\
  \psi_k + \nu_i  - p_k &= (\bs_k - 1)\lambda_k, \quad i \in S_k \\
  \psi_k - p_k &= - \lambda_k
\end{align*}
where the last two equalities are obtained by considering the diagonal and off-diagonal elements in~\eqref{eq:temp:156}. It follows that $\lambda_k = p_k - \psi_k$ and $\nu_i = \bs_k \lambda_k, \, i \in S_k$.

Dual feasibility implies
\begin{align}
  \phi_k = \mu - \psi_k &\ge 0,\, \forall k \label{eq:temp:945}\\
  \frac12(\psi_k + \psi_\ell) - q_{k\ell} &\ge 0, \quad \forall k \neq \ell \label{eq:temp:475}\\
  \lambda_k = p_k - \psi_k &\ge 0,\quad  \forall k \label{eq:temp:348}
\end{align}
(CSb), namely, $\Lambda X = 0$, translates to $\lambda_k (1-\alpha_k) = 0$, since $\onem_{S_k} (-\onem_{S_k} + m_k I_{S_k}) = 0$ implies
$\Lambda X = \sum_k \lambda_k(1-\alpha_k) (-\onem_{S_k} + \bs_k I_{S_k})$.
In particular, for $k \in I_2$, we have $\alpha_k < 1$, hence $\lambda_k = 0$; otherwise $\lambda_k$ is free to be any nonnegative number. To summarize, (CSb) and (CSc) impose the following restrictions on the dual variables
\begin{align}
  (\forall k \in I_1) \, \phi_k = 0, \;\; (\forall k \in I_2)\, \lambda_k = 0.
  \label{eq:CS:summary}
\end{align}
Recall the inequality $q_{k\ell} \le \frac{1}{2} (\qs_k+\qs_\ell), \; \forall k\neq \ell$. It follows that by choosing $\psi_k \ge \qs_k,\, \forall k$, we can satisfy~\eqref{eq:temp:475}. To satisfy~\eqref{eq:temp:945},~\eqref{eq:temp:348} and~\eqref{eq:CS:summary}, we need
\begin{align}
  (\forall k \in I_1) \,\psi_k = \mu \le p_k, \quad (\forall k \in I_2) \psi_k = p_k \le \mu, \quad (\forall k \in I^c) \,\psi_k \le \min\{p_k, \mu\} 
\end{align}
where we note that $I_1,I_2,I^c$ form a partition of $[K]$. Thus, it is enough to have
\begin{align}\label{eq:temp:596}
  (\forall k \in I_1) \mu \in [\qs_k,p_k], \quad (\forall k \in I_2) \mu \ge p_k, \quad (\forall k \in I^c) \,\qs_k \le \min\{p_k, \mu\} 
\end{align}
Since $\mu \in J_{k_0} \subset \bigcap_{k \in I_1(k_0)} [\qs_k,p_k]$, we have $\mu \in [\qs_k,p_k]$ for all $k \in I_1 = I_1(k_0)$. Since we have taken $\mu \ge p_{k_0+1}$, we have $\mu \ge p_k$ for all $k \ge k_0+1$ due to th assumed ordering of $\{p_k\}$. In particular, $\mu \ge p_k$ for all $k \in I_2$. For $k \in I^c$, either $ k \le k_0$, in which case $\mu \in [\qs_k,p_k]$, i.e. $\qs_k \le \mu = \min\{p_k,\mu\}$, or we have $k \ge k_0+1$ in which case $\mu \ge p_k$, hence $\qs_k \le p_k = \min\{\mu,p_k\}$. Thus, all the conditions in~\eqref{eq:temp:596} are met and the proof is complete.

\bigskip
\emph{Proof of part~(a).} The argument here is similar to that of part~(b). In addition to setting $\mu = 0$, the main difference is that (CSc) and dual feasibility condition $\rho_i \ge 0, \, \forall i$ is replaced by the single primal feasibility condition $\Lc_2(X) - b_2 = 0$. Note that there is no nonnegativity assumption on $\rho$ anymore. The argument goes true if we take $I_1 = \emptyset$ and $I_2 = I$, which ensures that $X \onev_n - m \onev_n = 0$. We now have $\psi_k = - \phi_k$ and the dual feasibility conditions reduce to~\eqref{eq:temp:475} and~\eqref{eq:temp:348}. Furthurmore,~\eqref{eq:CS:summary} is simplified to $(\forall k \in I) \lambda_k = 0$, since only (CSb) is present.  Thus, it is enough to have $\psi_k \ge \qs_k$ for all $k$ and 
\begin{align}
  (\forall k \in I)\, \psi_k = p_k, \quad (\forall k \in I^c) \psi_k \le p_k.
\end{align}
Since $\qs_k \le p_k, \forall k$, by assumption, it is clearly possible to choose $\psi_k$ to satisfy these conditions.

\newcommand{\Lct}{\widetilde{\Lc}}
\newcommand{\bt}{\widetilde{b}}
\newcommand{\mut}{\widetilde{\mu}}
\section{Implementation of SDP-1}\label{sec:implement}
It is straightforward to adapt a first order method to solve the SDP-1
problem~\eqref{eq:our:sdp}. 
We briefly discuss the implementation of an ADMM solver~\cite{Boyd2010}. We start by rewriting the problem~as 
\begin{align*}
    \inf_X \big\{ -\ip{A,X} +\delta_{\{\Lct(X) \,= \,\bt\}}
      + \delta_{\{Z \,\ge\, 0\}} 
      + \delta_{\{Y \,\succeq\, 0\}} \big\} \; \quad \text{s.t.} \; X = Z, \;   X = Y,
\end{align*}
where $\delta_S$ is the indicator of set $S$ defined by $\delta_S(x) = 0$ if $x \in S$ and $= \infty$ otherwise, and $\Lct: \reals^{n \times n} \to \reals^{2 n}$ is a linear
operator such that $\Lct(X) = \bt$ collects the affine constraints
in~\eqref{eq:our:sdp}. More precisely, for $i=1,\dots,n$, we  take
$[\Lct(X)]_i = \ip{X,H_i}$ and $[\Lct(X)]_{i+n} = \ip{X,F_i}$. Here,
$H_i$ is a symmetric matrix with 1 in the off-diagonal elements of the
$i$-th column and row, and 0 everywhere else.  $F_i$ is a matrix with
element $(i,i)$ equal to 1 and 0 everywhere else.   Finally, $\bt_i = 2((n/K)-1)$ for $i=1,\dots,n$ and $b_i = 1$ otherwise. ($\Lct$ is a variation of $\Lc$ that appears in Section~\ref{sec:optim:cond}. It is chosen so that $H_i$ is orthogonal to $F_j$ for all $i,j$. However, $\{\Lct(X) =\bt\}$ and $\{\Lc(X) = b\}$ describe the same affine subspace.)

The only real work in deriving ADMM updates is to find the projection operator $\proj_{\Ac}$ for $ \Ac := \{X: \Lct(X) = \bt\}$. For any $Y$, this projection is given~by 
\begin{align}\label{eq:proj:L:Y}
  \proj_{\Ac}(Y) := Y - \Lct^* (\Lct \Lct^*)^{-1}[ \Lct(Y) - \bt] .
\end{align}
Note that $\ip{H_i,F_j} = 0$ for all $i,j = 1, \dots, n$. Hence, $\Lct \Lct^*$ is block diagonal with two blocks $(\ip{H_i,H_j}) = 2[(n-2) I_n + \onev_n \onev_n^T]$ and 
$(\ip{F_i,F_j}) = I_n$. It follows that
\begin{align*}
  (\Lct\Lct^*)^{-1} = \diag\Big(\frac{1}{2(n-2)} 
    \big[ I_n - \frac{\onev_n \onev_n^T}{2n-2} \big],\; I_n \Big) \ . 
\end{align*}
We also have $\Lct^*(\mut,\nu) = \sum_i \mut_i E_i + \sum_i \nu_i F_i =
(\mut_i + \mut_j)_{i \neq j} + \diag(\nu)$, which gives a complete
recipe to compute $\proj_{\Ac}(Y)$. Note that due to the simplicity of
$(\Lct \Lct^*)^{-1}$ and $\Lct^*$, implementing this projection has
essentially the same computational cost as projecting onto an affine set with two constraints $\{X: \tr(X) = n,\; \ip{\onem_n,X} = n^2/K\}$, which is needed for implementing SDP-2.

The ADMM updates are easily derived to be
\begin{align*}
  X^{k+1} &= \proj_{\Ac}\big(\tfrac12
  ( Z^k - U^k + Y^k - V^k +\tfrac1\rho A) \big) , \\
  Z^{k+1}
  &= 
  \max\{0,X^{k+1} + U^k\}, \quad 
  Y^{k+1} = \proj_{\psd{n}} \big(X^{k+1} + V^k\big), \\
  U^{k+1} &= U^{k} + X^{k+1}-Z^{k+1}, \quad 
  V^{k+1} = V^{k} + X^{k+1}-Y^{k+1}.
\end{align*}
where $\proj_{\psd{n}}$ is the projection onto the PSD cone $\Sc_+^n$,
which can be done by truncating to nonnegative eigenvalues. The ADMM
updates for SDP-2 and SDP-3 can be derived similarly, as in~\cite{Cai2014}.





\section{Details on Figure~\ref{fig:SDP13:and:everythin:else}:  comparing the theoretical predictions with empirical results}
\label{sec:SDP13:fig:details}
Figure~\ref{fig:SDP13:and:everythin:else} provides an illustration of the contents of Propositions~\ref{prop:failure:SDP-2} and~\ref{prop:sdp13}. The results are obtained by numerically solving the SDPs. Here, we explain how they match with our theoretical results. The leftmost panel corresponds to the mean matrix $M = \ex[A]$ of a weakly assortative block model, randomly picked among such models. The specific edge probability matrix is as follows: 
\begin{align*}
  \begin{array}{llllll}
    0.670 &   0.072 &  0.020 &   0.023 &   0.186 &  0.187 \\
    0.072 &   0.570 &  0.521 &   0.016 &   0.360 &  0.107 \\
    0.020 &   0.521 &  0.555 &   0.048 &   0.311 &  0.188 \\
    0.023 &   0.016 &  0.048 &   0.494 &   0.081 &  0.137 \\
    0.186 &   0.360 &  0.311 &   0.081 &   0.475 &  0.031 \\
    0.187 &   0.107 &  0.188 &   0.137 &   0.031 &  0.195
  \end{array}.
\end{align*}
There are six blocks of sizes $\nb = (10,10,5,20,10,10)$. The parameters $p_k$ and $ \qs_k = \max_{\ell \neq k} q_{\ell:\, k \ell}$, for each of the $K=6$ blocks are as follows
\begin{align}\label{eq:p:qs:table}
  \begin{array}{lllllll}
      \qs_k & 0.187 & 0.521 & 0.521 & 0.137 & 0.360 & 0.188  \\
      p_k & 0.670 & 0.570 & 0.555 & 0.494 & 0.475 & 0.195
  \end{array}.
\end{align}
where the overall maximum of the off-diagonal entries is $\max_k \qs_k = 0.521$. It is clear that the last three blocks violate strong associativity.

 We can use part~(a) of Proposition~\ref{prop:failure:SDP-2} to predict the behavior of \SDPp{2}. Note that for this example, $(k_0,\ell_0) := \argmax_{k < \ell} q_{k \ell} = (2,3)$. We have $m = \min_k \bs_k = 5$, hence $(\xi_k)=(2,2,1,4,2,2)$ where $\xi_k = \bs_k /m$, $I = \{k: p_k \ge q_{k_0\ell_0} = 0.521\} = \{1,2,3\}$, $\xi_{k_0} = 2$ and $\xi_{\ell_0} = 1$. It then follows that
\begin{align*}
    \alpha^*_{k_0 \ell_0} = \frac{1}{2 (2 \cdot 1)} \Big[ \Big(1-\frac15\Big)(4+2+2) - (2\cdot1 + 2\cdot1 + 1\cdot 0)\Big] = 0.6.
\end{align*}
Since $\alpha^*_{k_0 \ell_0} \in [0,1]$, the conditions of part~(a) of Proposition~\ref{prop:failure:SDP-2} are met and the solution is of the form~\eqref{eq:alpha:k0:ell0} with $\alpha_1=\alpha_2=\alpha_3 = 1$, $\alpha_{23} = \alpha_{32} = 0.6$ and $\beta_k = 0$ for $ k=4,5,6$.

Let us now apply Proposition~\ref{prop:sdp13} to predict the behavior of SDP-13. Recall that $J_k = \bigcap_{r=1}^k [\qs_r,p_r]$ which in this case gives 
\begin{align*}
  \begin{array}{c|cccccc}
      k & 1 & 2 & 3 & 4 & 5 & 6 \\
      \hline
      \multirow{2}{*}{$J_k$} & 0.187 & 0.521 & 0.521 & 
        \multirow{2}{*}{$\emptyset$} & \multirow{2}{*}{$\emptyset$} & \multirow{2}{*}{$\emptyset$} \\
          & 0.670 & 0.570 & 0.555 & 
  \end{array}.
\end{align*}
We have $k_0 = \max\{k: \; J_k \neq \emptyset\} = 3$ and we can apply SDP-13 with any $\mu \in J_3 \cap [p_4,1] = [0.521,0.555] \cap [0.494,1] = [0.521,0.555]$. The images in Figure~\ref{fig:SDP13:and:everythin:else} are generated with $\mu = 0.55$ and $m = \min_k \bs_k = 5$. We have $I = \{ k : n_k > m\} = \{k: \xi_k > 1\} = \{ 1,2,4,5,6\}$, hence $I_1(k_0) = I_1(3)=\{1,2\}$. Proposition~\ref{prop:sdp13}(b) now applies and we have a solution of the form~\eqref{eq:blk:diag:solution:2} with
\begin{align*}
  \begin{array}{c|cccccc}
      k & 1 & 2 & 3 & 4 & 5 & 6 \\
      \hline
      \alpha_k & 1 & 1 & 1 & 0.211 & 0.444 & 0.444.
  \end{array}
\end{align*}
Note that we have three perfectly recovered blocks in the sense discussed after Proposition~\ref{prop:sdp13}. 

Finally, part~(a) of Proposition~\ref{prop:sdp13} predicts that SDP-1, applied with $m = 5$, has a solution of the form~\eqref{eq:blk:diag:solution:2} with
\begin{align*}
  \begin{array}{c|cccccc}
      k & 1 & 2 & 3 & 4 & 5 & 6 \\
      \hline
      \alpha_k & 0.444 & 0.444 & 1 & 0.211 & 0.444 & 0.444.
  \end{array}
\end{align*}
All the above predictions match what is empirically reported in Figure~\ref{fig:SDP13:and:everythin:else}. We also note that although the behavior of SDP-3 is not mentioned in Propositions~\ref{prop:failure:SDP-2} and~\ref{prop:sdp13}, that solution can also be predicted by careful examination of the proofs.

\section{Details on Figure~\ref{fig:nmi}: the bias of normalized mutual information for large $K$}

\renewcommand{\nmiscale}{.4}
\begin{figure}[t]
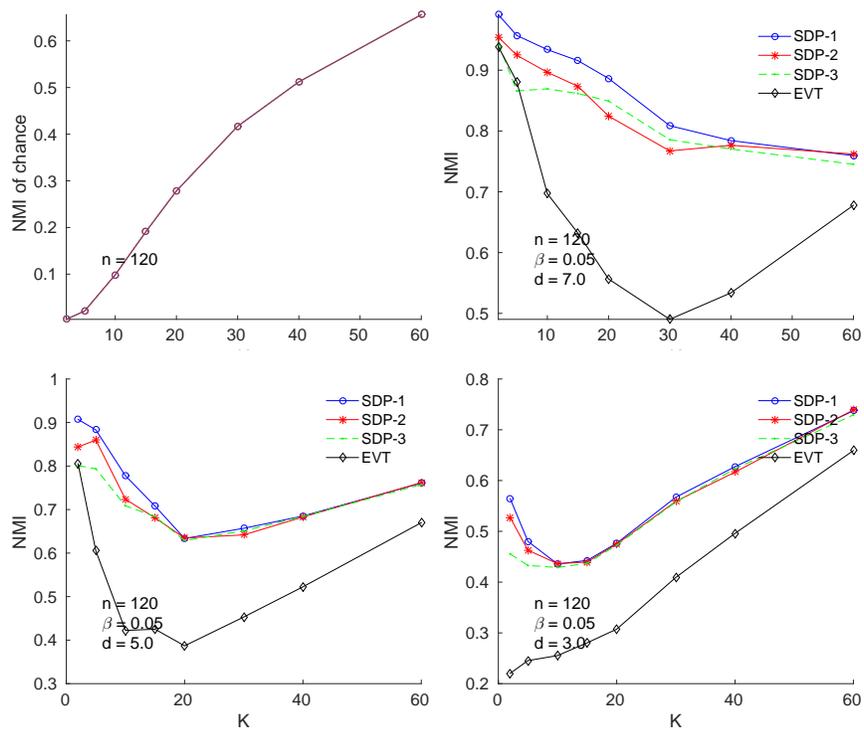

  \centering
  \includegraphics[scale=\nmiscale]{figs/Kvar/{nmi_chance}.eps}
  \includegraphics[scale=\nmiscale]{figs/Kvar/{nmi_n120_l7.0_Kvar_oir0.05_G1}.eps} \\
  \includegraphics[scale=\nmiscale]{figs/Kvar/{nmi_n120_l5.0_Kvar_oir0.05_E1}.eps}
  \includegraphics[scale=\nmiscale]{figs/Kvar/{nmi_n120_l3.0_Kvar_oir0.05_F1}.eps}
 
   \caption{Top left: Average NMI of random guessing (or chance). The other three plots correspond to those of Figure~\ref{fig:nmi} but with raw NMI (no adjustment for chance).}

  \label{fig:nmi:adj}
\end{figure}


Top left panel in Figure~\ref{fig:nmi:adj} shows the average (empirical) NMI of random guessing as a function of $K$.   Contrary to popular belief, the empirical NMI does not automatically adjust so that random guessing corresponds to zero NMI; unless $K$ is small.   It is designed to do so based on population quantities, but for the population quantities to be accurately approximated by empirical ones,  one needs concentration of the counts in the confusion matrix around their means, which does not happen unless $n/K$ is sufficiently large. Figure~\ref{fig:nmi:adj}  shows the plots of Figure~\ref{fig:nmi} after adjusting for random guessing by subtracting the corresponding average NMI. As one would expect, the dip in the curves goes away after the adjustment.

\section*{Acknowledgements}
We would like to thank Karl Rohe and David Choi for interesting discussions regarding the possibility of extending SDPs to mixed models. This research has been partially supported by NSF grants DMS-1106772, DMS-1159005, and DMS-1521551.


\bibliographystyle{abbrv}
\bibliography{bm_sdp7,allref}

\end{document}